\documentclass{article}

\PassOptionsToPackage{numbers, compress}{natbib}
\usepackage[preprint]{neurips_2026}

\usepackage[T1]{fontenc}
\usepackage[utf8]{inputenc}
\usepackage{microtype}    

\usepackage{graphicx}
\usepackage{subcaption}

\usepackage{tabularray}
\UseTblrLibrary{booktabs}

\usepackage{pgffor}
\usepackage{pgfplots}
\usepackage{schemabloc,tikz-cd}
\usepackage{tikz}
\pgfplotsset{compat=1.18}
\usepgfplotslibrary{fillbetween}
\usepgfplotslibrary{groupplots}

\usepackage{amsfonts}
\usepackage{amsmath}
\usepackage{amssymb}
\usepackage{amsthm}
\usepackage{bm}
\usepackage{mathtools}
\usepackage{nicefrac}
\DeclareMathOperator*{\argmin}{arg\,min}
\newcommand{\expnumber}[2]{{#1}\mathrm{e}{#2}}

\usepackage[algoruled,noline,linesnumbered]{algorithm2e}
\SetKw{Continue}{continue}
\SetKw{Or}{or}

\usepackage[acronym]{glossaries}
\newacronym{bfgs}{BFGS}{Broyden-Fletcher-Goldfarb-Shanno}
\newacronym{ekf}{EKF}{Extended Kalman Filter}
\newacronym{gbn}{GBN}{Generalized Binary Noise}
\newacronym{hippo}{HiPPO}{High-order Polynomial Projection Operators}
\newacronym{hpc}{HPC}{High-Performance Computer}
\newacronym{iqr}{IQR}{Inter-Quantile Range}
\newacronym{llm}{LLM}{Large Language Model}
\newacronym{lti}{LTI}{Linear Time-Invariant}
\newacronym[longplural=Linear Matrix Inequalities]{lmi}{LMI}{Linear Matrix Inequality}
\newacronym{mamba}{\textit{Mamba}}{Simplified Structured Selective State-Space Sequence}
\newacronym{ml}{ML}{Machine Learning}
\newacronym{mlp}{MLP}{Multi-Layer Perceptron}
\newacronym{mpc}{MPC}{Model Predictive Control}
\newacronym{mse}{MSE}{Mean Squared Error}
\newacronym{msvr}{MSVR}{Mean Squared Violation Radius}
\newacronym{nlti}{NLTI}{Non-Linear Time-Invariant}
\newacronym{nmse}{NMSE}{Normalized Mean Squared Error}
\newacronym{nn}{NN}{Neural Network}
\newacronym{nsfe}{NSFE}{Normalized Squared Frobenius Error}
\newacronym{nssr}{NSSR}{Normalized Squared Spectral Radius}
\newacronym{ode}{ODE}{Ordinary Differential Equation}
\newacronym{rtss}{RTSs}{Rauch-Tung-Striebel smoother}
\newacronym{s4}{S4}{Structured State-Space Sequence}
\newacronym{sim}{SIM}{Subspace Identification Method}
\newacronym{simba}{SIMBa}{System Identification Methods leveraging Backpropagation}
\newacronym{slp}{SLP}{Single-Layer Perceptron}
\newacronym{silu}{SiLU}{Sigmoid Linear Unit}
\newacronym{sota}{SotA}{State-of-the-Art}
\newacronym{ss}{SS}{State-Space}
\newacronym{svd}{SVD}{Singular-Value Decomposition}
\newacronym{sysid}{SysID}{System Identification}

\usepackage{url}
\usepackage{hyperref}
\usepackage{cleveref}

\theoremstyle{plain}
\newtheorem{theorem}{Theorem}[section]

\newtheorem{lemma}[theorem]{Lemma}

\theoremstyle{definition}
\newtheorem{definition}[theorem]{Definition}

\theoremstyle{remark}

\title{A Novel Schur-Decomposition-Based Weight Projection Method for Stable State-Space Neural-Network Architectures}

\author{%
  Sergio Vanegas \\
  Computational Engineering\\
  LUT University\\
  Lappeenranta, Finland \\
  \texttt{sergio.vanegas.arias@lut.fi} \\
  \And
  Lasse Lensu \\
  Computational Engineering\\
  LUT University\\
  Lappeenranta, Finland \\
  \texttt{lasse.lensu@lut.fi} \\
  \And
  Fredy Ruiz \\
  DEIB\\
  Politecnico di Milano\\
  Milan, Italy \\
  \texttt{fredy.ruiz@polimi.it} \\
}

\begin{document}

\maketitle

\begin{abstract}
    Building black-box models for dynamical systems from data is a challenging problem in machine learning, especially when asymptotic stability guarantees are required. In this paper, we introduce a novel stability-ensuring and backpropagation-compatible projection scheme based on the Schur decomposition for the state matrix of linear discrete-time state-space layers, as well as an alternative pre-factorized formulation of the methodology. The proposed methods dynamically project the quasi-triangular factor of the state matrix's real Schur decomposition onto its nearest stable peer, ensuring stable dynamics with minimal overparameterization. Experiments on synthetic linear systems demonstrate that the method achieves accuracy and convergence rates comparable to those of state-of-the-art stable-system identification techniques, despite a marginal increase in computational complexity. Furthermore, the lower weight count facilitates convergence during training without sacrificing accuracy in stacked neural-network architectures with static nonlinearities targeting real-world datasets. These results suggest that the Schur-based projection provides a numerically robust framework for identifying complex dynamics on par with the State of the Art while satisfying strict asymptotic-stability requirements. All code is available at \url{https://codeberg.org/sergiovaneg/SchurSS}.
\end{abstract}

\section{Introduction} \label{sec:introduction}

Recent developments in \gls{ss} system-identification theory have sparked renewed interest within the \gls{ml} community as an alternative sequence-modelling paradigm to Transformers. This framework models the sampled output $\bm{y}[k] \in \mathbb{R}^{n_y}, k \in \mathbb{N}_0$ of a real \gls{lti} system with exogenous input $\bm{u}[k] \in \mathbb{R}^{n_u}$ as a difference-equation system of the form
\begin{align} \label{eq:ss_discrete}
    \begin{cases}
        \bm{x}[k+1] & = \bm{A} \bm{x}[k] + \bm{B} \bm{u}[k] ,
        \\
        \bm{y}[k]   & = \bm{C} \bm{x}[k] + \bm{D} \bm{u}[k] ,
    \end{cases}
    \quad s.t. \quad
    \begin{split}
        \bm{A} \in \mathbb{R}^{n_x \times n_x} \ , \ \bm{B} \in \mathbb{R}^{n_x \times n_u},
        \\
        \bm{C} \in \mathbb{R}^{n_y \times n_x} \ , \ \bm{D} \in \mathbb{R}^{n_y \times n_u},
    \end{split}
\end{align}
which can be used to represent the sampled dynamics of a continuous-time system by means of a discretization scheme~\cite{al2002model}. Even if incapable of modelling nonlinear dynamics on their own, the versatility and expressiveness of this approach have made \gls{ss} subsystems the building block of choice for some of the most noteworthy sequence-modelling \glspl{nn} in recent years~\cite{gu2021efficiently,smith2022simplified,gu2023mamba}.

The revitalization of \gls{ss} theory for \glspl{nn} was catalyzed by the introduction of \gls{hippo}~\cite{gu2020hippo}, which optimizes the state vector $\bm{x}$ to maximize the representation of the exogenous-input history. This is achieved through a prescribed upper-triangular matrix $\bm{A}$ that maximizes the information compressed in the states with respect to a given measure function, allowing \gls{ss} models to capture long- and short-term dependencies with efficiency comparable to Transformer-based architectures. Beyond its memory capabilities, the \gls{hippo} matrix ensures inherent stability: it is Hurwitz-stable by construction in continuous time, and its discretized counterpart remains Schur-stable regardless of the step size. To overcome the limitations of fixed state dynamics, subsequent works such as \gls{s4} and \gls{mamba} have utilized this matrix (and its diagonalized variant) primarily as a robust initialization strategy, rather than a static parameter~\cite{gu2021efficiently,gu2023mamba}.

While high-capacity architectures are designed for the billion-parameter scale characteristic of \glspl{llm}, their computational overhead is often prohibitive for time-sensitive applications such as real-time control. To address this, backpropagation-ready matrix parameterizations~\cite{di2024simba} and weight regularization techniques~\cite{bemporad2025bfgs} targeting smaller-scale nonlinear systems have emerged. Despite these advances, methodologies for low-dimensional models remain less mature than their large-scale counterparts, requiring significant over-parameterization~\cite{di2024simba} or careful hyperparameter tuning~\cite{bemporad2025bfgs} to ensure stable convergence and performance.

The present work adds to this line of research by:
\begin{enumerate}
    \item Introducing an innovative Schur-stable state-matrix weight-projection scheme, exploiting recent results on $\Omega$-stable nearest-matrix identification~\cite{noferini2021nearest};
    \item Proposing an alternative stable state-matrix parameterization using the above weight-projection algorithm, which trades parameter count in favour of higher computational efficiency during model training;
    \item Carrying out a comprehensive benchmark of the proposed methodologies against the \gls{sota} in Schur-stable state-matrix identification, targeting both synthetic and real-world datasets.
\end{enumerate}

The paper is divided as follows. \Cref{sec:related} contains a brief historical overview of the literature on nonlinear \gls{ss} \gls{sysid}. \Cref{sec:schur_ss} introduces the proposed Schur-stable projection method, as well as the $\Omega$-stable projection theory behind it. \Cref{sec:exp} presents a series of experiments to determine the performance of the developed method compared to the \gls{sota}, comprising both \gls{lti} \gls{sysid} synthetic-data benchmarks and a performance comparison between alternative \gls{ss}-block formulations within a nonlinear model on real-world datasets. Finally, \Cref{sec:conclusions} lays out some final remarks based on the experimental results and potential extensions to the showcased methods.

All code associated to this manuscript is made available at \url{https://codeberg.org/sergiovaneg/SchurSS} and released under the BSD-3-Clause license.

\section{Related Work} \label{sec:related}

While \gls{s4}~\cite{gu2021efficiently} and \gls{mamba}~\cite{gu2023mamba} are built using the continuous-time \gls{ss} formulation, and even though methods for stable identification in continuous time can be found in the literature, the discrete formulation from \Cref{eq:ss_discrete} is more commonly studied as it directly maps to data-driven methodologies. Thus, unless otherwise stated, any mention of \textit{stability} will refer to Schur stability; i.e., $$|\lambda_i| \le 1 \quad \forall \lambda_i \in \Lambda_{\bm{A}},$$ where $\Lambda_{\bm{A}} \in \mathbb{C}^{n_x}$ denotes the spectrum of $\bm{A}$.

\subsection{Classical Identification Methods}

Although \gls{ss} \gls{sysid} theory was already a well-established field by the 1970s~\cite{aastrom1971system}, modern deterministic methods can, for the most part, be traced back to the unifying theory of \glspl{sim} proposed by \citet{van1995unifying}, under which identified models are guaranteed to be stable provided that the source system is stable. These techniques operate by organizing available input and output data into a specialized mathematical structure known as a Hankel matrix~\cite{mu2014hankel}, which is shaped by the estimated system order and the available number of samples. As this matrix is constructed directly from recorded measurements, specific \gls{sim} approaches differentiate from each other based on how they recover the individual matrices~\cite{qin2006overview}.

\begin{figure}[tb!]
    \centering
    \begin{subfigure}{0.49\textwidth}
        \centering
        \resizebox{\linewidth}{!}{
            \begin{tikzpicture}
                \sbEntree{U}

                \sbStyleBloc{red}
                \sbBloc[2.5]{G}{$G(z)$}{U}
                \sbRelier[$u_t$]{U}{G}

                \sbStyleBloc{blue}
                \sbBloc[2.5]{f}{$f(x)$}{G}
                \sbRelier[$x_t$]{G}{f}

                \sbStyleBloc{red}
                \sbBloc[2.5]{S}{$S(z)$}{f}
                \sbRelier[$r_t$]{f}{S}

                \sbSortie[2.5]{Y}{S}
                \sbRelier[$y_t$]{S}{Y}
            \end{tikzpicture}
        }
        \caption{Wiener-Hammerstein}
        \label{subfig:wh}
    \end{subfigure}
    \hfill
    \begin{subfigure}{0.49\textwidth}
        \centering
        \resizebox{\linewidth}{!}{
            \begin{tikzpicture}
                \sbEntree{U}

                \sbStyleBloc{blue}
                \sbBloc[2.5]{f}{$f(u)$}{U}
                \sbRelier[$u_t$]{U}{f}

                \sbStyleBloc{red}
                \sbBloc[2.5]{G}{$G(z)$}{f}
                \sbRelier[$x_t$]{f}{G}

                \sbStyleBloc{blue}
                \sbBloc[2.5]{g}{$g(r)$}{G}
                \sbRelier[$r_t$]{G}{g}

                \sbSortie[2.5]{Y}{g}
                \sbRelier[$y_t$]{g}{Y}
            \end{tikzpicture}
        }
        \caption{Hammerstein-Wiener}
        \label{subfig:hw}
    \end{subfigure}
    \caption{Single-branch structures considered in \citet{schoukens2017identification}, obtained by chaining \gls{lti} blocks $G(z)$ and $S(z)$, and static nonlinear blocks $f(\cdot)$ and $g(\cdot)$.}
    \label{fig:single_branch}
\end{figure}

\gls{ss} blocks of the form in \Cref{eq:ss_discrete} are incapable of modelling nonlinear dynamics on their own, so they usually operate as subunits of larger model structures. This concept appears in nonlinear stacks like those of \Cref{fig:single_branch}~\cite{schoukens2017identification}, and dimensional-elevation techniques like the Koopman operator~\cite{brunton2021modern}, which have been extensively studied in the control literature, relying on \glspl{sim} to calibrate their parameters. These frameworks are favoured in the literature not only for the simplicity of their formulation, but also because they decouple the nonlinear effects from the \gls{ss} feedback loop, preserving its asymptotic-stability guarantees. However, the invertibility constraints imposed on the nonlinear blocks by conventional model-fitting algorithms (see \citet{schoukens2017identification}) limit the structure and stack dimensionality of these models.

In practice, this limitation is overcome by applying multi-level optimization methods~\cite{wills2013identification}. Under this optic, it is clear how back-propagation fits the use-case: not only does it benefit from the same assumptions made by previous data-driven methods (large data availability, model fitness, etc.), but it is also indifferent to the model's invertibility. In exchange, the stability guarantees given by \glspl{sim} are lost~\cite{maciejowski1995guaranteed}, necessitating a new strategy to ensure asymptotic stability.

\subsubsection{Stability by Regularization}

Still within the realm of classical model identification theory, but lacking a closed-form solution, it is possible to find regularized-loss identification schemes in the literature~\cite{pillonetto2022regularized}. These are least-squares methods penalizing the violation of asymptotic stability through an additive regularization term as
\begin{equation} \label{eq:regularized_obj}
    \min_{\bm{\Theta} \triangleq
        \begin{bsmallmatrix}
            \bm{A} & \bm{B}
            \\
            \bm{C} & \bm{D}
        \end{bsmallmatrix}
    }
    \ell(\bm{\Theta} ; \bm{y}, \bm{x}[0]) + \rho_r r(\bm{\Theta})
    \quad s.t. \quad
    \ell (\bm{\Theta}; \bm{y}, \bm{x}[0]) = \frac{1}{N} \sum_{k=0}^{N-1} \mathcal{L} \left( \bm{y}[k], \bm{\hat{y}}[k; \bm{\Theta}, \bm{x}[0]] \right),
\end{equation}
where $\mathcal{L} : \mathbb{R}^{n_y} \times \mathbb{R}^{n_y} \rightarrow \mathbb{R}$ is a suitable loss function, $\bm{y} \in \mathbb{R}^{n_y}$ denotes the measured output of the target system totaling $N \in \mathbb{N}$ training samples, $\bm{\hat{y}} \in \mathbb{R}^{n_y}$ is the output generated by the dynamics in \Cref{eq:ss_discrete} given an initial condition $\bm{x}[0] \in \mathbb{R}^{n_x}$, $r : \mathbb{R}^{(n_x + n_y) \times (n_x + n_u)} \rightarrow \mathbb{R}$ is the aforementioned regularization term, and $\rho_r \ge 0$ denotes the \textit{strength} of the regularization term.

The regularization term can assume a wide range of shapes. \citet{sjoberg1993use} proposed a simple regularization term via vector 1-norm, which equally penalizes large coefficients across all matrices, whereas \citet{van2002identification} used a positive semi-definite matrix as a proxy to a weighted sum of the state-matrix spectrum, which penalizes large-magnitude eigenvalues. These functions, however, fail to discriminate between stable and unstable matrices, and could misidentify marginally-stable systems, which is why \citet{bemporad2025bfgs} very recently proposed a scheme capable of making such a distinction by means of the piece-wise regularization term
\begin{equation} \label{eq:regularization_bemporad}
    r(\bm{\Theta}) = \max \left\{ \left\| \bm{A} \right\|_2^2 - 1 + \epsilon , 0 \right\}^2,
\end{equation}
where $\|\cdot\|_2$ denotes the spectral norm and $0 \le \epsilon \ll 1 $ is added for numerical stability. Not only does this method impose a zero penalization for asymptotically stable systems, but it is also proven not to limit the coverage of the Schur-stable space. Nevertheless, like its peers, this stabilization method is highly dependent on the careful tuning of the parameter $\rho_r$, as values that are either too small or too large can result in unstable or null state matrices, respectively.

\subsection{Modern Structured State-Space Models}

The current \gls{sota} in \gls{ss} modelling is the \gls{mamba} model~\cite{gu2023mamba}. \Gls{mamba} models nonlinear irregularly-sampled signals, such as natural language or DNA sequencing, by combining \gls{hippo} initialization, channel-wise variable-step discretization, \gls{mlp} aggregation, and diagonal states. This last property not only facilitates parallelized model execution, but it also simplifies the stabilization of its dynamics.

These design choices work great for discrete-output systems where the concept of ``regular sampling'' does not apply. However, these properties also make the model unnecessarily difficult to train and run for regularly-sampled, continuous-valued systems~\cite{salam2025comprehensive}. Works like \citet{patro_simba_2024} have tried to lighten this burden by using a single \gls{mamba} layer followed by a series of frequency-domain Hadamard products (time convolutions) and cross-channel matrix multiplications, at the expense of losing time-dependence locality and requiring a Fourier transform. Furthermore, diagonal real-valued state coefficients are not capable of modelling autonomous oscillations, and complex-valued state matrices require (in general) additional projections to yield a real-valued output.

While constraining the number of complex-conjugate eigenvalues could theoretically enable diagonal models to capture oscillatory dynamics, this requires a-priori knowledge of the state matrix's sparsity pattern. Furthermore, the reliance on complex-number arithmetic introduces significant implementation overhead and limits flexibility. Thus, a more intelligent approach to stabilizing dense state matrices is required to reliably model a broader range of dynamical systems without requiring predefined structural tuning.

\subsection{SIMBa}

The closest competitor in the literature to this paper's approach is \gls{simba}~\cite{di2024simba}. The authors leverage \glspl{lmi} to design constraint-free parameterizations guaranteeing system stability with arbitrary eigenvalue-magnitude upper bounds. Moreover, they prove that their method is capable of capturing \textit{all} Schur-stable matrices, limited only by floating-point precision.

The only concern raised by the authors of \gls{simba} is regarding its training time compared to traditional identification methods. Since the projection-based methods proposed in this paper also rely on back-propagation, this metric is expected to be in the same order of magnitude. However, it is worth pointing out that \gls{simba} requires tuning $5n_x^2$ weights to parameterize an $n_x^2$-sized matrix, which marginally increases the model's memory requirements and, more importantly, induces a gradient fan-out effect that could potentially impact the model-training's convergence rate.

\section{State-Matrix Stabilization via Schur Factorization} \label{sec:schur_ss}

\subsection{Nearest \texorpdfstring{$\Omega$}{Omega}-Stable Matrix Identification} \label{subsec:schur_ss_full}

Not unique to the realm of \gls{sysid}, but with clear applicability, is the problem of the nearest $\Omega$-stable matrix identification, formally posed in \Cref{problem:nearest_stable}. In the case of real discrete-time systems (i.e., as in \Cref{eq:ss_discrete}), $\Omega = \Omega_S \triangleq \{\lambda \in \mathbb{C} : |\lambda| \le 1 \}$.

\begin{definition}[\citet{noferini2021nearest}] \label[definition]{problem:nearest_stable}
    Let $\Omega$ be a non-empty closed subset of $\mathbb{C}$, and let
    \begin{equation}
        S(\Omega, n, \mathbb{F}) \triangleq \left\{ \bm{X} \in \mathbb{F}^{n \times n} : \Lambda_{\bm{X}} \subseteq \Omega \right\} \subseteq \mathbb{F}^{n \times n}
    \end{equation}
    be the set of $n \times n$ matrices (with either $\mathbb{F} = \mathbb{C}$ or $\mathbb{F} = \mathbb{R}$) whose eigenvalues all belong to $\Omega$.

    Given $\bm{A} \in \mathbb{F}^{n \times n}$, the nearest $\Omega$-stable matrix $\bm{\hat{A}}$ is defined as
    \begin{equation} \label{eq:nearest_obj_function}
        \bm{\hat{A}} \triangleq \argmin_{\bm{X} \in S(\Omega, n, \mathbb{F})} \left\| \bm{A} - \bm{X} \right\|_F^2,
    \end{equation}
    where $\| \bm{X} \|_F^2 \triangleq\sum_{i,j=1}^n |X_{ij}|^2 = \mathrm{tr}\left( \bm{X}^* \bm{X} \right)$ denotes the \textit{Frobenius} norm.
\end{definition}

This problem shows up repeatedly in the systems-theory literature~\cite{datta2004numerical}, but the non-convexity of the Schur-stable matrix space makes it notoriously hard to guarantee a global minimum. It was not until recent years that \citet{noferini2021nearest} provided a solution with such a guarantee by proposing a bi-level optimization over the real Schur decomposition of $\bm{\hat{A}}$, defined as in \Cref{theorem:real_schur}. At every iteration, the $2 \times 2$ block-diagonal elements of $\bm{T}$ are projected to the nearest element\footnote{For $\bm{T_{i,i}} \in \mathbb{R}^{1 \times 1}$, $\bm{\hat{T}_{i,i}} = \nicefrac{\bm{T_{i,i}}}{\max\{1,|\bm{T_{i,i}}|\}}$.} in the set $\mathcal{\hat{S}}(\bm{T_{i,i}}) \subset \mathbb{R}^{2 \times 2}$ of 15 elements at most (see \Cref{app:nearest_schur_stable}), followed by the optimization of the orthogonal factor $\bm{Z}$ over the Riemannian manifold $SO(2) \triangleq \{ \bm{X} \in O(2) : \mathrm{det}(\bm{X}) = 1 \}$.

\begin{theorem}[\citet{horn2012matrix}] \label{theorem:real_schur}
    Given $\bm{X} \in \mathbb{R}^{n \times n}$, there exists a pair of matrices $\bm{Z} \in O(n) \triangleq \{\bm{W} \in \mathbb{R}^{n \times n} : \bm{W}^\intercal \bm{W} = \bm{W} \bm{W}^\intercal = \bm{I}_n \}$ real orthogonal and $\bm{T} \in \mathbb{R}^{n \times n}$ upper quasi-triangular defining the Schur decomposition; i.e., a pair such that
    \begin{equation} \label{eq:schur_decomposition}
        \bm{Z}^\intercal \bm{X} \bm{Z} = \bm{T} = \begin{bmatrix}
            \bm{T_{1,1}} & \bm{T_{1,2}} & \cdots & \bm{T_{1,k-1}}   & \bm{T_{1,k}}
            \\
            \bm{0}       & \bm{T_{2,2}} & \cdots & \bm{T_{2,k-1}}   & \bm{T_{2,k}}
            \\
            \vdots       & \vdots       & \ddots & \vdots           & \vdots
            \\
            \bm{0}       & \bm{0}       & \cdots & \bm{T_{k-1,k-1}} & \bm{T_{k-1,k}}
            \\
            \bm{0}       & \bm{0}       & \cdots & \bm{0}           & \bm{T_{k,k}}
        \end{bmatrix} ,
    \end{equation}
    where $\bm{T_{i,i}} \in \mathbb{R}^{1 \times 1} \cup \mathbb{R}^{2 \times 2}$ and $\left\lceil \frac{n}{2} \right\rceil \le k \le n$.
\end{theorem}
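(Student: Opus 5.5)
The plan is induction on $n$, peeling off one real eigenvalue or one complex-conjugate pair at a time. The base cases are $n=1$, where $\bm{Z}=1$ and $\bm{T}=\bm{X}$, and $n=2$ with non-real eigenvalues, where $\bm{Z}=\bm{I}_2$ and $\bm{T}=\bm{X}$ is a single $2\times 2$ block. The bound $\lceil n/2\rceil \le k \le n$ is then immediate: every diagonal block has size one or two and the sizes sum to $n$.

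For the inductive step, take $n\ge 2$ and pick an eigenvalue $\lambda\in\Lambda_{\bm{X}}$. We split into two cases.

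\emph{Real eigenvalue.} If $\lambda\in\mathbb{R}$, choose a real unit eigenvector $\bm{v}$. Complete it to an orthonormal basis, for instance via Gram--Schmidt or a Householder reflector, and obtain $\bm{Q}=[\bm{v}\ \bm{V}]\in O(n)$. Since $\bm{X}\bm{v}=\lambda\bm{v}$ and $\bm{V}^\intercal\bm{v}=\bm{0}$,
\[
\bm{Q}^\intercal\bm{X}\bm{Q}=\begin{bmatrix}\lambda & \bm{v}^\intercal\bm{X}\bm{V}\\ \bm{0} & \bm{V}^\intercal\bm{X}\bm{V}\end{bmatrix}.
\]

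\emph{Non-real eigenvalue.} If $\lambda=a+ib$ with $b\neq 0$, take a complex eigenvector $\bm{w}=\bm{p}+i\bm{q}$. Comparing real and imaginary parts of $\bm{X}\bm{w}=\lambda\bm{w}$ shows that $\mathcal{U}=\mathrm{span}\{\bm{p},\bm{q}\}$ is $\bm{X}$-invariant. Next we check that $\bm{p}$ and $\bm{q}$ are linearly independent. If instead $\bm{q}=c\bm{p}$ (or $\bm{p}=\bm{0}$), then $\bm{w}$ would be a complex multiple of a real vector $\bm{r}$. Then $\bm{X}\bm{r}=\lambda\bm{r}$ with $\bm{X}\bm{r}$ real, which forces $\lambda$ real, a contradiction. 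So $\dim\mathcal{U}=2$. Take an orthonormal basis $\bm{U}\in\mathbb{R}^{n\times 2}$ of $\mathcal{U}$ and complete it to $\bm{Q}=[\bm{U}\ \bm{V}]\in O(n)$. Invariance gives $\bm{V}^\intercal\bm{X}\bm{U}=\bm{0}$, so
\[
\bm{Q}^\intercal\bm{X}\bm{Q}=\begin{bmatrix}\bm{U}^\intercal\bm{X}\bm{U} & \ast\\ \bm{0} & \bm{V}^\intercal\bm{X}\bm{V}\end{bmatrix},
\]
and the leading $2\times 2$ block has eigenvalues $\lambda,\bar\lambda$.

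In either case, apply the induction hypothesis to the trailing block $\bm{V}^\intercal\bm{X}\bm{V}$, of size $n-1$ or $n-2$. This gives $\bm{Z}'^\intercal(\bm{V}^\intercal\bm{X}\bm{V})\bm{Z}'=\bm{T}'$ with $\bm{Z}'$ orthogonal and $\bm{T}'$ upper quasi-triangular. Set $\bm{Z}=\bm{Q}\,\mathrm{diag}(\bm{I}_m,\bm{Z}')$ with $m\in\{1,2\}$. This is a product of orthogonal matrices, hence orthogonal. A direct block multiplication shows $\bm{Z}^\intercal\bm{X}\bm{Z}$ has leading block $\lambda$ or $\bm{U}^\intercal\bm{X}\bm{U}$, a zero block below it, and $\bm{T}'$ in the trailing position. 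It therefore has the form in \Cref{eq:schur_decomposition}.

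The only step requiring care is the non-real case: showing that $\bm{p},\bm{q}$ span a genuinely two-dimensional real invariant subspace. That is the linear-independence check above; everything else is bookkeeping on block matrices.
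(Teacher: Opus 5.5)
Your proof is correct. It is the standard deflation-by-induction argument: you peel off either a real unit eigenvector, or the two-dimensional real invariant subspace spanned by the real and imaginary parts of a complex eigenvector, and recurse on the trailing block. The paper gives no proof of its own and simply cites Horn and Johnson, whose real-Schur-form proof proceeds in essentially this way.
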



This method could, in theory, be used as is to constrain the state matrix, as it provides rigorous asymptotic-stability guarantees (unlike regularization methods) without the over-parameterization overhead introduced by \gls{lmi} methods such as \gls{simba}. However, doing so would require solving a computationally-expensive, numerically-sensitive surrogate optimization problem for every update of $\bm{A}$ (i.e., once per training batch). Thus, its applicability to \gls{ml} is limited in its current form.

\subsection{The Truncated Nearest Schur-Stable Matrix Projection} \label{subsec:schur_ss_trunc}

\begin{algorithm}[tb!]
    \caption{Quasi-triangular Schur factor projection.}
    \label{alg:schur_proj}

    \KwIn{Schur factor $\bm{T} \in \mathbb{R}^{n \times n}$ (quasi-triangular)}
    \KwOut{Stable Schur factor $\bm{\hat{T}} \in \mathbb{R}^{n \times n}$}

    \Begin{
        Initialize $i \gets 1, \bm{\hat{T}} \gets \bm{T}$ \;
        \For(\tcp*[f]{$k$ from \Cref{theorem:real_schur}}){$i \gets 1$ to $k$}{
            \eIf{$\bm{T_{i,i}} \in \mathbb{R}^{1 \times 1}$}{
                $\bm{\hat{T}_{i,i}} \gets \nicefrac{\bm{T_{i,i}}}{\max\{\bm{1}, |\bm{T_{i,i}}|\}}$ \;
            }(\tcp*[f]{$2 \times 2$ block}){
                Initialize $\bm{X} \gets \bm{0_{2 \times 2}}, d_{\mathrm{min}}^2 \gets \infty$ \;
                \ForEach{$\bm{X'} \in \mathcal{\hat{S}}(\bm{T_{i,i}})$}{
                    $d^2 \gets \| \bm{T_{i,i}} - \bm{X'} \|_F^2$ \;
                    \lIf{$d^2 \ge d_{\mathrm{min}}^2$}{\Continue}
                    $\lambda_1, \lambda_2 \gets \Lambda_{\bm{X'}}$ \;
                    \lIf{$|\lambda_1| > 1$ \Or $|\lambda_2| > 1$}{\Continue}
                    $d_{\mathrm{min}}^2 \gets d^2$; $\bm{X} \gets \bm{X'}$
                }
                $\bm{\hat{T}_{i,i}} \gets \bm{X}$
            }
        }
    }
\end{algorithm}

To make the method proposed in \citet{noferini2021nearest} feasible for \gls{ss} \gls{sysid}, we restrict the search space to stable matrices sharing the same orthogonal factor. Under this constraint, $\bm{\hat{A}}$ is found by first calculating the Schur decomposition $\bm{A} = \bm{Z} \bm{T} \bm{Z}^\intercal$ and then applying the projection scheme in \Cref{alg:schur_proj} to the quasi-triangular factor $\bm{T}$, which is equivalent to performing a single iteration of the bi-level optimization problem at the end of every training batch. This approach is not only computationally efficient, but also theoretically sound, as one of the results in \citet{noferini2021nearest} is that this projection is guaranteed to yield the nearest quasi-triangular Schur-stable matrix.

To evaluate the performance of the fast approximation relative to the original algorithm, the following metrics are considered:
\begin{enumerate}
    \item \gls{nsfe}, defined as
          \begin{equation}
              \mathrm{NSFE}(\bm{A}, \bm{X}) = \frac{\left\| \bm{A} - \bm{X} \right\|_F^2}{\left\| \bm{A} \right\|_F^2},
          \end{equation}
          normalizes the objective function in \Cref{eq:nearest_obj_function} to be matrix-size agnostic. The optimal value of this metric is not zero, as this would require the target matrix $\bm{A}$ to be stable. Instead, the \gls{nsfe} of the truncated method should be evaluated relative to the one yielded by the bi-level optimization algorithm.
    \item \gls{nssr}, defined as the optimal cost of the integer linear program
          \begin{equation}
              \min_{\bm{C} \in \{0,1\}^{n \times n}} \frac{ \sum_{\substack{\lambda_i \in \Lambda_{\bm{X}} \\ \lambda_j \in \Lambda_{\bm{A}}}} c_{ij} |\lambda_i - \lambda_j|^2}{\sum_{\lambda_j \in \Lambda_{\bm{A}}} |\lambda_j|^2} ,
              \quad s.t. \quad
              \sum_{i=1}^n c_{ij} = n \quad \forall j , \quad
              \sum_{j=1}^n c_{ij} = n \quad \forall i ,
          \end{equation}
          measures the average spectral distance relative to the original matrix eigenvalues by matching the nearest one in the approximation. This metric quantifies the \textit{dynamics' distortion} induced by the projection method. Furthermore, with the state-observation matrix $\bm{C}$ being a free parameter, the state order of the projected matrix is not a concern. Just like the \gls{nsfe}, this metric should be evaluated relative to the full projection, not to 0.
    \item \gls{msvr}, defined as
          \begin{equation}
              \mathrm{MSVR}(\bm{X}) = \frac{1}{|\Lambda_{\bm{X}}|}\sum_{\lambda_i \in \Lambda_{\bm{X}}}\max\{|\lambda_i| - 1, 0\}^2,
          \end{equation}
          provides a size-agnostic measure of the compliance with the Schur-stability bounds that mirrors the regularization term from \citet{bemporad2025bfgs} in \Cref{eq:regularization_bemporad}. Since both projection methods are theoretically guaranteed to be stable, this metric evaluates the numerical stability of the algorithm in practice.
    \item Execution time. It is worth noting that the truncated projection is JIT-compiled, which is not possible for the Bi-level optimization due to its dynamic stop condition.
\end{enumerate}

\begin{table}[tb!]
    \caption{Performance metrics for the matrix-approximation scheme from \citet{noferini2021nearest} and the truncated projection. Hardware specs and extended benchmarks in \Cref{app:projection}.}
    \label{tb:projection}
    \centering
    \begin{tblr}{
        colspec={*{6}{c}},
        colsep={3pt},
        cells={valign=m, font=\footnotesize},
        row{1}={font={\bfseries \boldmath \footnotesize}},
        hline{1,every[2]{2}{-1}}={},
        cell{every[2]{2}{-2}}{1}={r=2}{},
        cell{2-Z}{1,3-Z}={mode=math},
            }
        n   & Method                    & NSFE                 & NSSR                 & MSVR                  & Time [s]
        \\
        10  & Bi-level Optimization     & \expnumber{4.08}{-1} & \expnumber{5.57}{-1} & \expnumber{2.18}{-11} & \expnumber{1.02}{3}
        \\
            & Block-Diagonal Projection & \expnumber{6.67}{-1} & \expnumber{4.06}{-1} & \expnumber{2.43}{-6}  & \expnumber{2.43}{-6}
        \\
        20  & Bi-level Optimization     & \expnumber{4.55}{-1} & \expnumber{6.83}{-1} & \expnumber{9.10}{-9}  & \expnumber{1.24}{3}
        \\
            & Block-Diagonal Projection & \expnumber{3.42}{-1} & \expnumber{5.54}{-1} & \expnumber{1.36}{-6}  & \expnumber{9.49}{-5}
        \\
        50  & Bi-level Optimization     & \expnumber{4.42}{-1} & \expnumber{7.48}{-1} & \expnumber{1.36}{-3}  & \expnumber{8.20}{3}
        \\
            & Block-Diagonal Projection & \expnumber{3.24}{-1} & \expnumber{6.13}{-1} & \expnumber{5.66}{-2}  & \expnumber{4.76}{-5}
        \\
        100 & Bi-level Optimization     & \expnumber{4.85}{-1} & \expnumber{7.61}{-1} & \expnumber{2.65}{-2}  & \expnumber{7.05}{4}
        \\
            & Block-Diagonal Projection & \expnumber{3.73}{-1} & \expnumber{6.28}{-1} & \expnumber{4.17}{-1}  & \expnumber{7.03}{-5}
    \end{tblr}
\end{table}

\Cref{tb:projection} reports the above metrics for the projection of various matrices with normally-distributed random coefficients, with additional cases available in \Cref{app:projection}. Since the orthogonal factor $\bm{Z}$ is no longer a free variable, \Cref{alg:schur_proj} initially incurs a higher \gls{nsfe}; yet this advantage vanishes for larger matrices as the execution-time limit of the manifold optimization leads to an early interruption of the minimization process. Moreover, the \gls{nssr} results indicate that this approach yields a better approximation of the original matrix eigenvalues, which is a more relevant property for sequence modelling. Finally, and more importantly, the full projection is several orders of magnitude slower than \Cref{alg:schur_proj}, making the former prohibitively expensive to use as a weight constraint.

The truncated projection serves as an effective weight constraint when fitting models containing structured \gls{ss} layers. For a Hammerstein-Wiener architecture, such as the one shown in \Cref{subfig:hw}, the identification process involves solving the following optimization problem:
\begin{equation} \label{eq:projected_obj}\min_{
        \bm{\Theta} \triangleq
        \left\{
        \begin{bsmallmatrix}
            \bm{A} & \bm{B}
            \\
            \bm{C} & \bm{D}
        \end{bsmallmatrix}
        , \bm{\theta_f} , \bm{\theta_g}
        \right\}
    } \ell(\bm{\hat{\Theta}} ; \bm{y}, \bm{x}[0])
    \quad s.t. \quad
    \bm{\hat{\Theta}} = \left\{
    \begin{bmatrix}
        \bm{\hat{A}} & \bm{B}
        \\
        \bm{C}       & \bm{D}
    \end{bmatrix}
    , \bm{\theta_f} , \bm{\theta_g}
    \right\},
\end{equation}
where $\ell$ mirrors the function defined in \Cref{eq:regularized_obj}, although driven by the output of the Hammerstein-Wiener model; $\bm{\theta_f}$ and $\bm{\theta_g}$ denote the parameters of the input and output nonlinearities, respectively; and $\bm{\hat{A}} = \bm{Z} \bm{\hat{T}} \bm{Z}^\intercal$ is calculated using the output of \Cref{alg:schur_proj}. To generalize this definition, it is enough to apply the stabilization algorithm to the state matrix of each \gls{ss} layer in the \gls{nn}.

Alternatively, $\bm{A}$ can be parameterized directly through its Schur-decomposition factors, removing the computational overhead of having to calculate the Schur factorization for every training step. In this case, $\bm{Z} \in \mathbb{R}^{n_x \times n_x}$ is constrained to be orthogonal via the \gls{svd} method from \citet{golub1996matrix} (see \Cref{app:orthogonal}), while $\bm{\hat{T}}$ is stabilized using \Cref{alg:schur_proj}, zeroing the elements below the block-diagonal. For simplicity, and without loss of generality since $\bm{\hat{Z}}$ is also a free parameter, $\bm{\hat{T}}$ is assumed to consist of $2 \times 2$ blocks, with a final $1 \times 1$ element in the bottom-right if and only if $n_x$ is odd.

Regardless of the parameterization, compared to \gls{simba}, the proposed methods trade computational complexity for weight efficiency. Indeed, while the former demands $5n_x^2$ weights to yield a Schur-stable state matrix, the dynamically decomposed and pre-factorized approaches require only $n_x^2$ and $2n_x^2$ weights, respectively. Moreover, unlike regularization methods, \Cref{alg:schur_proj} guarantees Schur stability (up to numerical error) without requiring relative-strength coefficient tuning.

\section{Experimental Results} \label{sec:exp}

In this section, the following stable-identification methods were evaluated: 1. Weight constraint over the state matrix following \Cref{alg:schur_proj}, denoted \textit{Schur (Proj.)} hereinafter; 2. Weight constraint over the state-matrix factors, using \Cref{alg:schur_proj} for the quasi-triangular matrix and the \gls{svd} method for the orthogonal factor, denoted \textit{Schur (Built)} henceforth; 3. \gls{simba} parameterization~\cite{di2024simba}; 4. Weight regularization~\cite{bemporad2025bfgs}, as in \Cref{eq:regularization_bemporad}, denoted \textit{Regularized} in tables and figures.

Large \gls{ss} architectures, such as \gls{s4} and \gls{mamba}, were excluded as baselines due to their hardware overhead and lack of stability guarantees~\cite{salam2025comprehensive}. The implementation was carried out in \textit{Keras 3} with the \textit{JAX} backend. Despite experiments being conducted on HPC CPU cores (hardware specs in \Cref{subsec:synthetic_description,subsec:real_description}), as CPU execution is faster for these dataset sizes~\cite{di2024simba}, the codebase remains GPU-compatible via a custom Schur decomposition implementation (see \Cref{app:gpu}).

\subsection{Synthetic-Data Benchmarks} \label{subsec:synthetic_exp}

The first experiment involves the replication and extension of the random-system benchmarks from \citet[Section V.A]{di2024simba}. The \textbf{original} setup consists of 50 stable discrete-time systems ($n_x=5, n_y=3, n_u=3$) with a single trajectory of 300 samples per partition (training, validation, and testing). Optimization is performed for 50,000 epochs using \textit{AdamW}~\cite{loshchilov2017decoupled} (initial learning rate $\eta_0 = \expnumber{1}{-3}$), retaining the weights yielding the best validation loss. Inputs follow a \gls{gbn} distribution ($p=0.1$), and unbiased Gaussian noise ($\sigma = 0.25$) is added to the training partition's output.

An analysis of the \gls{simba} codebase revealed that the generated systems lacked complex-conjugate eigenvalues, which is not representative of most physical systems. Furthermore, a low sample-to-parameter ratio was identified: a \gls{simba} layer the size of the aforementioned system size requires $5n_x^2 + n_x (n_u + n_y) + n_u n_y = 164$ parameters, which is more than half of the training sample count. Although the proposed Schur-based formulations are more efficient (64 and 89 parameters for the direct and parameterized formulations, respectively), additional configurations were introduced to ensure a comprehensive performance evaluation: \textbf{Extended} increases the number of samples and incorporates early stopping along with a checkpoint mechanism that restores the weights yielding the best validation loss; \textbf{Small} and \textbf{Smaller} reduce the total number of samples to \textbf{128} and \textbf{64}, respectively, to evaluate performance under low data availability; \textbf{Large} features increased system dimensionality to $(n_x, n_y, n_u) = (10, 6, 6)$ and splits the dataset into multiple sequences per partition to evaluate the methods' performance with larger $n_x$ values and batched data.

Furthermore, the output-noise scale for the new cases' training partition was reduced to $\sigma = 0.01$ due to the high level of uncertainty relative to the expected error, which was an issue as the accuracy gap between methods is too narrow. A detailed description of the experimental parameters for each case is available in \Cref{subsec:synthetic_description}. The numerical results from the experiments above are shown in \Cref{tb:synthetic_benchmark}, whereas supplementary figures are delegated to \Cref{subsec:synthetic_supplementary}.

\begin{table}[tb!]
    \centering
    \caption{Median numerical results for the synthetic-data benchmarks (uncertainty defined as half of the \gls{iqr}). Epoch time is measured in seconds. Row-wise best is highlighted.}
    \label{tb:synthetic_benchmark}
    \begin{tblr}{
        colspec={*{6}{c}},
        colsep={3pt},
        cells={valign=m,font=\footnotesize},
        row{1}={font={\footnotesize \bfseries}},
        cell{every[4]{2}{-4}}{1}={r=4}{},
        cell{2-Z}{3-Z}={mode=math},
        cell{13}{3}={c=4}{},
        hline{1,every[4]{2}{-1}}={},
        }
        Setup    & Metric      & Schur (Proj.)                        & Schur (Built)                        & \gls{simba}                          & Regularized
        \\
        Smaller  & NMSE        & \expnumber{(5.91 \pm 5.31)}{-4}      & \expnumber{(7.14 \pm 8.32)}{-4}      & \bm{\expnumber{(3.78 \pm 3.45)}{-4}} & \expnumber{(5.25 \pm 5.01)}{-4}
        \\
                 & NSSR        & \expnumber{(1.04 \pm 5.81)}{-2}      & \expnumber{(7.02 \pm 11.9)}{-2}      & \bm{\expnumber{(1.07 \pm 5.43)}{-2}} & \expnumber{(2.03 \pm 7.62)}{-2}
        \\
                 & Epoch Time  & \expnumber{(4.61 \pm 0.30)}{-3}      & \expnumber{(5.06 \pm 0.32)}{-3}      & \bm{\expnumber{(4.37 \pm 0.27)}{-3}} & \expnumber{(4.41 \pm 0.28)}{-3}
        \\
                 & Epoch Count & \expnumber{(1.84 \pm 0.26)}{4}       & \bm{\expnumber{(1.77 \pm 0.15)}{4}}  & \expnumber{(1.79 \pm 0.20)}{4}       & \expnumber{(2.13 \pm 0.22)}{4}
        \\
        Small    & NMSE        & \bm{\expnumber{(5.00 \pm 4.18)}{-4}} & \expnumber{(5.83 \pm 4.07)}{-4}      & \expnumber{(5.30 \pm 4.18)}{-4}      & \expnumber{(5.59 \pm 4.75)}{-4}
        \\
                 & NSSR        & \expnumber{(2.23 \pm 8.92)}{-2}      & \expnumber{(5.74 \pm 11.5)}{-2}      & \bm{\expnumber{(1.69 \pm 7.27)}{-2}} & \expnumber{(5.47 \pm 13.5)}{-2}
        \\
                 & Epoch Time  & \expnumber{(4.96 \pm 0.33)}{-3}      & \expnumber{(5.33 \pm 0.34)}{-3}      & \expnumber{(4.65 \pm 0.30)}{-3}      & \bm{\expnumber{(4.60 \pm 0.30)}{-3}}
        \\
                 & Epoch Count & \expnumber{(1.86 \pm 0.28)}{4}       & \expnumber{(1.76 \pm 0.20)}{4}       & \bm{\expnumber{(1.73 \pm 0.16)}{4}}  & \expnumber{(2.04 \pm 0.21)}{4}
        \\
        Original & NMSE        & \expnumber{(4.97 \pm 7.92)}{-5}      & \expnumber{(3.87 \pm 7.00)}{-5}      & \bm{\expnumber{(3.43 \pm 6.04)}{-5}} & \expnumber{(3.79 \pm 6.99)}{-5}
        \\
                 & NSSR        & \expnumber{(3.60 \pm 10.3)}{-2}      & \bm{\expnumber{(9.27 \pm 33.0)}{-3}} & \expnumber{(1.50 \pm 6.11)}{-2}      & \expnumber{(1.72 \pm 8.46)}{-2}
        \\
                 & Epoch Time  & \expnumber{(5.20 \pm 0.39)}{-3}      & \expnumber{(5.89 \pm 0.46)}{-3}      & \expnumber{(5.23 \pm 0.40)}{-3}      & \bm{\expnumber{(5.17 \pm 0.38)}{-3}}
        \\
                 & Epoch Count & \bm{\expnumber{(5.00 \pm 0.00)}{4}}
        \\
        Extended & NMSE        & \expnumber{(5.10 \pm 2.66)}{-4}      & \bm{\expnumber{(5.02 \pm 2.93)}{-4}} & \expnumber{(6.11 \pm 2.29)}{-4}      & \expnumber{(5.29 \pm 2.96)}{-4}
        \\
                 & NSSR        & \bm{\expnumber{(1.18 \pm 5.17)}{-2}} & \expnumber{(3.80 \pm 10.5)}{-2}      & \expnumber{(1.32 \pm 6.67)}{-2}      & \expnumber{(5.17 \pm 8.19)}{-2}
        \\
                 & Epoch Time  & \expnumber{(7.47 \pm 0.56)}{-3}      & \expnumber{(8.08 \pm 0.67)}{-3}      & \expnumber{(7.51 \pm 0.62)}{-3}      & \bm{\expnumber{(7.35 \pm 0.55)}{-3}}
        \\
                 & Epoch Count & \expnumber{(1.68 \pm 0.17)}{4}       & \expnumber{(1.68 \pm 0.14)}{4}       & \bm{\expnumber{(1.62 \pm 0.12)}{4}}  & \expnumber{(1.91 \pm 0.23)}{4}
        \\
        Large    & NMSE        & \expnumber{(4.49 \pm 1.55)}{-4}      & \bm{\expnumber{(4.09 \pm 1.93)}{-4}} & \expnumber{(6.49 \pm 2.63)}{-4}      & \expnumber{(5.19 \pm 1.22)}{-4}
        \\
                 & NSSR        & \expnumber{(4.61 \pm 5.35)}{-2}      & \expnumber{(1.11 \pm .50)}{-1}       & \expnumber{(5.46 \pm 5.54)}{-2}      & \bm{\expnumber{(3.64 \pm 3.86)}{-2}}
        \\
                 & Epoch Time  & \expnumber{(2.40 \pm 0.11)}{-2}      & \expnumber{(2.47 \pm 0.90)}{-2}      & \expnumber{(2.34 \pm 0.12)}{-2}      & \bm{\expnumber{(2.31 \pm 0.10)}{-2}}
        \\
                 & Epoch Count & \expnumber{(3.48 \pm 0.17)}{4}       & \expnumber{(3.40 \pm 0.26)}{4}       & \bm{\expnumber{(3.11 \pm 0.28)}{4}}  & \expnumber{(3.45 \pm 0.16)}{4}
    \end{tblr}
\end{table}

Despite the drastic reduction in output-noise scale, all evaluated methods demonstrate comparable performance in terms of \gls{nmse}, managing to correctly identify the underlying dynamics, with each method's expected tracking-error falling within the uncertainty range of its peers. Similar conclusions can be drawn from the \gls{nssr} and epoch time, although the latter tends to be smaller for the regularized method due to its simplicity in formulation and the low relative computational cost of the spectral norm for the considered $n_x$ values. The true discriminant is the epoch count, which is consistently larger for the regularized method as it does not strictly guarantee stable dynamics, at the detriment of the model's convergence rate.

\subsection{Real-Data Benchmarks} \label{subsec:real_exp}

The second round of experiments adapts the nonlinear \gls{sysid} benchmark from \citet[Section IV.C]{bemporad2025bfgs} to a stacked Hammerstein-Wiener structure (see \Cref{subfig:hw}), as opposed to the original single-layer discrete-time \gls{nlti} \gls{ss} model, and extends it to 4 additional datasets~\cite{schoukens2026NLBench} and model configurations (see \Cref{subsec:real_description}). To accommodate these architectural changes, the original step-wise training scheme was replaced by a single-stage backpropagation cycle: assuming $\bm{x}[0] = \bm{0_{n_x}}$, each method was tested over five random initializations to minimize the \gls{nmse} using the \textit{AdamW} optimizer ($\eta_0 = \expnumber{1}{-3}$) for up to 50,000 epochs (early stopping with patience of 10,000 epochs). Consistency with the original experiment was maintained by calculating the optimal initial state on the validation and test partitions via an \gls{ekf} + \gls{rtss} algorithm~\cite{bemporad2025bfgs}, using the former to select the best model and the latter to get the output sequences over which the error metrics are calculated.

\begin{table*}[tb!]
    \caption{Numerical results for the real-data benchmarks. Error uncertainty is defined as the standard deviation of the normalized squared differences. Median epoch time in seconds (uncertainty defined as half of the \gls{iqr}). Epoch count reported for the best-performing model. Row-wise best is highlighted.}
    \label{tb:real_benchmark}
    \centering
    \begin{tblr}{
        colspec={X[c]*{5}{c}},
        colsep={3pt},
        cells={valign=m, font=\footnotesize},
        row{1}={font={\footnotesize \bfseries}},
        cell{2-Z}{3-Z}={mode=math},
        cell{every[3]{2}{-3}}{1}={r=3}{},
        hline{1,every[3]{2}{-1}}={},
        }
        Dataset              & Metric      & Schur (proj.)                        & Schur (built)                        & SIMBa                                & Regularized
        \\
        Silverbox            & Test NMSE   & \expnumber{(1.52 \pm 7.86)}{-2}      & \expnumber{(1.51 \pm 7.87)}{-2}      & \bm{\expnumber{(1.50 \pm 7.97)}{-2}} & \expnumber{(1.53 \pm 7.49)}{-2}
        \\
                             & Epoch Time  & \expnumber{(1.59 \pm 0.02)}{-1}      & \expnumber{(1.42 \pm 0.01)}{-1}      & \bm{\expnumber{(1.39 \pm 0.06)}{-1}} & \expnumber{(1.43 \pm 0.03)}{-1}
        \\
                             & Epoch Count & \expnumber{8.36}{3}                  & \bm{\expnumber{6.89}{3}}             & \expnumber{1.40}{4}                  & \expnumber{1.05}{4}
        \\
        CED                  & Test NMSE   & \bm{\expnumber{(3.30 \pm 7.40)}{-2}} & \expnumber{(3.70 \pm 8.79)}{-2}      & \expnumber{(4.12 \pm 9.92)}{-2}      & \expnumber{(4.08 \pm 8.49)}{-2}
        \\
                             & Epoch Time  & \bm{\expnumber{(6.14 \pm 0.27)}{-3}} & \expnumber{(6.80 \pm 0.16)}{-3}      & \expnumber{(6.29 \pm 0.15)}{-3}      & \expnumber{(6.56 \pm 0.10)}{-3}
        \\
                             & Epoch Count & \expnumber{1.14}{4}                  & \bm{\expnumber{7.83}{3}}             & \expnumber{2.65}{4}                  & \expnumber{1.75}{4}
        \\
        EMPS                 & Test NMSE   & \expnumber{(3.21 \pm 3.24)}{-3}      & \bm{\expnumber{(3.07 \pm 4.51)}{-3}} & \expnumber{(4.86 \pm 5.78)}{-3}      & \expnumber{(2.50 \pm 5.18)}{1}
        \\
                             & Epoch Time  & \expnumber{(7.49 \pm 0.42)}{-3}      & \expnumber{(7.46 \pm 0.42)}{-3}      & \expnumber{(6.51 \pm 0.10)}{-3}      & \bm{\expnumber{(6.32 \pm 0.09)}{-3}}
        \\
                             & Epoch Count & \expnumber{3.16}{4}                  & \expnumber{1.33}{4}                  & \bm{\expnumber{1.31}{4}}             & \expnumber{4.78}{4}
        \\
        Industrial Robot     & Test NMSE   & \expnumber{(6.63 \pm 12.3)}{-1}      & \bm{\expnumber{(5.76 \pm 10.9)}{-1}} & \expnumber{(7.25 \pm 17.5)}{-1}      & \expnumber{(7.40 \pm 17.5)}{-1}
        \\
                             & Epoch Time  & \bm{\expnumber{(1.78 \pm 0.01)}{-1}} & \expnumber{(1.83 \pm 0.01)}{-1}      & \expnumber{(1.82 \pm 0.04)}{-1}      & \expnumber{(1.86 \pm 0.02)}{-1}
        \\
                             & Epoch Count & \expnumber{1.25}{4}                  & \bm{\expnumber{1.20}{4}}             & \expnumber{2.13}{4}                  & \expnumber{2.95}{4}
        \\
        Fine-Steering Mirror & Test NMSE   & \expnumber{(4.76 \pm 9.31)}{-2}      & \expnumber{(3.96 \pm 7.76)}{-2}      & \bm{\expnumber{(3.62 \pm 6.68)}{-2}} & \expnumber{(3.99 \pm 7.82)}{-2}
        \\
                             & Epoch Time  & \expnumber{(3.03 \pm 0.04)}{-1}      & \expnumber{(3.02 \pm 0.04)}{-1}      & \bm{\expnumber{(2.43 \pm 0.02)}{-1}} & \expnumber{(3.03 \pm 0.25)}{-1}
        \\
                             & Epoch Count & \expnumber{2.91}{4}                  & \bm{\expnumber{2.33}{4}}             & \expnumber{2.68}{4}                  & \expnumber{4.54}{4}
        \\
    \end{tblr}
\end{table*}

The numerical results for the best-performing initialization per method are detailed in \Cref{tb:real_benchmark}, with supplementary results available in \Cref{subsec:real_supplementary}. These reinforce the analysis made in \Cref{subsec:synthetic_exp}: all methods' expected accuracy falls within each other's uncertainty range (with the EMPS dataset being an outlier), yet the regularized method exhibits the slowest convergence rate. More notably, since the \textit{Fine-Steering Mirror} dynamics are approximated using the largest model configuration (see \Cref{subsec:real_description}), the computational cost of calculating the matrix norm at each training step makes the regularized method's epoch time equal to the Schur ones. From these two experimental stages, it is possible to place the proposed Schur-decomposition-based methods on par with \gls{simba}, and to deem these three as better alternatives to the regularized method for deep stable \gls{sysid}.

\section{Conclusions} \label{sec:conclusions}

In this paper, we presented a novel, backpropagation-ready weight-projection method for estimating discrete \gls{lti} \gls{ss} layers, leveraging the stabilization of the state-matrix's real Schur decomposition. By introducing a direct projection method over the state-transition matrix, as well as an alternative pre-factorized formulation, this work provides a robust framework for guaranteeing \gls{ss}-layer stability. This approach stands out due to its ability to ensure stable dynamics without hyperparameter tuning or the introduction of significant over-parameterization, which has insofar been a limitation in the literature. Experimental results across various scenarios, ranging from shallow linear systems to complex nonlinear models, demonstrate the practical utility of the proposed methodology. Both formulations consistently achieve performance on par with the \gls{sota} in terms of accuracy, epoch time, and convergence rate, suggesting that the proposed Schur-based stabilization is not only theoretically sound but also computationally efficient for stacked-model \acrlong{sysid}.

Future research will focus on further generalizing the projection algorithm to support arbitrary upper bounds on eigenvalue magnitudes, providing the same level of flexible stability control as \gls{simba} while maintaining the efficiency inherent to the Schur-based approach. Furthermore, the development of a more efficient GPU-based implementation for the Schur decomposition remains a hurdle to be surpassed so that the methodology scales effectively for larger datasets and architectures. Finally, more niche \gls{lmi} methods, such as the one presented in \citet{najson2012kalman}, remain insufficiently evaluated for \gls{ml} and could potentially serve as more efficient weight constraints for discrete-time \gls{ss} models.

\begin{ack}
    Funded by the European Union. Views and opinions expressed are, however, those of the author(s) only and do not necessarily reflect those of the European Union or the European Research Executive Agency (REA). Neither the European Union nor the granting authority can be held responsible for them.

    This programme has received funding from the European Union through Marie Skłodowska-Curie actions under project number 101081466: SEED - Systems and Engineering Science Doctorate.

    The authors wish to acknowledge CSC \textendash IT Center for Science, Finland, for computational resources.
\end{ack}

{
\small
\bibliographystyle{unsrtnat}
\bibliography{main}
}

\appendix

\section{Computing the Nearest Real 2-by-2 Schur-Stable Matrix} \label[appendix]{app:nearest_schur_stable}

This appendix is included for the sake of convenience, but it is taken almost verbatim from \citet[Section 5]{noferini2021nearest}. Some symbols, however, are modified to fit the notation used in this manuscript. Some proofs originally included in the source material are omitted.

First, some auxiliary results useful for the definition of the stable-candidate set $\mathcal{\hat{S}}(\bm{A})$ are presented. It is worth recalling, however, that not all of its elements are guaranteed to be stable (e.g., trivially, $\bm{A} \in \mathcal{\hat{S}}(\bm{A})$), so they have to be filtered based on their eigenvalues. This property, however, is trivial to verify for $2 \times 2$ matrices, as presented in \Cref{lemma:eigv_2by2}.

\begin{lemma} \label{lemma:eigv_2by2}
    Let $\bm{A} = \begin{bsmallmatrix}
            A_{1,1} & A_{1,2} \\ A_{2,1} & A_{2,2}
        \end{bsmallmatrix} \in \mathbb{R}^{2 \times 2}$. Then, its eigenvalues are given by
    \begin{equation} \label{eq:eigv_2by2}
        \lambda_\pm = \frac{\mathrm{tr}(\bm{A}) \pm \sqrt{\mathrm{tr}(\bm{A})^2 - 4 \cdot \mathrm{det}(\bm{A})}}{2}.
    \end{equation}
    \begin{proof}
        The eigenvalues of $\bm{A}$ are the roots of the characteristic polynomial
        \begin{equation}
            \begin{split}
                p(\lambda)
                 & = \mathrm{det}(\bm{A} - \lambda \bm{I})
                \\
                 & = \left| \begin{bsmallmatrix} A_{1,1} - \lambda & A_{1,2} \\ A_{2,1} & A_{2,2} - \lambda \end{bsmallmatrix} \right|
                \\
                 & = (A_{1,1} - \lambda) (A_{2,2} - \lambda) - A_{1,2} A_{2,1}
                \\
                 & = \lambda^2 + \underbrace{(- A_{1,1} - A_{2,2})}_{- \mathrm{tr}(\bm{A})} \lambda + \underbrace{(A_{1,1} A_{2,2} - A_{1,2} A_{2,1})}_{\mathrm{det}(\bm{A})} .
            \end{split}
        \end{equation}
        Applying the quadratic formula to $p(\lambda)$ yields \Cref{eq:eigv_2by2}.
    \end{proof}
\end{lemma}

Given $\alpha \in \mathbb{R}$, the 2D rotation matrix $\bm{U}(\alpha)$ is defined
\begin{equation}
    \bm{U}(\alpha) \triangleq \begin{bmatrix}
        \cos \alpha & -\sin \alpha
        \\
        \sin \alpha & \cos \alpha
    \end{bmatrix} ,
\end{equation}
where $\alpha \in \mathbb{R}$ is the rotation angle.

\begin{lemma} \label[lemma]{lemma:rotation}
    Let $\bm{A} \in \mathbb{R}^{2 \times 2}$. Then, $\exists \alpha \in [0, \pi/2)$ such that $\bm{\check{A}} = \bm{U}(\alpha) \bm{A} \bm{U}(\alpha)^\intercal$ has $\check{A}_{1,1} = \check{A}_{2,2} = \frac{1}{2} \mathrm{tr}(\bm{\check{A}})$.
\end{lemma}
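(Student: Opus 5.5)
The plan is to compute the diagonal of $\bm{\check{A}} = \bm{U}(\alpha)\bm{A}\bm{U}(\alpha)^\intercal$ explicitly as a function of $\alpha$, and then show that the difference of the two diagonal entries vanishes somewhere in $[0,\pi/2)$. Once the two diagonal entries are equal, the claim $\check{A}_{1,1} = \check{A}_{2,2} = \tfrac12 \mathrm{tr}(\bm{\check{A}})$ follows at once. Note also that $\mathrm{tr}(\bm{\check{A}}) = \mathrm{tr}(\bm{A})$, since $\bm{U}(\alpha)$ is orthogonal and hence $\bm{\check{A}}$ is similar to $\bm{A}$.

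First, write $c=\cos\alpha$ and $s=\sin\alpha$, and expand the product directly:
\begin{equation*}
\check{A}_{1,1} = c^2 A_{1,1} - cs\,(A_{1,2}+A_{2,1}) + s^2 A_{2,2}, \qquad
\check{A}_{2,2} = s^2 A_{1,1} + cs\,(A_{1,2}+A_{2,1}) + c^2 A_{2,2}.
\end{equation*}
Subtracting and using the double-angle identities gives
\begin{equation*}
f(\alpha) \triangleq \check{A}_{1,1} - \check{A}_{2,2} = \delta \cos 2\alpha - \sigma \sin 2\alpha,
\end{equation*}
where $\delta = A_{1,1}-A_{2,2}$ and $\sigma = A_{1,2}+A_{2,1}$.

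Next, show that $f$ has a zero in $[0,\pi/2)$ by splitting into two cases.
\begin{itemize}
\item If $\delta = 0$, take $\alpha = 0$.
\item If $\delta \neq 0$, then $f(0) = \delta$ and $f(\pi/2) = -\delta$ have strictly opposite signs. Since $f$ is continuous, the intermediate value theorem gives a root in the open interval $(0,\pi/2)$.
\end{itemize}
An explicit root can also be written down. When $\sigma \neq 0$, take $\alpha = \tfrac12 \operatorname{arccot}(\sigma/\delta)$, choosing the branch of $\operatorname{arccot}$ with values in $(0,\pi)$. This is convenient for the implementation, but it is not needed for the existence claim.

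The only delicate point is the half-open interval. The endpoint $\pi/2$ cannot be used, but it is excluded automatically: a root at $\pi/2$ forces $\delta = 0$, and in that case $\alpha = 0$ already works. No other obstacle is expected, since the argument is an elementary computation followed by the intermediate value theorem.
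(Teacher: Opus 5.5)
Your proof is correct and complete. The paper gives no proof of this lemma: the appendix imports it from \citet{noferini2021nearest} and omits the proofs, so there is no in-paper argument to compare your route against.

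Your computation checks out. The identity $\check{A}_{1,1}-\check{A}_{2,2}=\delta\cos 2\alpha-\sigma\sin 2\alpha$ is right. The sign change $f(0)=\delta=-f(\pi/2)$ reflects the fact that $\bm{U}(\pi/2)$ swaps the diagonal entries of $\bm{A}$, and together with the intermediate value theorem it yields the root. Your treatment of the half-open interval is exactly what is needed: a root at $\pi/2$ forces $\delta=0$, and then $\alpha=0$ already works.

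One minor slip appears in the optional explicit formula. The condition there should be $\delta\neq 0$ rather than $\sigma\neq 0$, because $\sigma/\delta$ is undefined when $\delta=0$. Conversely, the case $\sigma=0$ with $\delta\neq 0$ causes no trouble: the formula gives $\operatorname{arccot}(0)=\pi/2$, hence $\alpha=\pi/4$, which is correct.
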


A non-empty closed set $\mathcal{S} \subseteq \mathbb{R}^{2 \times 2}$ is said to be rotation-invariant if $\bm{X} \in \mathcal{S} \implies \bm{U}(\alpha) \bm{X} \bm{U}(\alpha)^\intercal \in \mathcal{S} \quad \forall \alpha \in \mathbb{R}$.

\begin{lemma} \label[lemma]{lemma:rotation_invariant}
    Let $\mathcal{S} \subseteq \mathbb{R}^{2 \times 2}$ be rotation-invariant, and let $\bm{\hat{A}}$ be a local minimizer of $$\min_{\bm{X} \in \mathcal{S}} \| \bm{A} - \bm{X} \|_F$$ for some $\bm{A} \in \mathbb{R}^{2 \times 2}$ satisfying $A_{1,1} = A_{2,2}$. Then:
    \begin{enumerate}
        \item If $A_{2,1} \neq -A_{1,2}$, then $\hat{A}_{1,1} = \hat{A}_{2,2}$.
        \item If $A_{2,1} = -A_{1,2}$, then $\exists \alpha \in [0, \pi / 2) : \bm{\hat{A}} = \bm{U}(\alpha) \bm{\check{A}} \bm{U}(\alpha)^\intercal$, where $\bm{\check{A}}$ is another local minimizer with the same objective value and $\check{A}_{1,1} = \check{A}_{2,2}$.
    \end{enumerate}
\end{lemma}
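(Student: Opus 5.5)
The plan is to decompose $\mathbb{R}^{2\times 2}$ orthogonally into the span of $\bm{I}$, the span of $\bm{J}=\bm{U}(\pi/2)$, and the traceless symmetric part. Write any $\bm{X}=a\bm{I}+b\bm{J}+\bm{S}$, where $\bm{S}=\begin{bsmallmatrix} c & d\\ d & -c\end{bsmallmatrix}$. Conjugation by $\bm{U}(\alpha)$ fixes $a\bm{I}+b\bm{J}$, since these matrices commute with all rotations. It acts on $\bm{S}$, identified with the vector $(c,d)$, as a rotation by angle $2\alpha$. The Frobenius norm splits as $\|\bm{X}\|_F^2=2a^2+2b^2+2(c^2+d^2)$. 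In these coordinates, $A_{1,1}=A_{2,2}$ means $c_A=0$, so $\bm{S}_A=d_A\begin{bsmallmatrix}0&1\\1&0\end{bsmallmatrix}$. The condition $A_{2,1}=-A_{1,2}$ means $d_A=0$, i.e.\ $\bm{S}_A=\bm{0}$.

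For part 1, $\bm{S}_A\neq\bm{0}$. Let $\bm{\hat{A}}$ have coordinates $(\hat a,\hat b,\hat c,\hat d)$. By rotation invariance, the whole orbit $\bm{U}(\alpha)\bm{\hat{A}}\bm{U}(\alpha)^\intercal$ lies in $\mathcal{S}$. Along it, the objective is
\begin{equation*}
2(a_A-\hat a)^2+2(b_A-\hat b)^2+2\|(0,d_A)-R(2\alpha)(\hat c,\hat d)\|^2 .
\end{equation*}
The last term equals a constant minus $4\langle (0,d_A),R(2\alpha)(\hat c,\hat d)\rangle$. Local minimality at $\alpha=0$ therefore forces $\alpha=0$ to be a local maximizer of $\alpha\mapsto\langle(0,d_A),R(2\alpha)(\hat c,\hat d)\rangle$. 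This function is $d_A r\sin(2\alpha+\phi)$, where $(\hat c,\hat d)=r(\cos\phi,\sin\phi)$. If $r=0$ we get $\hat c=0$ immediately. Otherwise, since $d_A\neq 0$, the only local maxima of a sinusoid are its global maxima, so $(\hat c,\hat d)$ must be a positive multiple of $(0,d_A)$. In both cases $\hat c=0$, which means $\hat A_{1,1}=\hat A_{2,2}$.

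For part 2, $\bm{S}_A=\bm{0}$, and the objective restricted to the orbit is constant in $\alpha$. Choose $\alpha$ so that rotating $(\hat c,\hat d)$ by $-2\alpha$ gives a vector with zero first coordinate. This is the same argument as in \Cref{lemma:rotation}, and since the map doubles angles, some $2\alpha\in[0,\pi)$ suffices, i.e.\ $\alpha\in[0,\pi/2)$. Set $\bm{\check{A}}=\bm{U}(\alpha)^\intercal\bm{\hat{A}}\bm{U}(\alpha)$. Then $\bm{\check{A}}\in\mathcal{S}$, it has the same objective value, and it satisfies $\check A_{1,1}=\check A_{2,2}$.

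It remains to show that $\bm{\check{A}}$ is itself a local minimizer. The map $\bm{X}\mapsto\bm{U}(\alpha)\bm{X}\bm{U}(\alpha)^\intercal$ is a Frobenius isometry that maps $\mathcal{S}$ onto itself and fixes $\bm{A}$, because $\bm{A}=a\bm{I}+b\bm{J}$ commutes with rotations. Such a map carries neighbourhoods to neighbourhoods and preserves the objective, so it transports local minimizers to local minimizers. The step that needs most care is the sinusoid argument in part 1: one has to be precise that the local minimality of $\bm{\hat{A}}$ on $\mathcal{S}$ implies local minimality along the orbit curve, which holds because the orbit is a continuous curve in $\mathcal{S}$ passing through $\bm{\hat{A}}$. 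The degenerate case $r=0$ also has to be handled separately, as done above.
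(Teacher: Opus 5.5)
Your proof is correct and complete. The $\bm{I}$/$\bm{U}(\pi/2)$/traceless-symmetric splitting makes conjugation by $\bm{U}(\alpha)$ act as a rotation by $2\alpha$ on $(c,d)$. Restricting to the orbit $\alpha \mapsto \bm{U}(\alpha)\bm{\hat{A}}\bm{U}(\alpha)^\intercal \subseteq \mathcal{S}$ legitimately transfers local minimality. The transport of local minimizers by the Frobenius isometry in part 2 is also handled properly.

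The paper does not prove this lemma itself; it is among the proofs omitted from the material reproduced from \citet{noferini2021nearest}. So there is no in-text argument to compare against, and restricting the objective to the rotation orbit is the natural route.

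One simplification in part 1: since $g(\alpha) = d_A(\hat c \sin 2\alpha + \hat d \cos 2\alpha)$, the first-order condition $g'(0) = 2 d_A \hat c = 0$ already forces $\hat c = 0$. In the original entries this reads $(A_{1,2}+A_{2,1})(\hat{A}_{1,1}-\hat{A}_{2,2}) = 0$. You therefore need neither the ``local maxima of a sinusoid are global'' step nor the separate $r=0$ case.
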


A non-empty closed set $\mathcal{S} \subseteq \mathbb{R}^{2 \times 2}$ is doubly rotation-invariant if $\bm{X} \in \mathcal{S} \implies \bm{U}(\alpha) \bm{X} \bm{U}(\beta)^\intercal \in \mathcal{S} \quad \forall \{ \alpha, \beta\} \in \mathbb{R}^2$.

\begin{lemma} \label[lemma]{lemma:doubly_rotation_invariant}
    Let $\mathcal{S} \in \mathbb{R}^{2 \times 2}$ be doubly rotation-invariant, and let $\bm{\hat{A}}$ be a local minimizer of $$\min_{\bm{X} \in \mathcal{S}} \| \bm{A} - \bm{X} \|_F$$ for some diagonal $\bm{A} = diag(\sigma_1, \sigma_2) = \begin{bsmallmatrix}
            \sigma_1 & 0 \\ 0 & \sigma_2
        \end{bsmallmatrix} \in \mathbb{R}^{2 \times 2}$ with $\sigma_1 > 0$ and $\sigma_1 \ge \sigma_2 \ge 0$. Then:
    \begin{enumerate}
        \item If $\sigma_1 \neq \sigma_2$, then $\bm{\hat{A}}$ is diagonal with $\hat{A}_{1,1} - \hat{A}_{2,2} \ge 0$ and $\hat{A}_{1,1} + \hat{A}_{2,2} \ge 0$.
        \item If $\sigma_1 = \sigma_2$, then $\exists \alpha \in [0, \pi/2) : \bm{\hat{A}} = \bm{U}(\alpha) \bm{\check{A}} \bm{U}(\alpha)^\intercal$, where $\bm{\check{A}}$ is another local minimizer with the same objective value and $\check{A}_{1,1} = \check{A}_{2,2} = 0$.
    \end{enumerate}
\end{lemma}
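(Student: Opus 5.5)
The plan is to exploit the invariance of $\mathcal{S}$ under the two-sided action $\bm{X} \mapsto \bm{U}(\alpha)\bm{X}\bm{U}(\beta)^\intercal$, together with the first-order optimality of a local minimizer along these orbits. Write $f(\bm{X}) = \|\bm{A}-\bm{X}\|_F^2$. Since $\bm{U}(\alpha)\bm{\hat{A}}\bm{U}(\beta)^\intercal \in \mathcal{S}$ for all $\alpha,\beta$ and this curve passes through $\bm{\hat{A}}$ at $\alpha=\beta=0$, local minimality implies that
\[
g(\alpha,\beta) = \|\bm{A} - \bm{U}(\alpha)\bm{\hat{A}}\bm{U}(\beta)^\intercal\|_F^2
\]
has a local minimum at $(0,0)$. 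Expanding, $g = \|\bm{A}\|_F^2 + \|\bm{\hat{A}}\|_F^2 - 2\,\mathrm{tr}(\bm{A}^\intercal \bm{U}(\alpha)\bm{\hat{A}}\bm{U}(\beta)^\intercal)$, so $(0,0)$ must be a local maximum of $h(\alpha,\beta)=\mathrm{tr}(\bm{A}\,\bm{U}(\alpha)\bm{\hat{A}}\bm{U}(\beta)^\intercal)$, using that $\bm{A}$ is diagonal.

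\textbf{First-order conditions.} Let $\bm{J} = \bm{U}'(0) = \begin{bsmallmatrix} 0 & -1 \\ 1 & 0 \end{bsmallmatrix}$. The partial derivatives at the origin give $\mathrm{tr}(\bm{A}\bm{J}\bm{\hat{A}}) = 0$ and $\mathrm{tr}(\bm{A}\bm{\hat{A}}\bm{J}^\intercal)=0$. A direct computation with $\bm{A}=\mathrm{diag}(\sigma_1,\sigma_2)$ and $\bm{\hat{A}} = [a_{ij}]$ yields
\[
\sigma_2 a_{12} - \sigma_1 a_{21} = 0, \qquad \sigma_1 a_{12} - \sigma_2 a_{21} = 0 .
\]
If $\sigma_1 \neq \sigma_2$, then, since $\sigma_1>0$ and $\sigma_1 \ge \sigma_2 \ge 0$, we have $\sigma_1^2 \ne \sigma_2^2$. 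The determinant of this linear system is $\sigma_2^2-\sigma_1^2 \neq 0$, so $a_{12}=a_{21}=0$ and $\bm{\hat{A}}$ is diagonal.

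\textbf{Sign conditions.} For the sign conditions, I use second-order necessary conditions, or more simply compare against discrete group elements. The orbit contains $\mathrm{diag}(a_{22},a_{11})$, obtained with $\alpha=\beta=\pi/2$, and $\mathrm{diag}(-a_{22},-a_{11})$, obtained with $\alpha=\pi/2,\beta=-\pi/2$. These are not near $\bm{\hat{A}}$, so local minimality does not compare them directly. Instead, I would use the second derivatives of $h$ along $\alpha=\beta$ and $\alpha=-\beta$. For diagonal $\bm{\hat{A}}$, one finds
\[
h(t,t) = \cos^2 t\,(\sigma_1 a_{11}+\sigma_2 a_{22}) + \sin^2 t\,(\sigma_1 a_{22}+\sigma_2 a_{11}),
\]
so the maximum condition at $t=0$ forces $(\sigma_1-\sigma_2)(a_{11}-a_{22}) \ge 0$, hence $a_{11}\ge a_{22}$. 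Similarly, $h(t,-t)$ involves $\cos 2t$ and gives $(\sigma_1+\sigma_2)(a_{11}+a_{22})\ge 0$, hence $a_{11}+a_{22}\ge0$. I expect this second-order bookkeeping, getting the exact trigonometric forms of $h$ along the two diagonals, to be the main technical step. If the second-order condition is degenerate (zero coefficient), the strict hypotheses $\sigma_1>\sigma_2$ and $\sigma_1>0$ still make the inequalities follow.

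\textbf{Case $\sigma_1=\sigma_2=\sigma>0$.} Here $\bm{A}=\sigma\bm{I}$, and $f(\bm{U}(\alpha)\bm{X}\bm{U}(\alpha)^\intercal)=f(\bm{X})$. So every conjugate of $\bm{\hat{A}}$ is again a local minimizer with the same value, since conjugation is an isometry fixing $\bm{A}$ that preserves $\mathcal{S}$. By \Cref{lemma:rotation}, choose $\alpha\in[0,\pi/2)$ so that $\bm{\check{A}} = \bm{U}(\alpha)^\intercal\bm{\hat{A}}\bm{U}(\alpha)$ has equal diagonal entries $c$. It remains to show $c=0$. Using the $\alpha=-\beta$ direction with the first-order condition, $h(t,-t)$ for $\bm{\check{A}}$ equals $\sigma\,\mathrm{tr}(\bm{U}(2t)\bm{\check{A}})$ up to ordering. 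Its derivative at $0$ is $\sigma(\check{a}_{21}-\check{a}_{12})$, which must vanish. The direction $\alpha=\beta$ gives no information, because conjugation fixes $\bm{A}$. The diagonal-equalizing rotation is not unique when $\bm{\check{A}}$ commutes with rotations, so I would then exploit the remaining freedom. The key input here is that the one-parameter family $\bm{U}(\alpha)\bm{\check{A}}$ lies in $\mathcal{S}$, and stationarity in it forces the symmetric part's trace contribution to vanish.

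I expect $c=0$ to need care. My first attempt would combine stationarity along $\bm{U}(\alpha)\bm{\check{A}}$ with the choice of conjugating rotation, possibly replacing ``diagonal entries equal'' by ``diagonal entries zero'' via the doubly-invariant degree of freedom.
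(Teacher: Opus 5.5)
\textbf{Case 1 is fine; case 2 has a gap that cannot be closed, because its stated conclusion is false.}

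Your case~1 is correct. Stationarity of $h(\alpha,\beta)=\mathrm{tr}(\bm{A}\bm{U}(\alpha)\bm{\hat{A}}\bm{U}(\beta)^\intercal)$ at the origin gives a linear system with determinant $\sigma_1^2-\sigma_2^2\neq 0$, so $\bm{\hat{A}}$ is diagonal. For diagonal $\bm{\hat{A}}$ you have the exact identity $h=\tfrac12(\sigma_1+\sigma_2)(a_{11}+a_{22})\cos(\alpha-\beta)+\tfrac12(\sigma_1-\sigma_2)(a_{11}-a_{22})\cos(\alpha+\beta)$. The two sign conditions therefore follow directly, with no degenerate second-order case to handle. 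The paper omits this proof and defers to \citet{noferini2021nearest}.

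The gap is case~2: the conclusion $\check{A}_{1,1}=\check{A}_{2,2}=0$ you are aiming for is false. Take $\mathcal{S}=SO(2)$, which is doubly rotation-invariant, and $\bm{A}=\sigma\bm{I}$ with $\sigma>0$. Then $\|\sigma\bm{I}-\bm{U}(\theta)\|_F^2=2\sigma^2+2-4\sigma\cos\theta$, so $\bm{\hat{A}}=\bm{I}$ is the global minimizer. Yet every conjugate $\bm{U}(\alpha)^\intercal\bm{I}\bm{U}(\alpha)$ equals $\bm{I}$, whose diagonal entries are $1$. The same happens for $\mathcal{S}=\{\bm{X}:\det(\bm{X})=1\}$ and $\bm{A}=2\bm{I}$: writing $\bm{X}=\bm{I}+\bm{E}$, one finds $\|\bm{A}-\bm{X}\|_F^2=2+(e_{11}+e_{22})^2+(e_{12}-e_{21})^2\ge 2$. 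So your plan of equalizing the diagonal via \Cref{lemma:rotation} and then proving $c=0$ must fail. Stationarity does not make the trace vanish; the second-order condition along $\alpha-\beta$ only gives $\mathrm{tr}(\bm{\check{A}})\ge 0$.

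The statement is evidently mis-transcribed. What the paper actually uses is that $\bm{\check{A}}$ is \emph{diagonal}, i.e.\ $\check{A}_{1,2}=\check{A}_{2,1}=0$; see case~1 of the proof of \Cref{lemma:nearest_stable_real}, where $\bm{\Xi}=\bm{U}(\alpha)\,\mathrm{diag}(\tau_1,\tau_2)\,\bm{U}(\alpha)^\intercal$. Your own ingredients prove that version:
\begin{itemize}
\item With $\bm{A}=\sigma\bm{I}$, you have $h(\alpha,\beta)=\sigma\,\mathrm{tr}(\bm{U}(\alpha-\beta)\bm{\hat{A}})$. Stationarity in $\alpha-\beta$, using $\sigma>0$, gives $\hat{A}_{1,2}=\hat{A}_{2,1}$, so $\bm{\hat{A}}$ is symmetric.
\item A symmetric $2\times 2$ matrix is orthogonally diagonalized by a rotation: $\bm{\hat{A}}=\bm{U}(\alpha)\bm{D}\bm{U}(\alpha)^\intercal$ with $\bm{D}$ diagonal.
\item Since $\bm{U}(\alpha+\pi)=-\bm{U}(\alpha)$, and conjugation by $\bm{U}(\pi/2)$ only swaps the diagonal entries of $\bm{D}$, the angle can be taken in $[0,\pi/2)$.
\item Finally, $\bm{\check{A}}=\bm{D}$ is again a local minimizer with the same objective value, by the conjugation-isometry argument you already gave.
\end{itemize}
\Cref{lemma:rotation} is not needed at all.
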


The result in \Cref{lemma:routh_hurwitz} is a variant of the Routh-Hurwitz stability criterion (see \citet[Sec. 26.2]{Hogben2013-hi}).

\begin{lemma} \label[lemma]{lemma:routh_hurwitz}
    Both roots of the polynomial $p(\lambda) = a \lambda^2 + b \lambda + c$, with $a \neq 0$, lie in the closed left half-plane $\{\lambda \in \mathbb{C} : \mathfrak{Re}(\lambda) \le 0\}$ if and only if $a, b, c$ are either all non-negative or all non-positive.
\end{lemma}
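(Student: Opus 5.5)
The claim in \Cref{lemma:routh_hurwitz} is invariant under multiplying $p$ by $-1$, which preserves both the roots and the sign pattern. I will therefore normalize to $a>0$ and prove that both roots lie in the closed left half-plane if and only if $b\ge 0$ and $c\ge 0$. The case $a<0$ then corresponds exactly to $a,b,c$ all non-positive.

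The main tool is Vieta's formulas, $\lambda_1+\lambda_2=-b/a$ and $\lambda_1\lambda_2=c/a$, together with the fact that for real coefficients the roots are either both real or form a complex-conjugate pair. This matches the discriminant split already used in \Cref{lemma:eigv_2by2}.

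\emph{Necessity.} Assume $\mathfrak{Re}(\lambda_i)\le 0$ for both roots.
\begin{itemize}
\item Summing real parts gives $-b/a=\mathfrak{Re}(\lambda_1)+\mathfrak{Re}(\lambda_2)\le 0$, so $b\ge 0$.
\item If the roots are real, both are non-positive, so their product $c/a$ is non-negative.
\item If the roots are a conjugate pair, the product is $|\lambda_1|^2\ge 0$.
\end{itemize}
In either case $c\ge 0$.

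\emph{Sufficiency.} Assume $b,c\ge 0$ with $a>0$.
\begin{itemize}
\item \emph{Complex conjugate roots.} Both share the real part $-b/(2a)\le 0$.
\item \emph{Real roots.} The product $c/a\ge 0$ means the roots are either of equal sign or at least one is zero. If both were positive, their sum $-b/a$ would be positive, contradicting $b\ge 0$. If one is zero, the other equals $-b/a\le 0$. Hence both roots are $\le 0$.
\end{itemize}

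The only step needing care is the real-root case of sufficiency. There, zero roots and the possibility of two positive roots have to be excluded explicitly using the sum, since the product condition alone cannot rule out two positive roots.
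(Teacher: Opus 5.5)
Your proof is correct, and it takes a different route from the paper. The paper gives no argument of its own: it presents the statement as a variant of the Routh--Hurwitz criterion, citing \citet[Sec.~26.2]{Hogben2013-hi}.

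The standard Routh--Hurwitz criterion characterizes roots in the \emph{open} left half-plane. For a quadratic this means $a,b,c$ all nonzero and of the same strict sign. Reaching the closed half-plane version stated here needs an extra step, for example a continuity argument:
\begin{itemize}
\item for sufficiency, perturb $b,c$ to $b+\epsilon,c+\epsilon$ and let $\epsilon\to 0^+$;
\item for necessity, apply the criterion to $p(\lambda+\epsilon)$, whose roots are shifted left by $\epsilon$ and whose coefficients are $a$, $2a\epsilon+b$, $a\epsilon^2+b\epsilon+c$, then let $\epsilon\to 0^+$.
\end{itemize}

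Your Vieta-based argument avoids both the citation and this limiting step, so it is self-contained. It also handles the boundary cases explicitly, namely purely imaginary roots when $b=0$ and a zero root when $c=0$. These are exactly where the closed-half-plane statement differs from the textbook criterion.

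What the Routh--Hurwitz framing buys is that the lemma becomes the degree-two instance of a criterion that extends to higher-degree polynomials. There, coefficient sign conditions are necessary but no longer sufficient. Since \Cref{lemma:stable_real} only uses the quadratic case, your elementary proof loses nothing.
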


From the above results, the set of $2 \times 2$ Schur-stable matrices is explicitly defined as in \Cref{lemma:stable_real}.

\begin{lemma} \label[lemma]{lemma:stable_real}
    $\mathcal{S}(\Omega_S, 2, \mathbb{R}) = \mathcal{S}_- \cap \mathcal{S}_0 \cap \mathcal{S}_+$, where
    \begin{subequations} \label{eq:schur_stable_subsets}
        \begin{align}
            \mathcal{S}_0   & = \{ \bm{X} \in \mathbb{R}^{2 \times 2} : \mathrm{det}(\bm{X}) \le 1 \}
            \\
            \mathcal{S}_\pm & = \{ \bm{X} \in \mathbb{R}^{2 \times 2} : \pm \mathrm{tr}(\bm{X}) \le 1 + \mathrm{det}(\bm{X}) \}
        \end{align}
    \end{subequations}

    \begin{proof}
        Let $t = \mathrm{tr}(\bm{X})$ and $d = \mathrm{det}(\bm{X})$ for brevity. The matrix $\bm{X}$ is Schur stable if and only if its characteristic polynomial $p(\lambda) = \lambda^2 - t \lambda + d$ has both roots inside the closed unit disk. This is equivalent to imposing that its Cayley transform~\cite{Hinrichsen2005},
        $$q(\mu) \triangleq (\mu - 1)^2 p\left(\frac{\mu+1}{\mu-1}\right) = (1-t+d)\mu^2 + 2(1-d)\mu + (1+t+d),$$ has both roots inside the (closed) left half plane. By \Cref{lemma:routh_hurwitz}, this holds if and only if the coefficients $1-t+d, 2(1-d), 1+t+d$ are all non-negative or all non-positive. These 3 coefficients, however, cannot all be non-positive at the same time: indeed, if the three relations $1-t+d \le 0, 1-d \le 0, 1+t+d \le 0$ hold, then one gets $$2 \le 1+d \le t \le -1-d \le -2,$$ which is impossible. Hence, Schur stability is equivalent to the three coefficients being non-negative, which are the conditions in \Cref{eq:schur_stable_subsets}.
    \end{proof}
\end{lemma}

Now, to get an explicit formulation of the candidate set $\mathcal{\hat{S}}$, some additional expressions are necessary. Let $\mathcal{M}(\sigma_1, \sigma_2)$ be the set of critical points $(\tau_1, \tau_2) \in \mathbb{R}^2$ of the function $$(\tau_1 - \sigma_1)^2 + (\tau_2 - \sigma_2)^2$$ subject to $\tau_1 \tau_2 = 1$. The solutions to this problem can be computed exactly for any given pair $(\sigma_1, \sigma_2)$, since solving
\begin{equation}
    0 = \frac{d}{dt}\left( \left( \sigma_1 - t \right)^2 + \left( \sigma_2 - \frac{1}{t} \right)^2 \right) =  \overbrace{\frac{2}{t^3}}^{\text{Never 0 for } t \in (-\infty,\infty)} \underbrace{\left( t^4 - \sigma_1 t^3 + \sigma_2 t - 1 \right)}_{\text{Target Polynomial}}
\end{equation}
amounts to computing the roots of a 4\textsuperscript{th}-degree polynomial. Thus, $\mathcal{M}$ has at most four elements for any choice of $(\sigma_1, \sigma_2)$. With this definition, \Cref{lemma:nearest_stable_real} can be stated.

\begin{lemma} \label[lemma]{lemma:nearest_stable_real}
    Let $\bm{A} \in \mathbb{R}^{2 \times 2}$, and $\bm{G} \in SO(2) \triangleq \left\{ \bm{X} \in O(2) : \mathrm{det}{\left(\bm{X}\right)} = 1 \right\}$ be a matrix such that $\bm{\check{A}} = \bm{G}^\intercal \bm{A} \bm{G}$ satisfies $\check{A}_{1,1} = \check{A}_{2,2}$ ($G$ exists by \Cref{lemma:rotation}). Suppose, moreover, that $\bm{A}$ has \gls{svd} $\bm{A} = \bm{U_0} \begin{bsmallmatrix} \sigma_{0,1} & \\ & \sigma_{0,2} \end{bsmallmatrix} \bm{V_0}^\intercal$ and that $\bm{A} \mp \bm{I}$ have \glspl{svd} $\bm{A} \mp \bm{I} = \bm{U_\pm} \begin{bsmallmatrix} \sigma_{\pm,1} & \\ & \sigma_{\pm,2} \end{bsmallmatrix} \bm{V_{\pm}}^\intercal$. Then, the set
    \begin{equation}
        \mathcal{\hat{S}}(\bm{A}) \triangleq \left\{ \bm{A} , \bm{\hat{A}}_+, \bm{\hat{A}}_- \right\} \cup \mathcal{\hat{A}}_0 \cup \mathcal{\hat{A}}_+ \cup \mathcal{\hat{A}}_- \cup \mathcal{\hat{A}}_* ,
    \end{equation}
    where
    \begin{align*}
        \bm{\hat{A}}_\pm      & = \pm \bm{I} + \bm{U}_\pm
        \begin{bsmallmatrix}
            \sigma_{\pm, 1} & \\ & 0
        \end{bsmallmatrix}
        \bm{V}_\pm ,
        \\
        \mathcal{\hat{A}}_0   & = \left\{
        \bm{U}_0
        \begin{bsmallmatrix}
            \tau_1 & \\ & \tau_2
        \end{bsmallmatrix}
        \bm{V}_0^\intercal : (\tau_1, \tau_2) \in \mathcal{M}(\sigma_{0,1}, \sigma_{0,2})
        \right\} ,
        \\
        \mathcal{\hat{A}}_\pm & = \left\{
        \bm{G}
        \begin{bsmallmatrix}
            \pm 1 & \check{A}_{1,2} \\ 0 & \pm 1
        \end{bsmallmatrix}
        \bm{G}^\intercal ,
        \bm{G}
        \begin{bsmallmatrix}
            \pm 1 & 0 \\ \check{A}_{2,1} & \pm 1
        \end{bsmallmatrix}
        \bm{G}^\intercal
        \right\},
    \end{align*}
    and
    \begin{equation*}
        \mathcal{\hat{A}}_* = \left\{
        \bm{G}
        \begin{bsmallmatrix}
            0 & \tau_1 \\ \tau_2 & 0
        \end{bsmallmatrix}
        \bm{G}^\intercal
        : (\tau_1 , \tau_2) \in \mathcal{M}(\check{A}_{1,2},\check{A}_{2,1})
        \right\} ,
    \end{equation*}
    contains the Schur-stable matrix nearest to $\bm{A}$.

    \begin{proof}
        $\bm{A}$ is assumed not to be Schur stable (otherwise the result is trivial), and let $\bm{\hat{A}}$ be a Schur-stable matrix nearest to $\bm{A}$. Any such matrix must belong to the boundary of $\mathcal{S}_- \cap \mathcal{S}_0 \cap \mathcal{S}_+$. Observe that $\partial \mathcal{S}_0$ is the set of matrices with determinant 1, whereas $\partial \mathcal{S}_\pm$ is the set of matrices with an eigenvalue $\pm 1$. In particular, the set $\partial \mathcal{S}_- \cap \partial \mathcal{S}_0 \cap \partial \mathcal{S}_+$ where all three constraints are active is empty, since a $2 \times 2$ matrix with both $1$ and $-1$ as eigenvalues cannot have determinant $1$. Hence, one of the following mutually exclusive cases must hold:
        \begin{enumerate}
            \item $\bm{\hat{A}} \in \partial \mathcal{S}_0$, $\bm{\hat{A}} \notin \partial \mathcal{S}_+ \cup \partial \mathcal{S}_-$;
            \item $\bm{\hat{A}} \in \partial \mathcal{S}_+$, $\bm{\hat{A}} \notin \partial \mathcal{S}_0 \cup \partial \mathcal{S}_-$;
            \item $\bm{\hat{A}} \in \partial \mathcal{S}_-$, $\bm{\hat{A}} \notin \partial \mathcal{S}_0 \cup \partial \mathcal{S}_+$;
            \item $\bm{\hat{A}} \in \partial \mathcal{S}_0 \cap \partial \mathcal{S}_+$, $\bm{\hat{A}} \notin \partial \mathcal{S}_-$;
            \item $\bm{\hat{A}} \in \partial \mathcal{S}_0 \cap \partial \mathcal{S}_-$, $\bm{\hat{A}} \notin \partial \mathcal{S}_+$;
            \item $\bm{\hat{A}} \in \partial \mathcal{S}_+ \cap \partial \mathcal{S}_-$, $\bm{\hat{A}} \notin \partial \mathcal{S}_0$.
        \end{enumerate}

        The six cases are treated separately:
        \begin{enumerate}
            \item $\bm{\hat{A}} \in \partial \mathcal{S}_0$, $\bm{\hat{A}} \notin \partial \mathcal{S}_+ \cup \partial \mathcal{S}_-$. Then, $\bm{\hat{A}}$ is a local minimizer of $\| \bm{A} - \bm{X} \|_F$ in the doubly rotation-invariant set $\partial \mathcal{S}_0$, and $\bm{\Xi} = \bm{U_0}^\intercal \bm{\hat{A}} \bm{V_0}$ is a local minimizer of $\left\| \begin{bsmallmatrix} \sigma_{0,1} & \\ & \sigma_{0,2}  \end{bsmallmatrix} - \bm{X} \right\|_F$ in $\partial \mathcal{S}_0$.

                  If $\sigma_{0,1} \neq \sigma_{0,2}$, then $\bm{\Xi}$ is diagonal by \Cref{lemma:doubly_rotation_invariant}; and for $\bm{\Xi} = \begin{bsmallmatrix} \tau_1 & \\ & \tau_2 \end{bsmallmatrix}$ (with $\tau_1 \tau_2 = \mathrm{det}(\bm{\Xi}) = 1$), it is necessary to have $(\tau_1 , \tau_2) \in \mathcal{M}(\sigma_{0,1}, \sigma_{0,2})$. Hence, $\bm{\hat{A}} \in \mathcal{\hat{A}}_0$.

                  Otherwise, if $\sigma_{0,1} = \sigma_{0,2}$, then $\bm{\Xi} = \bm{U}(\alpha) \begin{bsmallmatrix} \tau_1 & \\ & \tau_2 \end{bsmallmatrix} \bm{U}(\alpha)^\intercal$ and $$\bm{\hat{A}} = \bm{\hat{A}_\alpha} = \bm{U_0} \bm{U}(\alpha) \begin{bsmallmatrix} \tau_1 & \\ & \tau_2 \end{bsmallmatrix} \bm{U}(\alpha)^\intercal \bm{V_0}^\intercal$$ for some $\alpha \in [0,\pi/2)$. If $\bm{\hat{A}_0}$ is stable, then $\bm{\hat{A}_0} \in \mathcal{\hat{A}}_0$ is another Schur-stable matrix nearest to $\bm{A}$, and the proof is concluded.

                  If $\bm{\hat{A}_0}$ is not stable, then there is a minimum $\alpha$ such that $\bm{\hat{A}_\alpha}$ is stable, and this is another Schur-stable matrix nearest to $\bm{A}$ in which one more constraint is active; hence, it falls in one of cases 4-6.
            \item $\bm{\hat{A}} \in \partial \mathcal{S}_+$, $\bm{\hat{A}} \notin \partial \mathcal{S}_0 \cup \partial \mathcal{S}_-$. Then, $$\min_{\bm{X} \in \partial \mathcal{S}_+} \|\bm{A} - \bm{X} \|_F = \min_{\bm{X} \in \partial \mathcal{S}_+} \| \bm{U_+}^\intercal (\bm{A} - \bm{I}) \bm{V_+} - \bm{U_+}^\intercal (\bm{X} - \bm{I}) \bm{V_+} \|_F.$$ Hence, $\bm{\hat{A}}$ is a local minimizer of $\|\bm{A} - \bm{X} \|_F$ if and only if $\bm{\Xi} = \bm{U_+}^\intercal (\bm{X} - \bm{I}) \bm{V_+}$ in $\partial \mathcal{S}_d \triangleq \left\{ \bm{X} \in \mathbb{R}^{2 \times 2} : \mathrm{det}(\bm{X}) = 0 \right\}$: indeed, $\bm{X}$ has an eigenvalue 1 if and only if $\bm{\Xi}$ has determinant 0. Thus, either $\bm{\Xi} = \begin{bsmallmatrix} \sigma_{+,1} & \\ & 0 \end{bsmallmatrix}$ and hence $\bm{\hat{A}} = \bm{\hat{A}_+}$, or a minimizer with the same objective value in one of cases 4-6 can be found.
            \item $\bm{\hat{A}} \in \partial \mathcal{S}_-$, $\bm{\hat{A}} \notin \partial \mathcal{S}_0 \cup \partial \mathcal{S}_+$. Proof as in case 2, swapping all plus and minus signs.
            \item $\bm{\hat{A}} \in \partial \mathcal{S}_0 \cap \partial \mathcal{S}_+$, $\bm{\hat{A}} \notin \partial \mathcal{S}_-$. Note that $\partial \mathcal{S}_0 \cap \partial \mathcal{S}_+$ is the rotation-invariant set of matrices with double eigenvalue $+1$ (since they must have eigenvalue $+1$ and determinant $1$). $\bm{\hat{A}}$ is a minimizer if and only if $\bm{\Xi} = \bm{G}^\intercal \bm{\hat{A}} \bm{G}$ is a minimizer of $\| \bm{\check{A}} - \bm{X} \|$ in $\partial \mathcal{S}_0 \cap \partial \mathcal{S}_+$; so, by \Cref{lemma:rotation_invariant}, $\frac{1}{2} tr(\bm{\Xi}) = \Xi_{1,1} = \Xi_{2,2} = \frac{1}{2} tr(\bm{\hat{A}}) = +1$. Since $\mathrm{det}(\bm{\Xi}) = 1$, at least one among $\Xi_{1,2}$ and $\Xi_{2,1}$ is zero, and to minimize the projection distance, the other must be equal to the corresponding element of $\bm{\check{A}}$. Thus, $\bm{\hat{A}} \in \mathcal{\hat{A}}_+$.
            \item $\bm{\hat{A}} \in \partial \mathcal{S}_0 \cap \partial \mathcal{S}_-$, $\bm{\hat{A}} \notin \partial \mathcal{S}_+$. Proof as in case 4, swapping all plus and minus signs.
            \item $\bm{\hat{A}} \in \partial \mathcal{S}_+ \cap \partial \mathcal{S}_-$, $\bm{\hat{A}} \notin \partial \mathcal{S}_0$. Note that $\partial \mathcal{S}_+ \cap \partial \mathcal{S}_-$ is the rotation-invariant set of matrices with eigenvalues $1$ and $-1$. Arguing as in case 4, $$\bm{G}^\intercal \bm{\hat{A}} \bm{G} = \bm{\Xi} = \begin{bsmallmatrix} 0 & \tau_1 \\ \tau_2 & 0 \end{bsmallmatrix} , \ \tau_1 \tau_2 = 1 ,$$ as $\bm{\Xi}$ must have $\Xi_{1,1} = \Xi_{2,2} = \frac{1}{2} tr(\bm{\Xi}) = 0$ and $\mathrm{det}(\bm{\Xi}) = 1$. Moreover, to minimize $\| \bm{\check{A}} - \bm{\Xi} \|_F$, $(\tau_1, \tau_2)$ must belong to $\mathcal{M}(\check{A}_{1,2}, \check{A}_{2,1})$. Thus, $\bm{\hat{A}} \in \mathcal{A}_*$.
        \end{enumerate}
    \end{proof}
\end{lemma}

\section{Projection Scheme - Full Benchmark} \label[appendix]{app:projection}

\begin{table}[tb!]
    \caption{Extended results for the truncated projection benchmark.}
    \label{tb:projection_extended}
    \centering
    \begin{tblr}{
        colspec={*{7}{c}},
        colsep={3pt},
        cells={valign=m, font=\footnotesize},
        cell{2-Z}{1,2,4-Z}={mode=math},
        row{1}={font={\footnotesize \bfseries}},
        hline{1,every[2]{2}{-1}}={},
        cell{every[8]{2}{-8}}{1}={r=8}{},
        cell{every[2]{2}{-2}}{2}={r=2}{},
            }
        Case & $\bm{n}$ & Method                    & NSFE                 & NSSR                 & MSVR                  & Time [s]
        \\
        1    & 10       & Bi-level Optimization     & \expnumber{4.75}{-1} & \expnumber{9.20}{-1} & \expnumber{6.81}{-4}  & \expnumber{9.51}{1}
        \\
             &          & Block-Diagonal Projection & \expnumber{9.02}{-1} & \expnumber{9.02}{-1} & \expnumber{1.28}{-14} & \expnumber{1.35}{-4}
        \\
             & 20       & Bi-level Optimization     & \expnumber{4.87}{-1} & \expnumber{9.46}{-1} & \expnumber{3.33}{-2}  & \expnumber{1.54}{2}
        \\
             &          & Block-Diagonal Projection & \expnumber{9.51}{-1} & \expnumber{9.56}{-1} & \expnumber{7.11}{-16} & \expnumber{8.17}{-5}
        \\
             & 50       & Bi-level Optimization     & \expnumber{4.95}{-1} & \expnumber{9.54}{-1} & \expnumber{6.21}{-1}  & \expnumber{5.45}{2}
        \\
             &          & Block-Diagonal Projection & \expnumber{9.80}{-1} & \expnumber{9.80}{-1} & \expnumber{1.14}{-5}  & \expnumber{6.72}{-5}
        \\
             & 100      & Bi-level Optimization     & \expnumber{4.97}{-1} & \expnumber{9.59}{-1} & \expnumber{2.22}{0}   & \expnumber{1.44}{3}
        \\
             &          & Block-Diagonal Projection & \expnumber{9.90}{-1} & \expnumber{9.90}{-1} & \expnumber{0.00}{0}   & \expnumber{7.09}{-5}
        \\
        2    & 10       & Bi-level Optimization     & \expnumber{8.28}{-2} & \expnumber{3.58}{-1} & \expnumber{4.87}{-7}  & \expnumber{5.34}{2}
        \\
             &          & Block-Diagonal Projection & \expnumber{4.61}{-1} & \expnumber{4.35}{-1} & \expnumber{3.64}{-13} & \expnumber{2.93}{-5}
        \\
             & 20       & Bi-level Optimization     & \expnumber{3.87}{-1} & \expnumber{5.69}{-1} & \expnumber{2.82}{-5}  & \expnumber{1.48}{3}
        \\
             &          & Block-Diagonal Projection & \expnumber{4.61}{-1} & \expnumber{4.35}{-1} & \expnumber{1.72}{-8}  & \expnumber{3.88}{-5}
        \\
             & 50       & Bi-level Optimization     & \expnumber{3.81}{-1} & \expnumber{5.73}{-1} & \expnumber{1.90}{-4}  & \expnumber{9.91}{3}
        \\
             &          & Block-Diagonal Projection & \expnumber{5.27}{-1} & \expnumber{4.74}{-1} & \expnumber{1.76}{-2}  & \expnumber{4.55}{-5}
        \\
             & 100      & Bi-level Optimization     & \expnumber{3.96}{-1} & \expnumber{6.09}{-1} & \expnumber{2.96}{-3}  & \expnumber{6.75}{4}
        \\
             &          & Block-Diagonal Projection & \expnumber{4.97}{-1} & \expnumber{4.63}{-1} & \expnumber{9.56}{-2}  & \expnumber{5.82}{-5}
        \\
        3    & 10       & Bi-level Optimization     & \expnumber{4.08}{-1} & \expnumber{5.57}{-1} & \expnumber{2.18}{-11} & \expnumber{1.02}{3}
        \\
             &          & Block-Diagonal Projection & \expnumber{6.67}{-1} & \expnumber{4.06}{-1} & \expnumber{2.43}{-6}  & \expnumber{2.43}{-6}
        \\
             & 20       & Bi-level Optimization     & \expnumber{4.55}{-1} & \expnumber{6.83}{-1} & \expnumber{9.10}{-9}  & \expnumber{1.24}{3}
        \\
             &          & Block-Diagonal Projection & \expnumber{3.42}{-1} & \expnumber{5.54}{-1} & \expnumber{1.36}{-6}  & \expnumber{9.49}{-5}
        \\
             & 50       & Bi-level Optimization     & \expnumber{4.42}{-1} & \expnumber{7.48}{-1} & \expnumber{1.36}{-3}  & \expnumber{8.20}{3}
        \\
             &          & Block-Diagonal Projection & \expnumber{3.24}{-1} & \expnumber{6.13}{-1} & \expnumber{5.66}{-2}  & \expnumber{4.76}{-5}
        \\
             & 100      & Bi-level Optimization     & \expnumber{4.85}{-1} & \expnumber{7.61}{-1} & \expnumber{2.65}{-2}  & \expnumber{7.05}{4}
        \\
             &          & Block-Diagonal Projection & \expnumber{3.73}{-1} & \expnumber{6.28}{-1} & \expnumber{4.17}{-1}  & \expnumber{7.03}{-5}
        \\
        4    & 10       & Bi-level Optimization     & \expnumber{1.58}{-1} & \expnumber{7.15}{-1} & \expnumber{3.17}{-7}  & \expnumber{1.02}{3}
        \\
             &          & Block-Diagonal Projection & \expnumber{4.80}{-1} & \expnumber{5.59}{-1} & \expnumber{5.12}{-14} & \expnumber{6.90}{-5}
        \\
             & 20       & Bi-level Optimization     & \expnumber{1.94}{-1} & \expnumber{7.92}{-1} & \expnumber{2.35}{-4}  & \expnumber{1.34}{3}
        \\
             &          & Block-Diagonal Projection & \expnumber{6.02}{-1} & \expnumber{7.01}{-1} & \expnumber{3.34}{-8}  & \expnumber{4.98}{-5}
        \\
             & 50       & Bi-level Optimization     & \expnumber{2.46}{-1} & \expnumber{7.98}{-1} & \expnumber{8.52}{-2}  & \expnumber{1.44}{4}
        \\
             &          & Block-Diagonal Projection & \expnumber{7.44}{-1} & \expnumber{8.14}{-1} & \expnumber{1.60}{-6}  & \expnumber{6.82}{-5}
        \\
             & 100      & Bi-level Optimization     & \expnumber{4.37}{-1} & \expnumber{8.82}{-1} & \expnumber{3.38}{-2}  & \expnumber{4.46}{4}
        \\
             &          & Block-Diagonal Projection & \expnumber{7.92}{-1} & \expnumber{8.63}{-1} & \expnumber{1.53}{-3}  & \expnumber{8.33}{-5}
    \end{tblr}
\end{table}

\begin{figure}[tb!]
    \centering
    \begin{subfigure}{\linewidth}
        \begin{subfigure}{0.24\linewidth}
            \includegraphics[width=\linewidth]{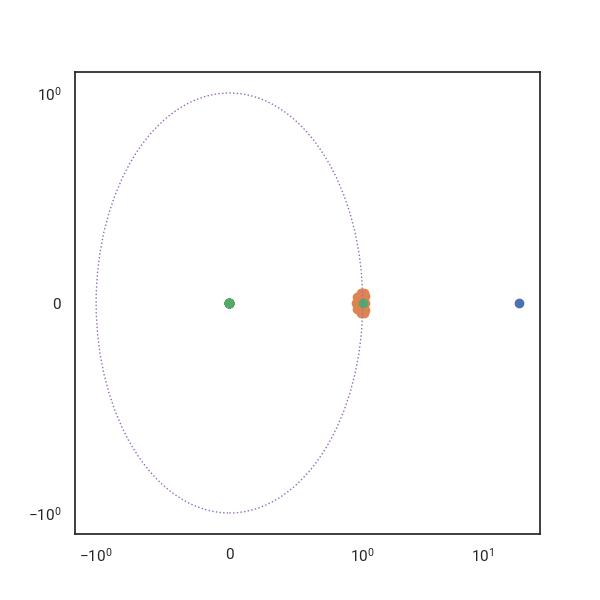}
            \caption{$n=10$}
        \end{subfigure}
        \hfill
        \begin{subfigure}{0.24\linewidth}
            \includegraphics[width=\linewidth]{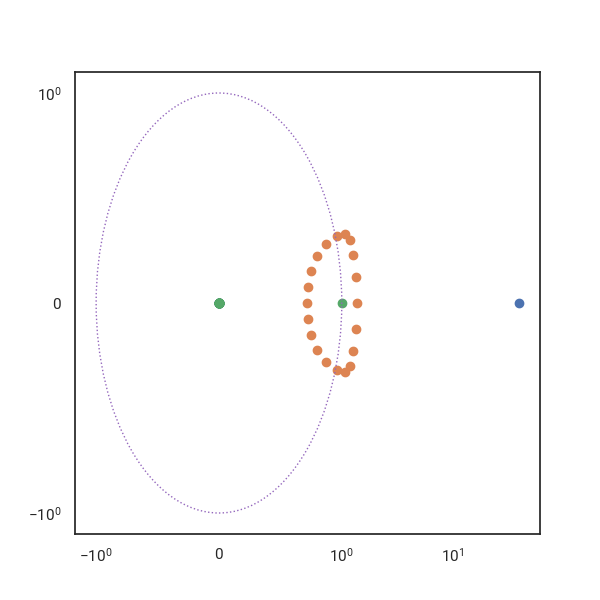}
            \caption{$n=20$}
        \end{subfigure}
        \hfill
        \begin{subfigure}{0.24\linewidth}
            \includegraphics[width=\linewidth]{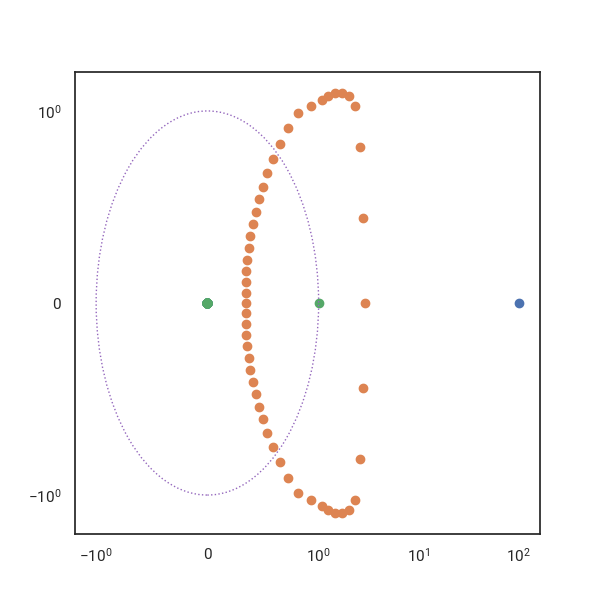}
            \caption{$n=50$}
        \end{subfigure}
        \hfill
        \begin{subfigure}{0.24\linewidth}
            \includegraphics[width=\linewidth]{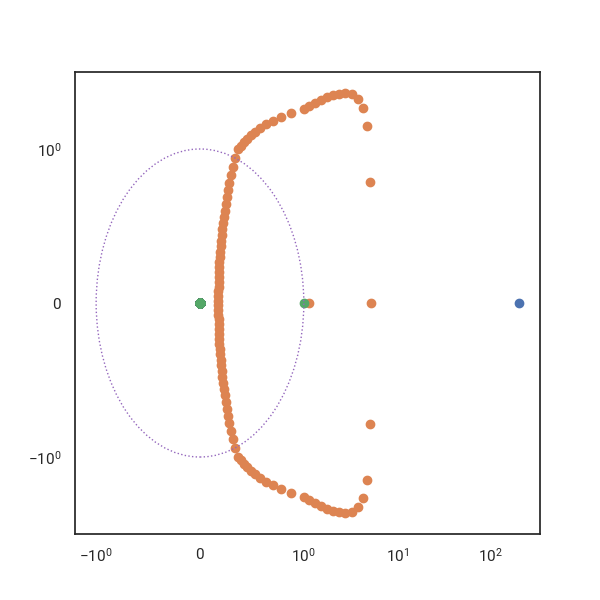}
            \caption{$n=100$}
        \end{subfigure}
        \caption{Case 1}
    \end{subfigure}
    \\
    \begin{subfigure}{\linewidth}
        \begin{subfigure}{0.24\linewidth}
            \includegraphics[width=\linewidth]{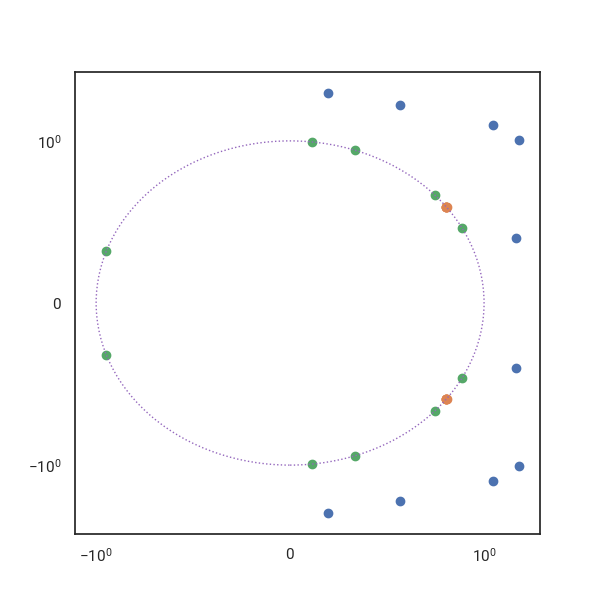}
            \caption{$n=10$}
        \end{subfigure}
        \hfill
        \begin{subfigure}{0.24\linewidth}
            \includegraphics[width=\linewidth]{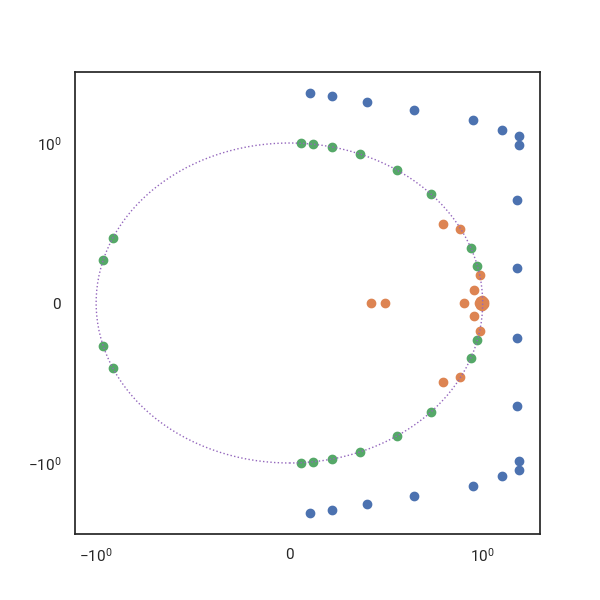}
            \caption{$n=20$}
        \end{subfigure}
        \hfill
        \begin{subfigure}{0.24\linewidth}
            \includegraphics[width=\linewidth]{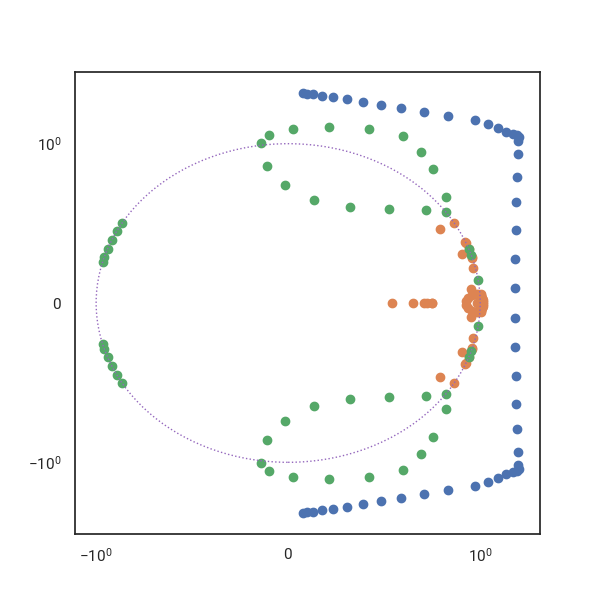}
            \caption{$n=50$}
        \end{subfigure}
        \hfill
        \begin{subfigure}{0.24\linewidth}
            \includegraphics[width=\linewidth]{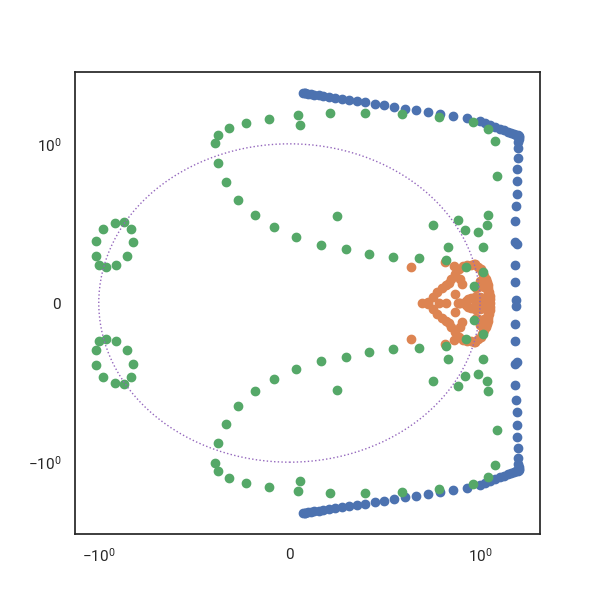}
            \caption{$n=100$}
        \end{subfigure}
        \caption{Case 2}
    \end{subfigure}
    \\
    \begin{subfigure}{\linewidth}
        \begin{subfigure}{0.24\linewidth}
            \includegraphics[width=\linewidth]{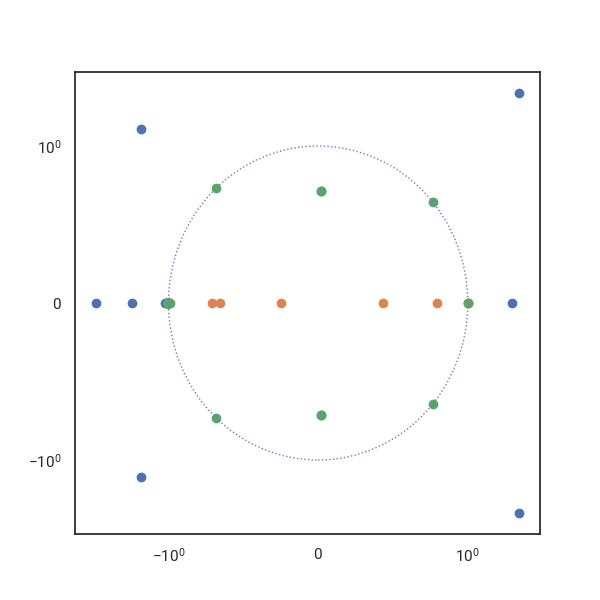}
            \caption{$n=10$}
        \end{subfigure}
        \hfill
        \begin{subfigure}{0.24\linewidth}
            \includegraphics[width=\linewidth]{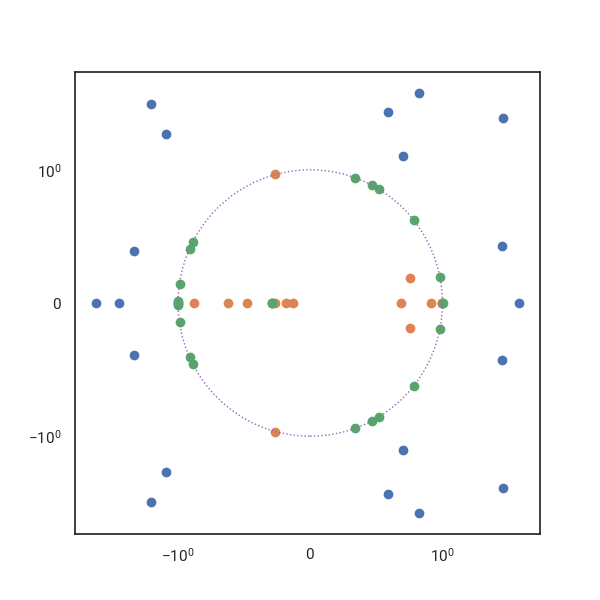}
            \caption{$n=20$}
        \end{subfigure}
        \hfill
        \begin{subfigure}{0.24\linewidth}
            \includegraphics[width=\linewidth]{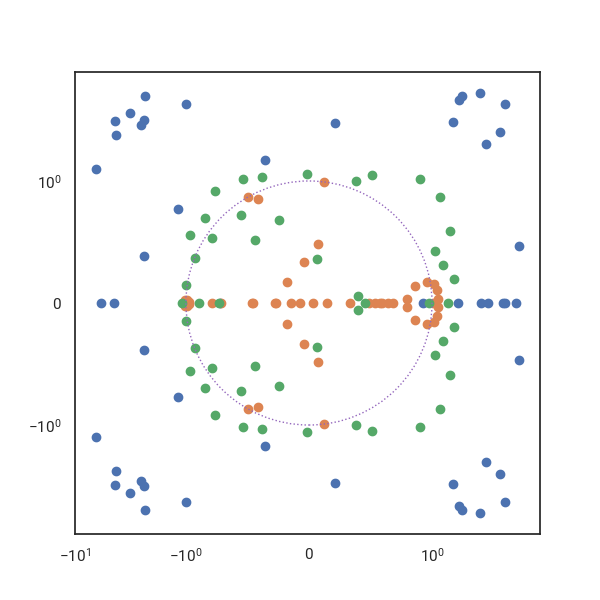}
            \caption{$n=50$}
        \end{subfigure}
        \hfill
        \begin{subfigure}{0.24\linewidth}
            \includegraphics[width=\linewidth]{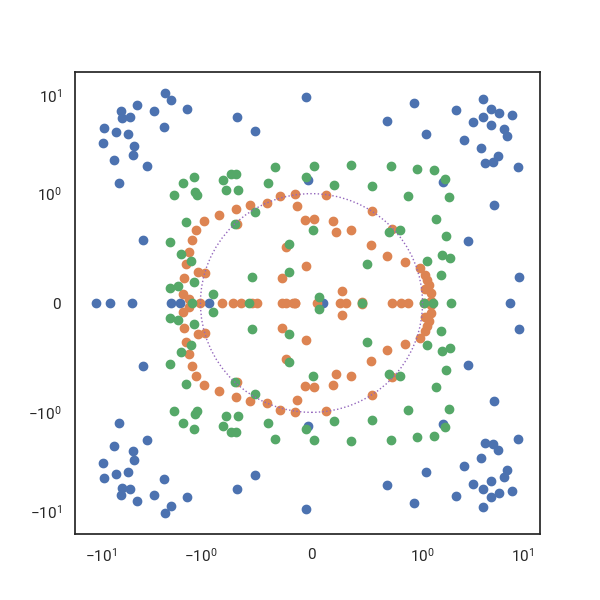}
            \caption{$n=100$}
        \end{subfigure}
        \caption{Case 3}
    \end{subfigure}
    \\
    \begin{subfigure}{\linewidth}
        \begin{subfigure}{0.24\linewidth}
            \includegraphics[width=\linewidth]{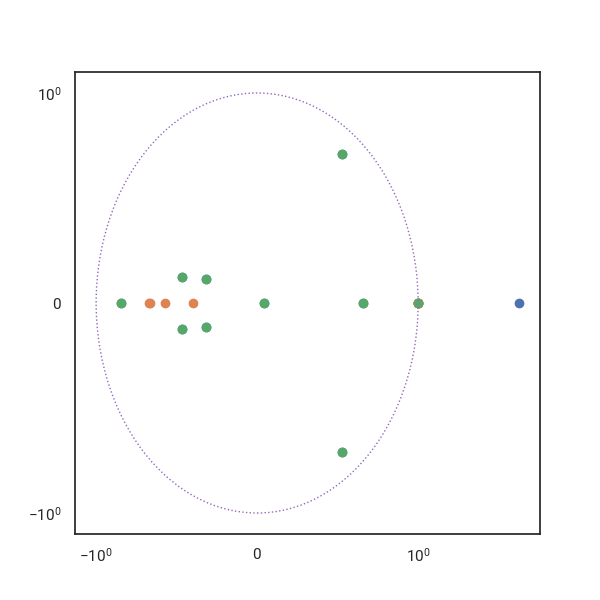}
            \caption{$n=10$}
        \end{subfigure}
        \hfill
        \begin{subfigure}{0.24\linewidth}
            \includegraphics[width=\linewidth]{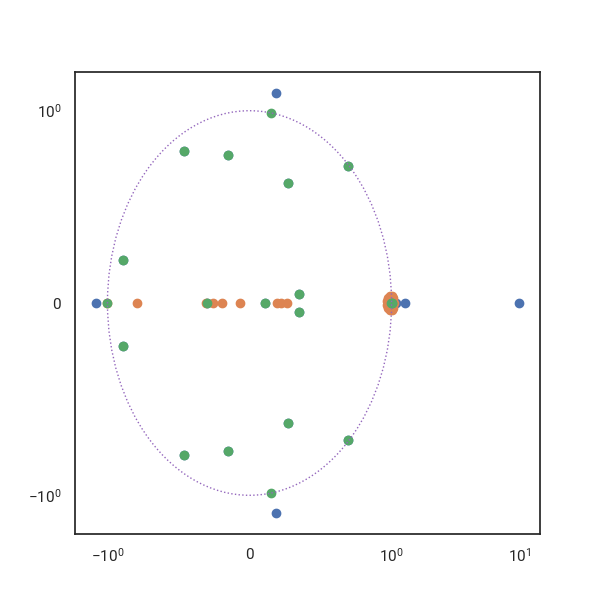}
            \caption{$n=20$}
        \end{subfigure}
        \hfill
        \begin{subfigure}{0.24\linewidth}
            \includegraphics[width=\linewidth]{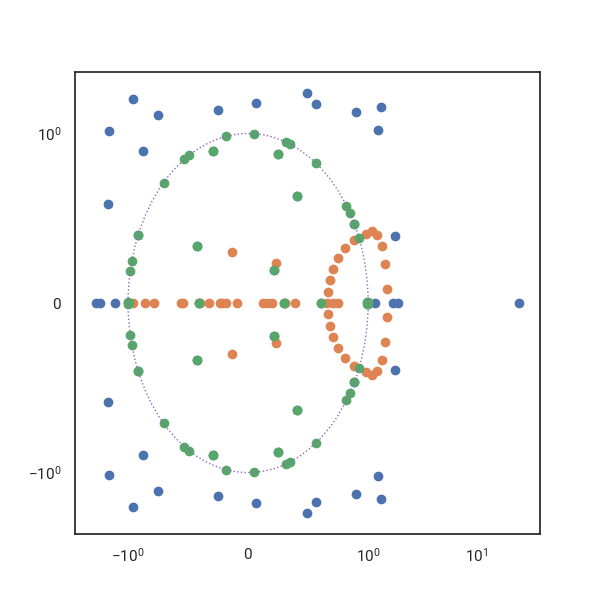}
            \caption{$n=50$}
        \end{subfigure}
        \hfill
        \begin{subfigure}{0.24\linewidth}
            \includegraphics[width=\linewidth]{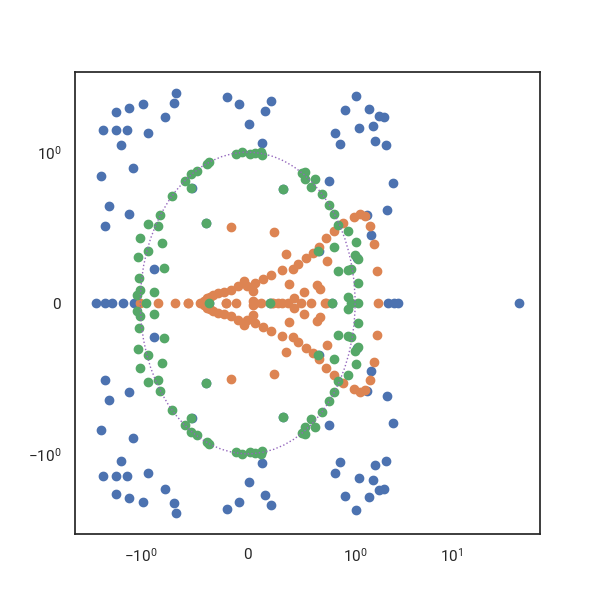}
            \caption{$n=100$}
        \end{subfigure}
        \caption{Case 4}
    \end{subfigure}
    \caption{Eigenvalues of the reference matrix (blue), full projection (orange), and truncated projection (green) for the cases evaluated in the scheme comparison (symlog axes with unitary linear threshold).}
    \label{fig:projection_eigenvalues}
\end{figure}

The experiments in this appendix, as well as those yielding the values reported in \Cref{tb:projection}, are executed on an isolated HPC instance with 4 Xeon Gold 6230 logical cores and 8 GiB of RAM. \Cref{tb:projection_extended} reports the same metrics as \Cref{tb:projection} for the 4 real-matrix cases originally considered in \citet[Section 7.4]{noferini2021nearest}, which were in turn taken from \citet{gillis2017computing,gillis2019approximating}:
\begin{enumerate}
    \item $A_{i,j} = 2 \quad \forall (i,j) \in \{1, \dots, n\}^2$
    \item $A_{i,j} =
              \begin{cases}
                  -1 \  & i-j = -1 ,                 \\
                  -1 \  & j-i \in \{ 0, 1, 2, 3 \} , \\
                  0 \   & \mathrm{otherwise}.
              \end{cases}$
    \item Random with independent entries in $\mathcal{N}(0, 1)$.
    \item Random with independent entries in $\mathcal{U}([0, 1])$.
\end{enumerate}
The resulting eigenvalues are shown in the complex plane in \Cref{fig:projection_eigenvalues}.

Following the setup from \citet{noferini2021nearest}, the maximum number of iterations and execution time were set to $1000$ and $1000$ seconds, respectively. However, the iteration limit only controls the maximum number of outer iterations, meaning that the inner iteration count can be much higher. Moreover, the time-limit check is only performed once per outer iteration, explaining why some of the execution times are much longer than $1000$ seconds.

The most noticeable pattern in \Cref{tb:projection_extended} is the difference in execution time, which is consistent across cases. Even without running the algorithm, one would expect the truncated method to be faster since, as pointed out in \Cref{subsec:schur_ss_trunc}, \Cref{alg:schur_proj} is equivalent to running a single inner iteration of the optimization problem. However, the actual gap turns out to be 5 orders of magnitude at best, which makes it plainly unfeasible as a projection method during training.

It is worth pointing out that the timing results in this appendix are vastly different from those in \citet{noferini2021nearest}. This is because their experiments were carried out in Matlab using the \textit{Manopt}~\cite{manopt} toolbox, whereas the experiments in this paper used a re-implementation of their codebase in Python using the \textit{Pymanopt}~\cite{pymanopt} module along with its JAX interface. This was necessary to simulate the algorithm's performance as a component within a Python \gls{ml}-model training pipeline, but \textit{Pymanopt} currently lags behind the Matlab package in terms of performance optimizations (e.g., no result caching), severely impacting its execution time.

\section{GPU Implementation} \label[appendix]{app:gpu}

\begin{algorithm}[tb!]
    \caption{Reduction to Hessenberg form via Householder reflections~\citep[Section 4.3]{arbenz2016numerical}.}
    \label{alg:hessenberg}

    \KwIn{Matrix $\bm{A} \in \mathbb{R}^{n \times n}$}
    \KwOut{Matrices $\bm{H} \in \mathbb{R}^{n \times n}$ Hessenberg and $\bm{U} \in O(n)$ s.t. $\bm{A} = \bm{U} \bm{H} \bm{U}^\intercal$}

    \Begin{
        Initialize $\bm{H}, \bm{U} \gets \bm{A}, \bm{I}$ \;
        \For{$k \gets 1$ to $n-2$}{
            Generate the Householder reflector $\bm{P_k} \triangleq \bm{I} - 2 \bm{u_k} \bm{u_k}^\intercal$ for the $k$-th column of $\bm{H}$ \;
            $\bm{H}_{k+1:n, k:n} \gets \bm{H}_{k+1:n, k:n} - 2 \bm{u_k} \left( \bm{u_k}^\intercal \bm{H}_{k+1:n, k:n} \right)$ \tcp*{Apply $\bm{P_k}$ from the left}
            $\bm{H}_{1:n, k+1:n} \gets \bm{H}_{1:n, k+1:n} - 2 \left( \bm{H}_{1:n, k+1:n} \bm{u_k} \right) \bm{u_k}^\intercal$ \tcp*{Apply $\bm{P_k}$ from the right}
        }

        \For{$k \gets n-2$ downto $1$}{
            $\bm{U}_{k+1:n,k+1:n} \gets \bm{U}_{k+1:n,k+1:n} - 2 \bm{u_k} \left( \bm{u_k}^\intercal \bm{U}_{k+1:n,k+1:n} \right)$ \tcp*{Update $\bm{U} \gets \bm{P_k} \bm{U}$}
        }
    }
\end{algorithm}

\begin{algorithm}[tb!]
    \caption{The Francis (explicit) double-step QR algorithm~\citep[Section 4.5]{arbenz2016numerical}.}
    \label{alg:francis}

    \KwIn{Matrix $\bm{H} \in \mathbb{R}^{n \times n}$ upper Hessenberg, $\epsilon \ge 0$ relative tolerance}
    \KwOut{Matrices $\bm{T} \in \mathbb{R}^{n \times n}$ quasi-triangular and $\bm{Z} \in O(n)$ s.t. $\bm{H} = \bm{Z} \bm{H} \bm{Z}^\intercal$}

    \Begin{
        Initialize $\bm{T} \gets \bm{H}, \bm{Z} \gets \bm{I}$ \;
        $p \gets n$ \tcp*{$p$ indicates the \textit{active} matrix size}
        \While{$p > 2$}{
            $\bm{G} \gets \bm{T}_{p-1:p,p-1:p}$ \;
            $s, t \gets \mathrm{tr}(\bm{g}), \mathrm{det}(\bm{G})$ \;
            $\bm{M} \gets \bm{T}^2 - s \bm{T} + t \bm{I}$ \;
            $\bm{Q}, \bm{R} \gets \mathrm{QR}(\bm{M})$ \tcp*{QR factorization of $\bm{M}$}
            $\bm{T} \gets \bm{Q}^\intercal \bm{T} \bm{Q}$ \;
            $\bm{Z} \gets \bm{Z} \bm{Q}$ \;

            \BlankLine
            \tcp{Check for deflation}
            \uIf(\tcp*[f]{$1 \times 1$ block found}){$|\bm{T}_{p,p-1}| < \epsilon (|\bm{T}_{p-1,p-1}| + |\bm{T}_{p,p}|)$}{
                $\bm{T}_{p,p-1}, p \gets 0, p-1$
            }
            \ElseIf(\tcp*[f]{$2 \times 2$ block found}){$|\bm{T}_{p-1,p-2}| < \epsilon (|\bm{T}_{p-2,p-2}| + |\bm{T}_{p-1,p-1}|)$}{
                $\bm{T}_{p-1,p-2} \gets 0$; $p \gets p-2$
            }
        }
    }
\end{algorithm}

As of version \texttt{0.10.0}, the JAX library only implements the Schur decomposition on CPU. Moreover, we set ourselves to implement the necessary components in a (mostly) backend-agnostic way, so that it could be in Keras regardless of the chosen backend library (PyTorch, JAX, \dots). Thus, based on the lecture notes in \citet[Chapter 4]{arbenz2016numerical}, \Cref{alg:hessenberg,alg:francis} were implemented using the Keras OPS API. It is worth noting that the original notes recommend using the implicit Q theorem to reduce the computational overhead introduced by the QR factorization, but this optimization is targeted towards a CPU implementation and turned out detrimental to the algorithm's speed on the GPU during early implementations.

Some of the most notable JIT-compilable optimizations were the following:
\begin{itemize}
    \item The final reduction for $\bm{U}$ in \Cref{alg:hessenberg} was replaced by an associative scan multiplying all reflectors together.
    \item The while loop in \Cref{alg:francis} is replaced with a fixed-length for loop executing null iterations (i.e., without modifying the result) once $p \le 2$.
    \item Following LAPACKS's \texttt{SLAHQR} subroutine implementation, the convergence criterion in \Cref{alg:francis} is replaced by the one proposed in \citet{ahues1997new}; i.e., $$|\bm{X}_{2,1}| |\bm{X}_{1,2}| \le \epsilon |\bm{X}_{2,2}| |\bm{X}_{2,2} - \bm{X}_{1,1}|,$$ where $\bm{X} \in \mathbb{R}^{2 \times 2}$ takes the value of $\bm{T}_{p-1:p,p-1:p}$ and $\bm{T}_{p-2:p-1,p-2:p-1}$ for the first and second deflation checks, respectively.
    \item Upper off-triangularity is explicitly enforced at every iteration to account for numerical error introduced by the matrix products.
    \item The $p$ steps, representing the block-diagonal pattern, are stored and returned as an n-dimensional vector $\bm{b} \in \{0,1,2\}^n$, where a null value indicates that the corresponding element of $\bm{T}$'s diagonal is the second diagonal element of a $2 \times 2$ block.
\end{itemize}

\begin{figure}[tb!]
    \centering
    \begin{subfigure}{0.49\textwidth}
        \includegraphics[width=\textwidth]{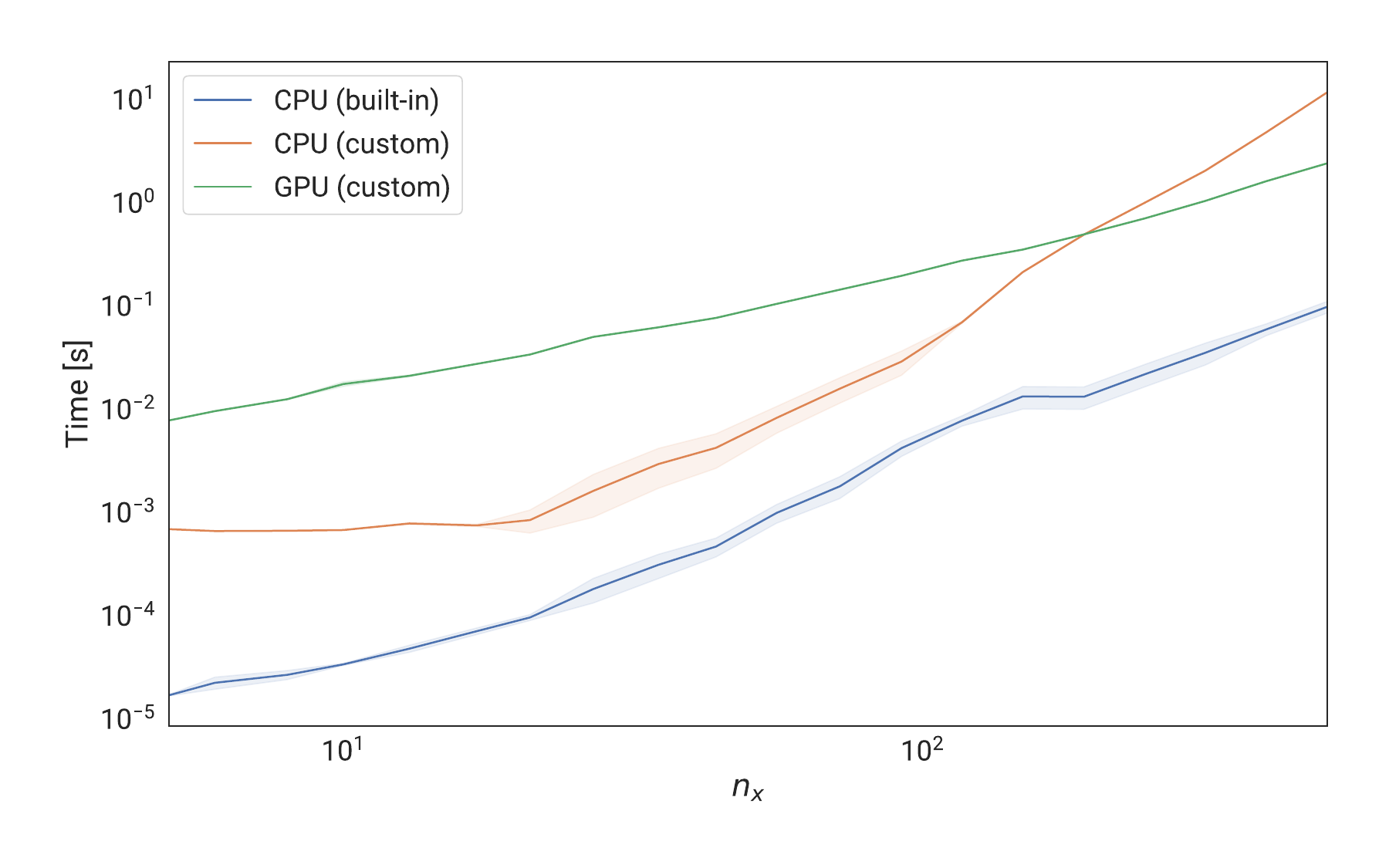}
        \caption{Execution time per implementation.}
        \label{subfig:gpu_timings}
    \end{subfigure}
    \hfill
    \begin{subfigure}{0.49\textwidth}
        \includegraphics[width=\textwidth]{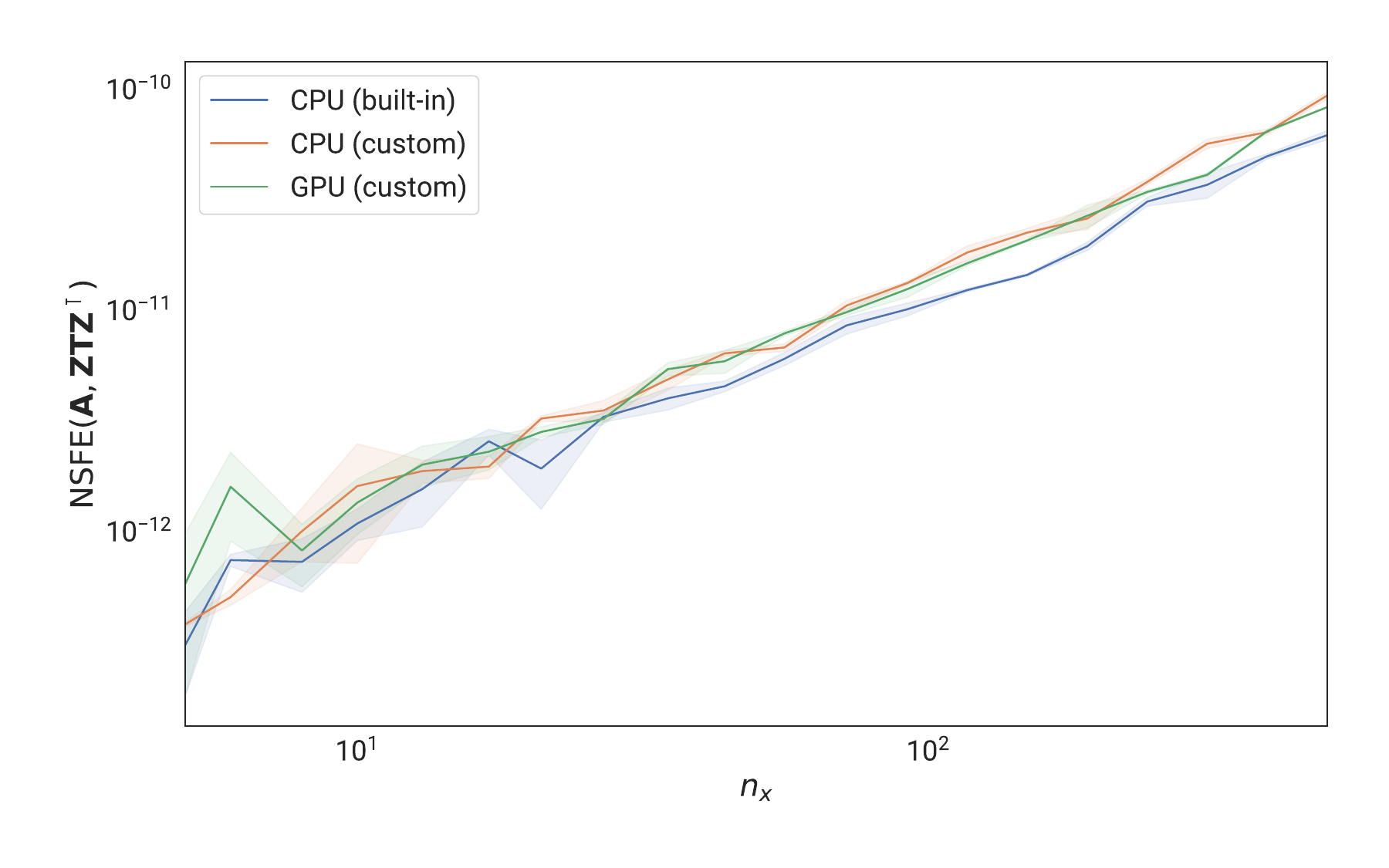}
        \caption{Reconstruction error per implementation.}
        \label{subfig:gpu_errors}
    \end{subfigure}
    \\
    \begin{subfigure}{0.49\textwidth}
        \includegraphics[width=\textwidth]{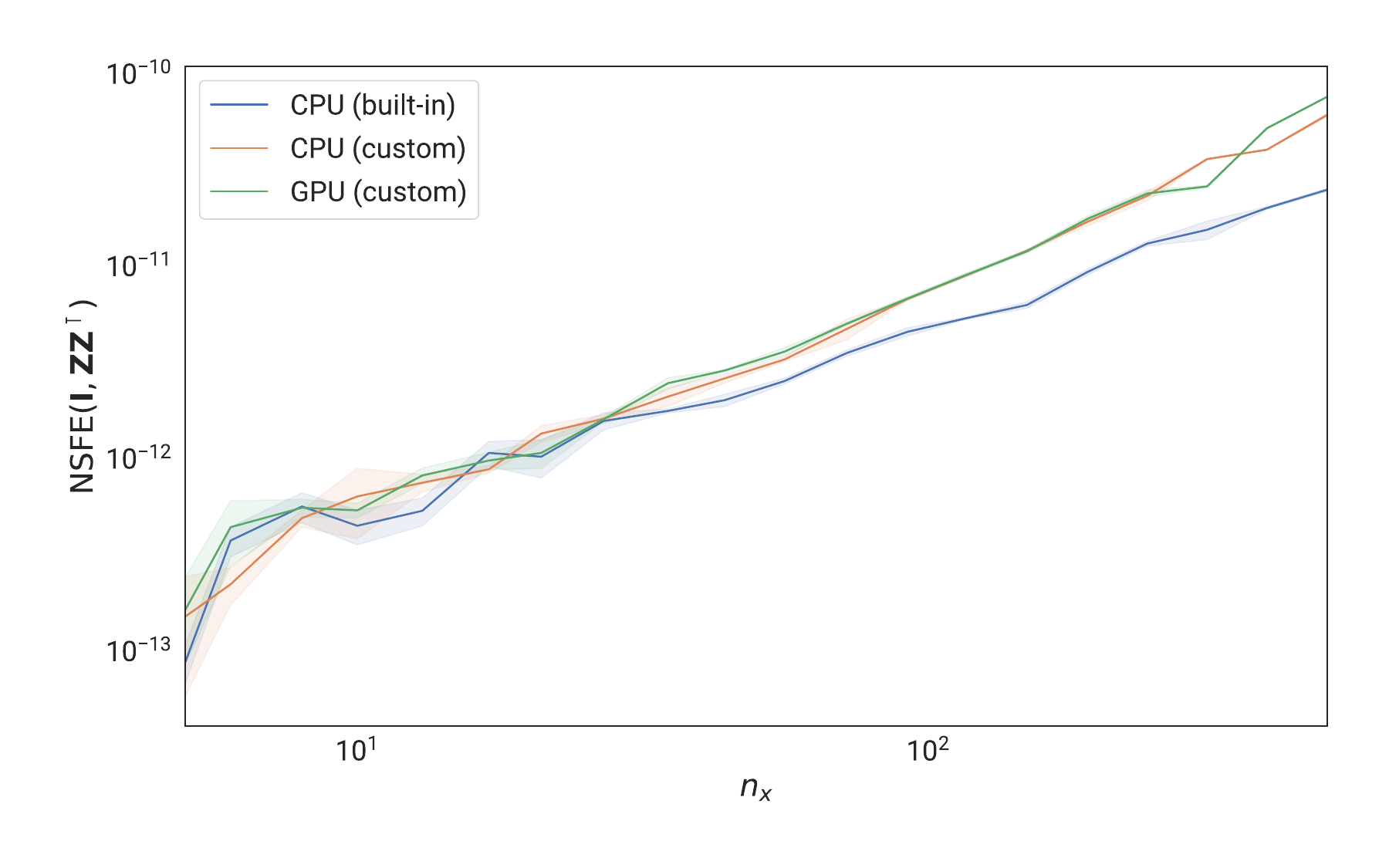}
        \caption{Orthogonality error per implementation.}
        \label{subfig:gpu_orthogonality}
    \end{subfigure}
    \hfill
    \begin{subfigure}{0.49\textwidth}
        \includegraphics[width=\textwidth]{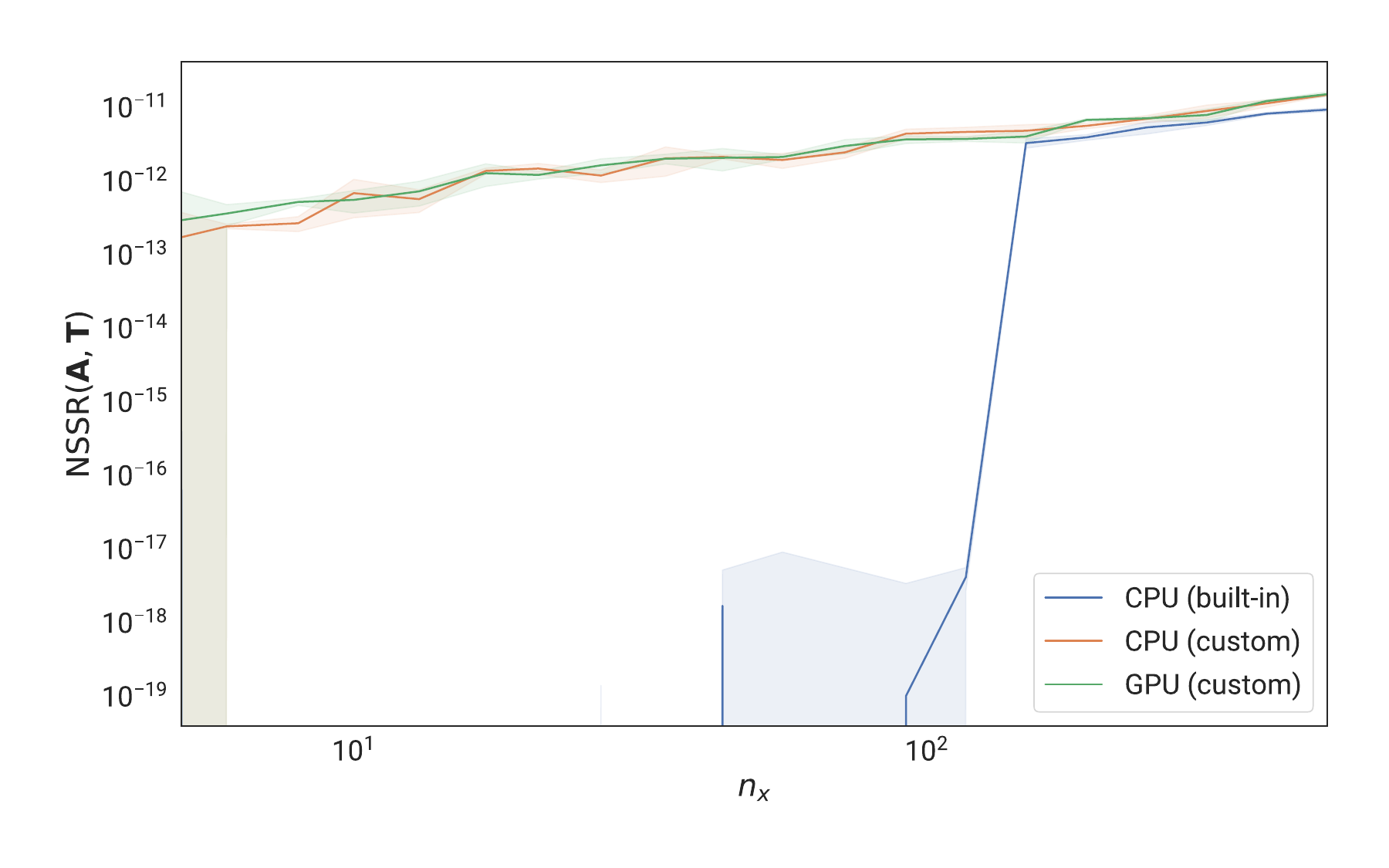}
        \caption{Spectral error per implementation.}
        \label{subfig:gpu_spectral}
    \end{subfigure}
    \caption{GPU benchmark figures.}
    \label{fig:gpu_benchmark}
\end{figure}

The performance of the custom Schur-decomposition implementation, running both on CPU and GPU, compared to the built-in \texttt{schur} method as a function of the matrix size $n_x$ (illustrated in \Cref{fig:orthogonal_benchmark}), is evaluated using 4 metrics:
\begin{enumerate}
    \item The execution time of each method (\Cref{subfig:orthogonal_timings}).
    \item The reconstruction error (\Cref{subfig:orthogonal_distances}), defined as the \gls{nsfe} between the original matrix $\bm{A}$ and the product of its Schur factors $\bm{Z} \bm{T} \bm{Z}^\intercal$. This metric indicates whether or not each implementation yields a valid factorization, regardless of the factors' orthogonality and sparsity pattern.
    \item The orthogonality error (\Cref{subfig:orthogonal_errors}), defined as the \gls{nsfe} between the identity matrix and the symmetric matrix $\bm{Z} \bm{Z}^\intercal$. This metric indicates whether $\bm{Z}$ fulfils the orthogonality requirement.
    \item The spectral error (\Cref{subfig:orthogonal_errors}), defined as the \gls{nssr} between the original matrix $A$ and the quasi-triangular factor $\bm{T}$. By \Cref{theorem:real_schur}, under a correct factorization, this metric should only be bound by numerical error.
\end{enumerate}
This experiment is executed on an isolated HPC instance with 4 Xeon Gold 6230 logical cores, 32 GiB of RAM, and a single Nvidia Volta V100 with 32 GiB of VRAM. Each metric is sampled 5 times per $n_x$ (20 logarithmically spaced values from 5 to 500), reporting the median as the expected value and half of the \gls{iqr} as its uncertainty. For the timings, each sample corresponds to the average time of 20 consecutive runs. Likewise, each error sample is drawn from a random matrix with normally distributed coefficients.

From this experiment, the following conclusions are drawn:
\begin{itemize}
    \item \Cref{subfig:gpu_errors,subfig:gpu_orthogonality} prove that the Schur decomposition is correctly implemented, with all approximation errors being on par with the built-in function. Although \Cref{subfig:gpu_spectral} would at first glance seem to indicate otherwise, the scale of the \gls{nssr} lies below the level of accuracy offered by 32-bit floating-point math. However, the authors recognize that more sophisticated numerical robustness measures must have been put in place for the built-in method to yield this level of spectral accuracy.
    \item Coming as a surprise to absolutely no one, a Python rewrite of a function that is originally implemented as an XLA binding results in lower performance, as illustrated by \Cref{subfig:gpu_timings}. Moreover, as is the case for most GPU algorithm implementations, the advantages of parallelization only outweigh the drawbacks of the slower cores for a sufficiently large array size, which in this case happens at $n_x \gtrapprox 200$. Nevertheless, it is also worth highlighting that the execution-time slope of the GPU implementation is not as steep as that of both CPU implementations, meaning that there is potential for a well-optimized GPU rewrite.
\end{itemize}

Due to the aforementioned reasons, as well as the low batch count and state sizes assessed in \Cref{sec:exp}, experiments are carried out using the CPU and built-in Schur-decomposition implementation.

\section{Nearest Orthogonal Matrix} \label[appendix]{app:orthogonal}

As mentioned near the end of \Cref{subsec:schur_ss_trunc}, it is possible to independently optimize the Schur-decomposition factors of the state matrix to avoid having to compute the decomposition itself after every training batch. However, this introduces a new challenge: efficiently projecting $\bm{Z} \in \mathbb{R}^{n_x \times n_x}$ to its nearest orthogonal matrix $\bm{\hat{Z}}$; i.e.,

\begin{equation} \label{eq:orthogonal_procrustes}
    \bm{\hat{Z}} \triangleq \argmin_{X \in O(n_x)} \left\| \bm{Z} - \bm{X} \right\| .
\end{equation}

The constrained optimization problem in \Cref{eq:orthogonal_procrustes} is a special case of the Orthogonal Procrustes problem~\cite{gower2004procrustes}, a well-studied mathematical problem with closed-form solutions and derived iterative approximations. In this appendix, the following approaches are evaluated:
\begin{enumerate}
    \item The method from \citet{golub1996matrix}, which finds the nearest orthogonal matrix by multiplying together the unitary factors from the \acrlong{svd} $\bm{Z} = \bm{U} \bm{\Sigma} \bm{V}^\intercal$; i.e.
          \begin{equation}
              \bm{\hat{Z}} = \bm{U} \bm{V}^*.
          \end{equation}
    \item The method from \citet{horn1988closed}, which defines the closed form $$\bm{\hat{Z}} = \bm{Z} \left( \bm{Z}^\intercal \bm{Z} \right)^{-1/2},$$
          calculating the inverse square root of the product in parentheses through its eigenvalue decomposition $\left( \bm{Z}^\intercal \bm{Z} \right) = \bm{Q} \bm{\Lambda} \bm{Q}^{-1}$ as
          \begin{equation}
              \left( \bm{Z}^\intercal \bm{Z} \right)^{-1/2} = \bm{Q} \bm{\Lambda}^{-1/2} \bm{Q}^{-1}
              =
              \bm{Q} \begin{bsmallmatrix}
                  \frac{1}{\sqrt{\lambda_1}} & & \\ & \ddots & \\ && \frac{1}{\sqrt{\lambda_{n_x}}}
              \end{bsmallmatrix} \bm{Q}^{-1} .
          \end{equation}
    \item The same method from \citet{horn1988closed}, but instead approximating the inverse square root using the stable iterative scheme from \citet[Section 3]{Sherif1991}; i.e., via the recursion
          \begin{equation} \label{eq:inv_sqrt_recursion}
              \bm{X}^{-1/2} = \lim_{r \rightarrow \infty} \bm{\Xi_r}
              \; s.t. \;
              \begin{cases}
                  \bm{\Xi_{r+1}} = \bm{\Xi_r} \left( \bm{I} + \bm{E}_r \right)   , & \bm{\Xi_0} = \bm{I} ,
                  \\
                  \bm{E_{r+1}} = \bm{E_r}^2 \left( 2 \bm{I} - \bm{S_r}^2 \right) , & \bm{E_0} = \left( \bm{I} - \bm{X} \right) \left( \bm{I} + \bm{X} \right)^{-1} .
              \end{cases}
          \end{equation}
          Since the methods are jit-compiled during training, the stop condition for \Cref{eq:inv_sqrt_recursion} cannot be a function of $\bm{X}$'s coefficients; thus, its performance is evaluated by setting a fixed number of iterations equal to $n_x$.
\end{enumerate}

\begin{figure}[tb!]
    \centering
    \begin{subfigure}{0.75\textwidth}
        \includegraphics[width=\textwidth]{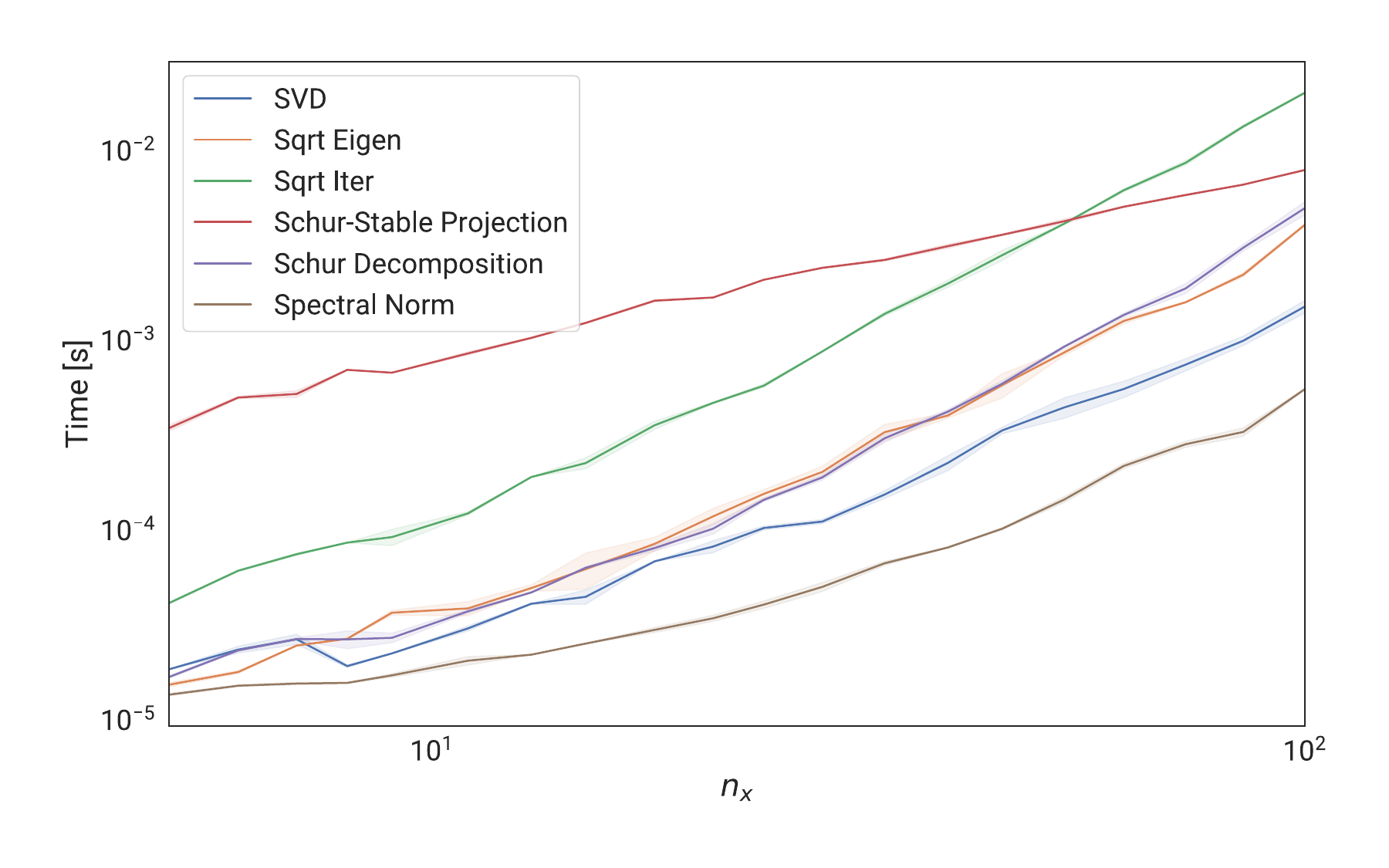}
        \caption{Execution time per method. The Schur-stable projection, Schur decomposition, and spectral norm execution times are included for reference.}
        \label{subfig:orthogonal_timings}
    \end{subfigure}
    \\
    \begin{subfigure}{0.75\textwidth}
        \includegraphics[width=\textwidth]{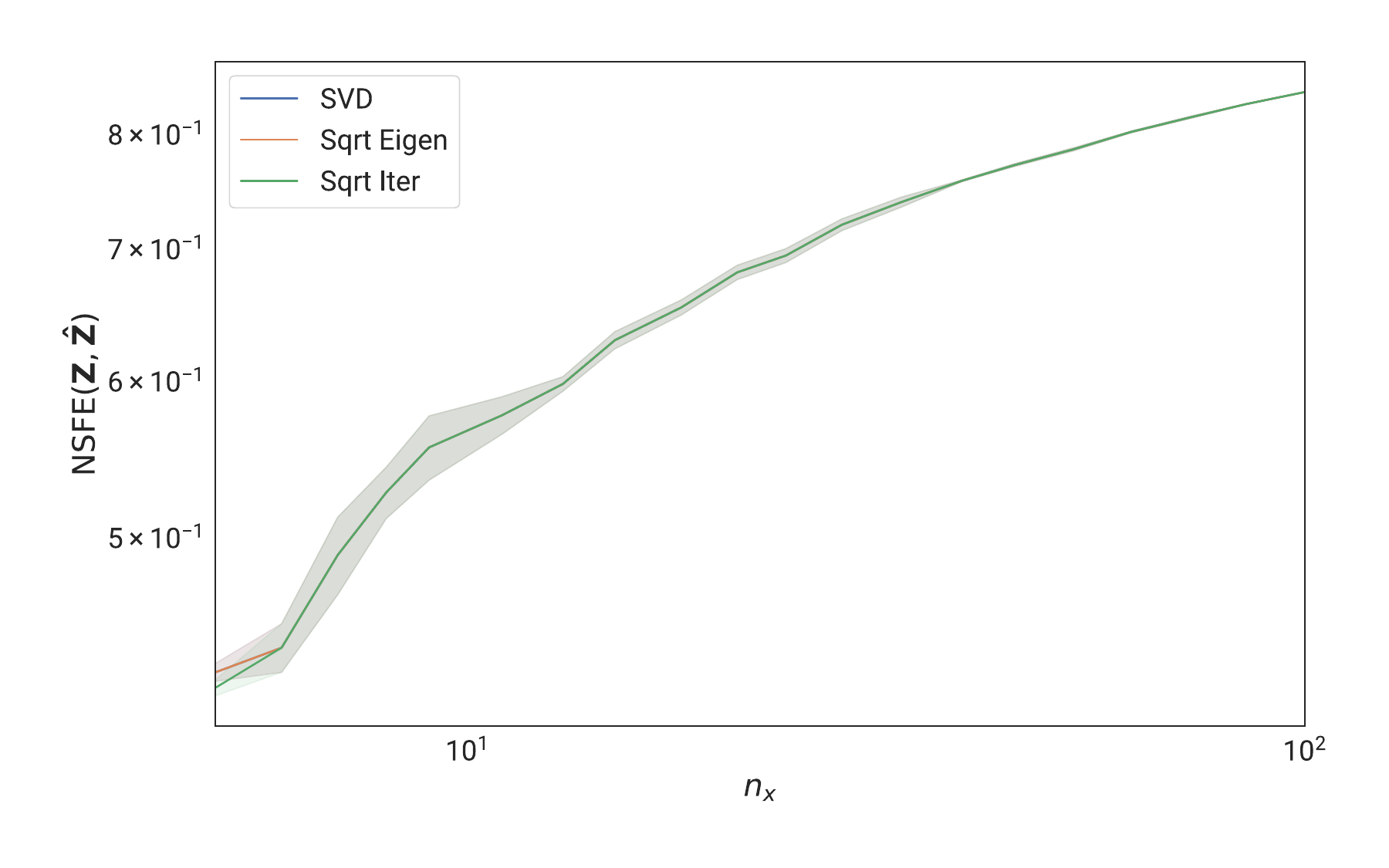}
        \caption{Projection error per method. The high error for the iterative methods at $n_x = 5$ can be attributed to insufficient iterations.}
        \label{subfig:orthogonal_distances}
    \end{subfigure}
    \\
    \begin{subfigure}{0.75\textwidth}
        \includegraphics[width=\textwidth]{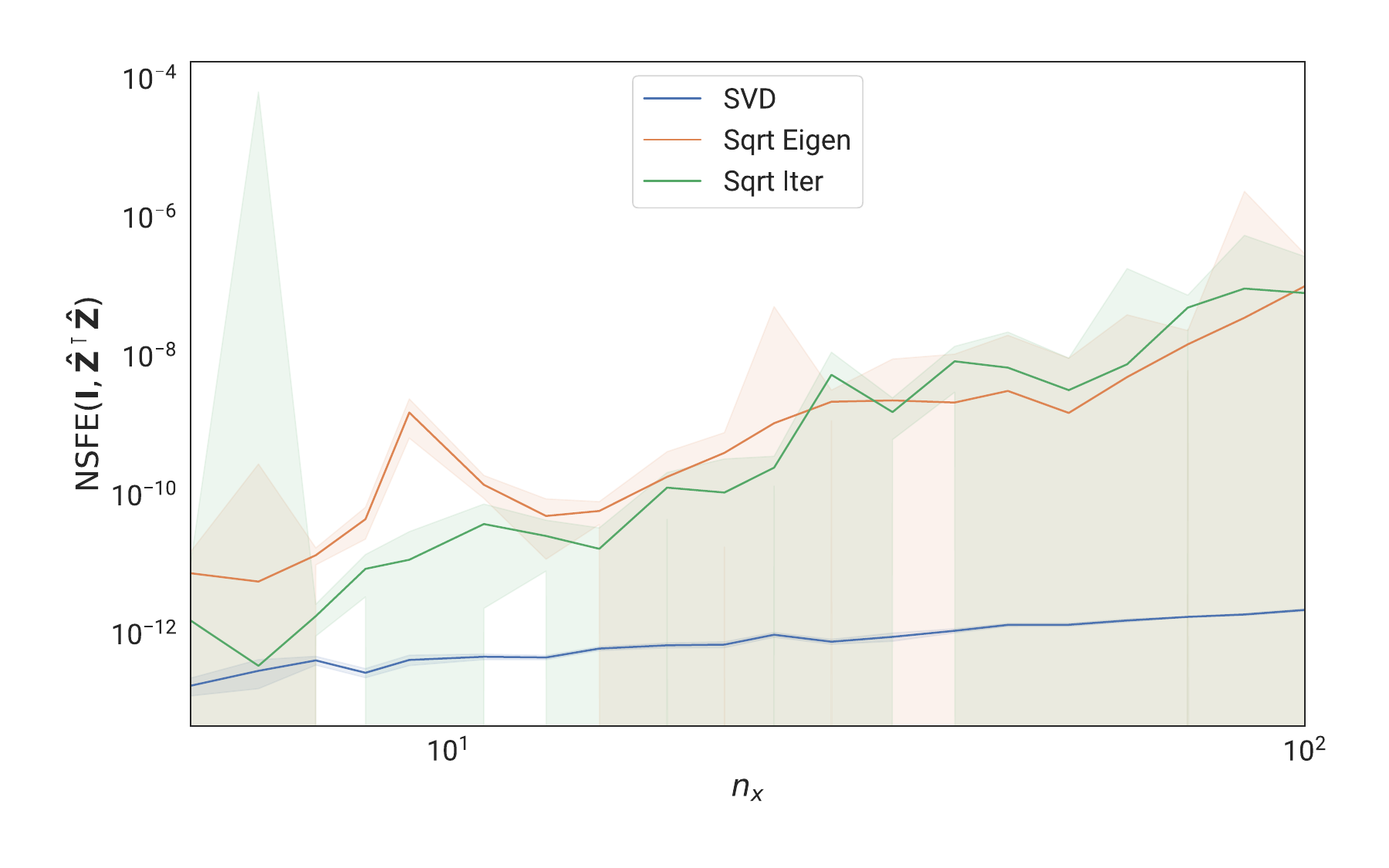}
        \caption{Orthogonality numerical error per method.}
        \label{subfig:orthogonal_errors}
    \end{subfigure}
    \caption{Orthogonal benchmark figures.}
    \label{fig:orthogonal_benchmark}
\end{figure}

The performance of the above methods, illustrated as a function of the matrix size $n_x$ in \Cref{fig:orthogonal_benchmark}, is evaluated using 3 metrics:
\begin{enumerate}
    \item The execution time of each method (\Cref{subfig:orthogonal_timings}). Execution times for the Schur-stable projection, the Schur decomposition, and the spectral norm are also included for scale and relative-cost analysis.
    \item The projection distance (\Cref{subfig:orthogonal_distances}), defined as the \gls{nsfe} between the original matrix $\bm{Z}$ and its projection $\bm{\hat{Z}}$. This metric should only be evaluated relative to other methods, as a null value should only be achievable if $\bm{Z}$ is already orthogonal.
    \item The orthogonality numerical error (\Cref{subfig:orthogonal_errors}), defined as the \gls{nsfe} between the identity matrix and $\bm{\hat{Z}}^\intercal \bm{\hat{Z}}$. Unlike the projection distance, if $\bm{\hat{Z}} \in O(n_x)$, then this metric should be approximately 0 (bounded by numerical error).
\end{enumerate}
This experiment is executed on an interactive HPC instance with 4 Xeon Gold 6230 logical cores and 8 GiB of RAM. Each metric is sampled 5 times per $n_x$ (20 logarithmically spaced values from 5 to 100), reporting the median as the expected value and half of the \gls{iqr} as its uncertainty. For the timings, each sample corresponds to the average time of 100 consecutive runs. Likewise, each error sample is drawn from a random matrix with normally distributed coefficients.

From this experiment, the following conclusions are drawn:
\begin{itemize}
    \item All orthogonal-projection methods in \Cref{subfig:orthogonal_timings} exhibit a similar slope, and can thus be considered to be of equivalent asymptotic complexity (i.e., $\mathcal{O}\left( n_x^3 \right)$). Nevertheless, it is clear that the \gls{svd} method consistently outperforms the other two, although the iterative approximation could be more efficient with a dynamic termination condition. More importantly, the \gls{svd} approach also proves to be faster than the Schur decomposition, supporting the claims regarding increased computational efficiency made in \Cref{subsec:schur_ss_trunc}.
    \item For the $n_x$ values considered in \Cref{sec:exp}, the execution times for the Schur-stable projection are an order of magnitude larger than those of the Schur decomposition and non-iterative nearest-orthogonal projection. That is not to say that their computational cost is negligible, as the asymptotic complexity of \Cref{alg:schur_proj} is $\mathcal{O}(n_x)$ (and can reach $\mathcal{O}(1)$ if the block projections are parallelized). Thus, its execution time is bound to be exceeded by all considered orthogonal-projection methods for a sufficiently large matrix size.
    \item All methods find very similar solutions in terms of projection distance, as illustrated by \Cref{subfig:orthogonal_distances}, but the solution found using the first method is more numerically robust, as demonstrated by \Cref{subfig:orthogonal_errors}. It is worth noting, however, that the expected orthogonality error for all methods is small enough to be acceptable for 32-bit floating-point arithmetic, and would have been a trade-off worth considering if the less precise methods were more computationally efficient. Nevertheless, as mentioned above, this is not the case.
\end{itemize}

For the aforementioned reasons, the method from \citet{golub1996matrix} is selected to constrain the orthogonal factor.

\section{Synthetic-Data Benchmark} \label[appendix]{app:synthetic}

\subsection{Target-System \& Model Parameters} \label[appendix]{subsec:synthetic_description}

\begin{table}[tb!]
    \caption{Experimental parameters for the cases evaluated in the synthetic-data benchmarks.}
    \label{tb:synthetic_setup}
    \centering
    \begin{tblr}{
        colspec={*{6}{c}},
        hspan=minimal,
        cells={valign=m, font=\small},
        cell{1}{1}={r=2}{},
        cell{1}{2}={c=5}{},
        cell{3-Z}{2-Z}={mode=math},
        row{1-2}={font={\small \bfseries}},
        hline{1-3,Z}={},
            }
        Parameter                      & Setup
        \\
                                       & Smaller           & Small             & Original          & Extended          & Large
        \\
        System Order $(n_x, n_u, n_y)$ & 5, 3, 3           & 5, 3, 3           & 5, 3, 3           & 5, 3, 3           & 10, 6, 6
        \\
        No. of Systems                 & 100               & 100               & 50                & 100               & 20
        \\
        Sequences per Partition        & 1                 & 1                 & 1                 & 1                 & 8
        \\
        Samples per Sequence           & 64                & 128               & 300               & 1024              & 512
        \\
        Output Noise Scale $(\sigma)$  & 0.01              & 0.01              & 0.25              & 0.01              & 0.01
        \\
        $|\lambda|$ upper bound        & 0.99              & 0.99              & 0.99              & 0.95              & 0.90
        \\
        Epoch Limit                    & \expnumber{5}{4}  & \expnumber{5}{4}  & \expnumber{5}{4}  & \expnumber{5}{4}  & \expnumber{1}{5}
        \\
        Patience                       & \expnumber{1}{4}  & \expnumber{1}{4}  & \infty            & \expnumber{1}{4}  & \expnumber{1}{4}
        \\
        Optimizer                      & \text{AdamW}      & \text{AdamW}      & \text{AdamW}      & \text{AdamW}      & \text{AdamW}
        \\
        Learning Rate                  & \expnumber{1}{-3} & \expnumber{1}{-3} & \expnumber{1}{-3} & \expnumber{1}{-3} & \expnumber{1}{-4}
    \end{tblr}
\end{table}

\Cref{tb:synthetic_setup} contains the experimental parameters used for the synthetic-data benchmarks. The following modifications with respect to the original experimental setup are highlighted:
\begin{itemize}
    \item The output white-noise scale was reduced from $0.25$ to $0.01$. This was due to the variability of the error metrics being too large, which was not an issue in the original paper, as the performance gap between methods was more pronounced. Despite this change,
    \item For the sake of numerical stability with 32-bit floating-point arithmetic, the eigenvalue-magnitude upper bound was set to $0.99$ for the \textit{original} and \textit{small(er)} settings. Furthermore, tighter bounds equal to $0.95$ and $0.9$ were necessary due to the increased sequence length and system size of the \textit{extended} and \textit{large} setups, respectively.
    \item The number of systems is doubled for all experiments of equal system size to further reduce the uncertainty via increased sample size. However, this was not necessary for the \textit{large} setup, as the variability of the results was already low enough with 20 random systems.
\end{itemize}

Other parameter changes follow a similar line of reasoning. Each experiment is carried out on an interactive HPC instance with 4 Xeon Gold 6230 logical cores and 6 GiB of RAM.

\subsection{Supplementary Results} \label[appendix]{subsec:synthetic_supplementary}

\begin{figure}[tb!]
    \centering
    \begin{subfigure}{0.49\textwidth}
        \includegraphics[width=\linewidth]{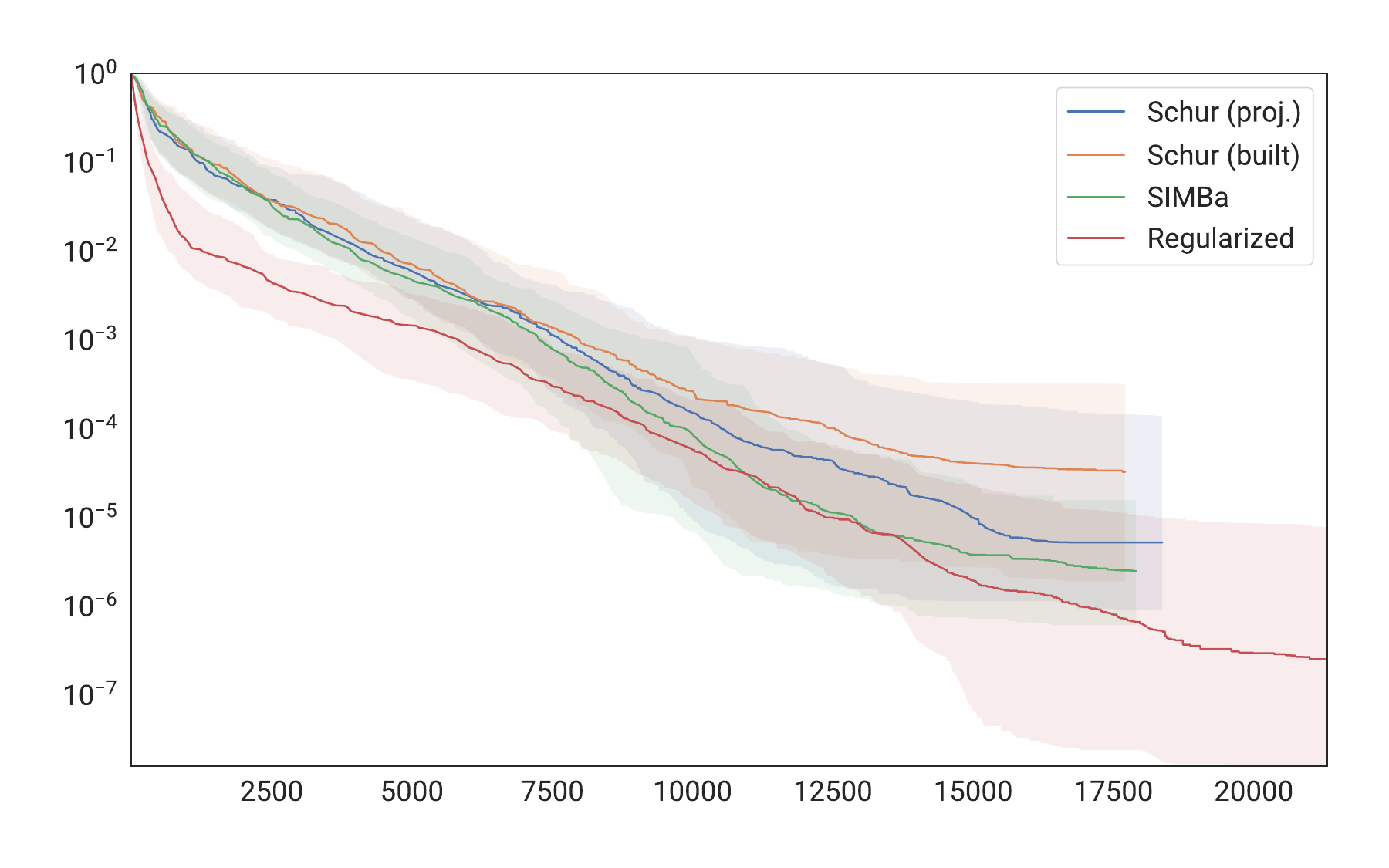}
        \caption{Smaller setup.}
    \end{subfigure}
    \hfill
    \begin{subfigure}{0.49\textwidth}
        \includegraphics[width=\linewidth]{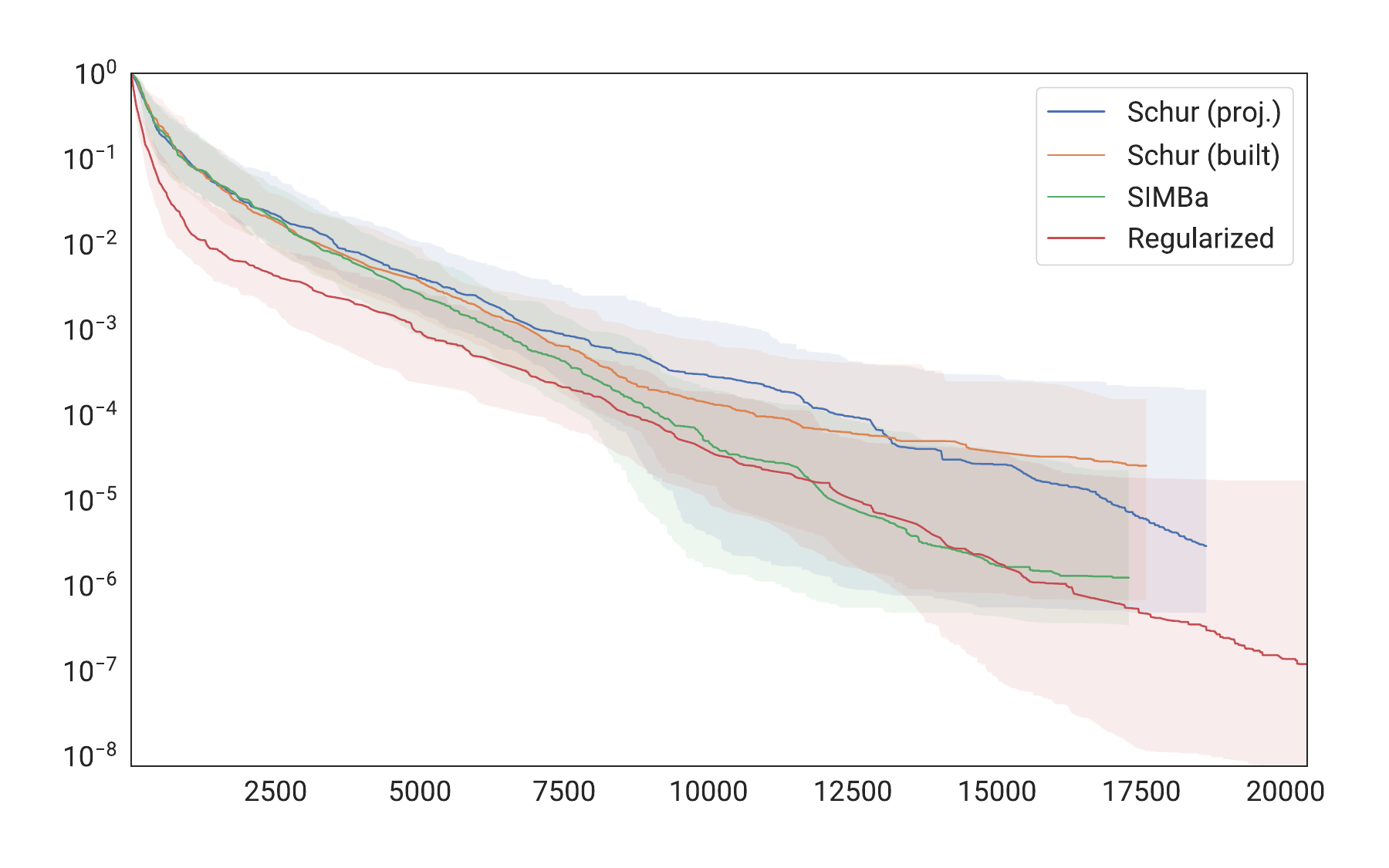}
        \caption{Small setup.}
    \end{subfigure}
    \\
    \begin{subfigure}{0.49\textwidth}
        \includegraphics[width=\linewidth]{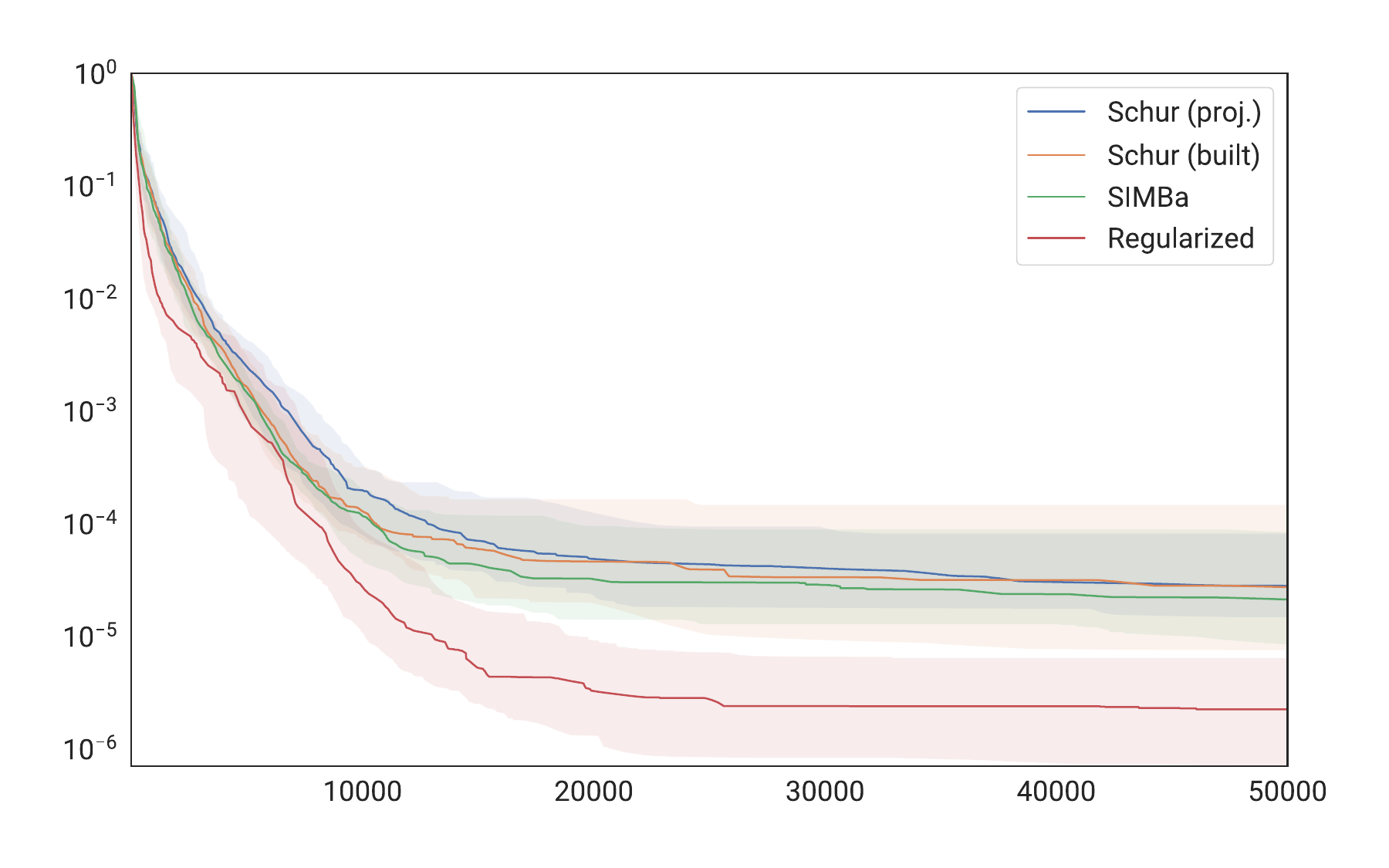}
        \caption{Original setup.}
    \end{subfigure}
    \hfill
    \begin{subfigure}{0.49\textwidth}
        \includegraphics[width=\linewidth]{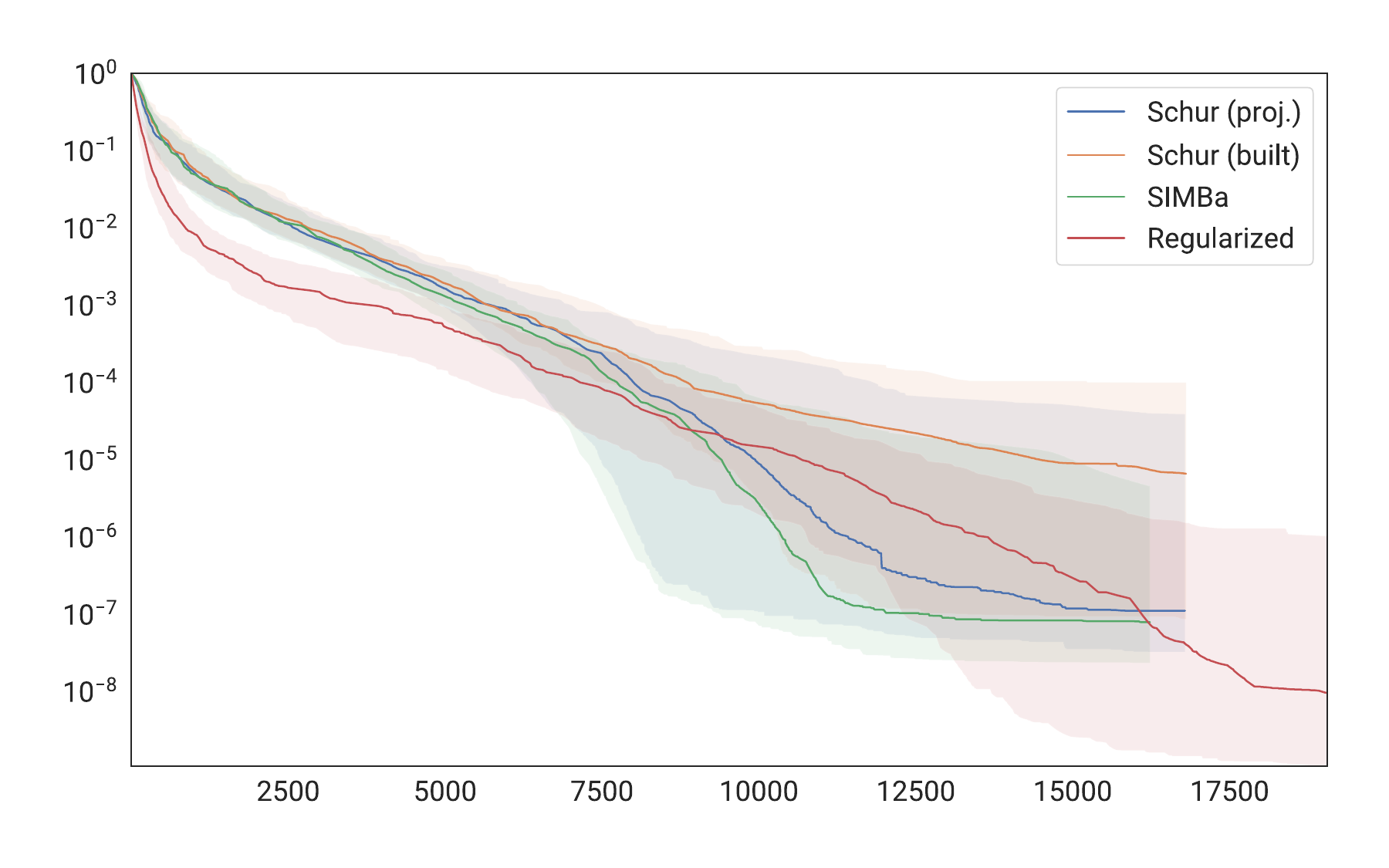}
        \caption{Extended setup.}
    \end{subfigure}
    \\
    \begin{subfigure}{0.49\textwidth}
        \includegraphics[width=\linewidth]{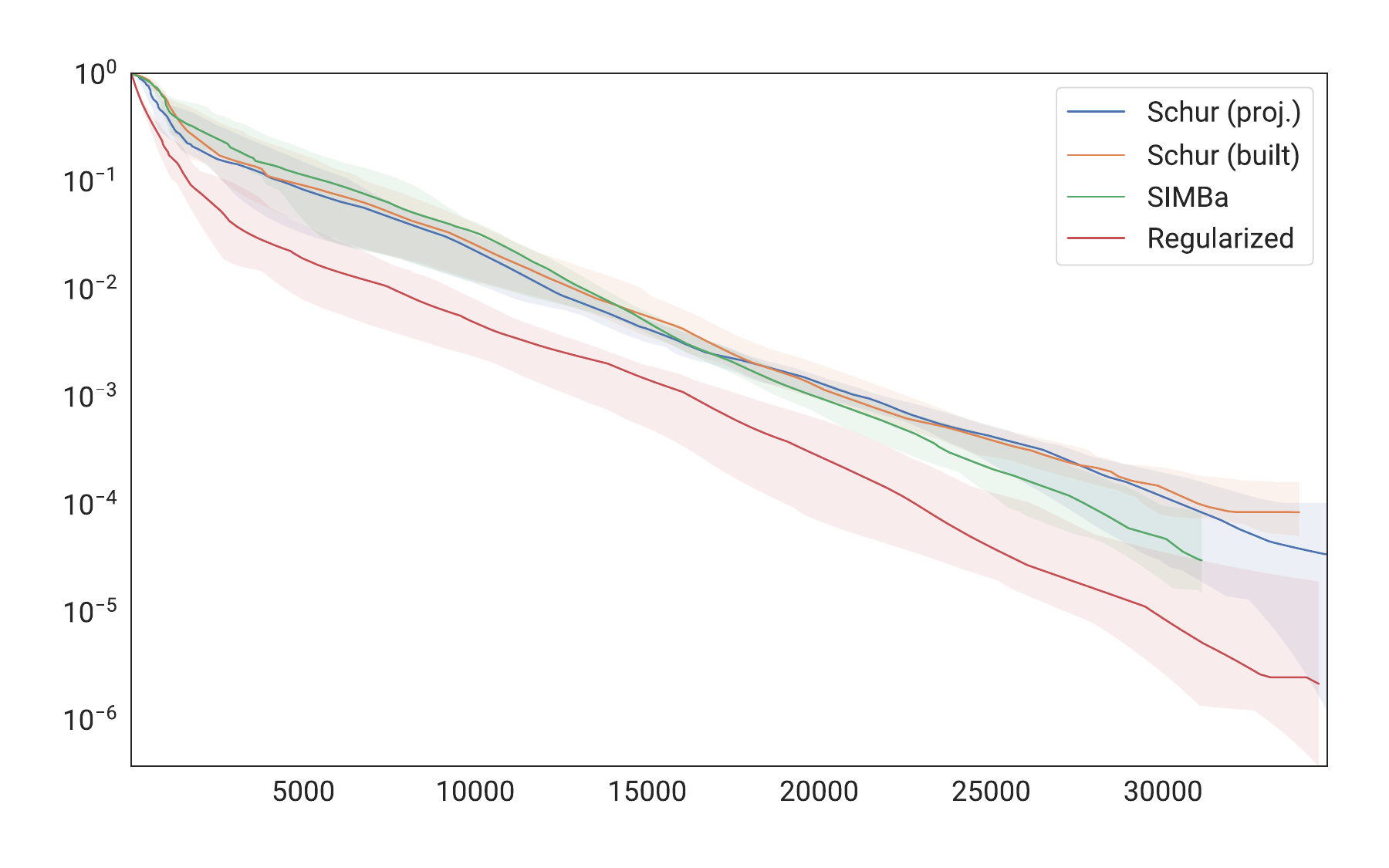}
        \caption{Large setup.}
    \end{subfigure}
    \caption{Effective (cumulative minimum) validation-loss history (normalized w.r.t. the initial value) median for the Extended setup. The training length in epochs corresponds to the median number of iterations, whereas the shaded areas' lower and upper bounds correspond to the 25\textsuperscript{th} and 75\textsuperscript{th} percentiles per epoch, respectively.}
    \label{fig:synthetic_history}
\end{figure}

\begin{figure}[tb!]
    \centering
    \begin{subfigure}{0.49\linewidth}
        \begin{subfigure}{0.49\linewidth}
            \includegraphics[width=\linewidth]{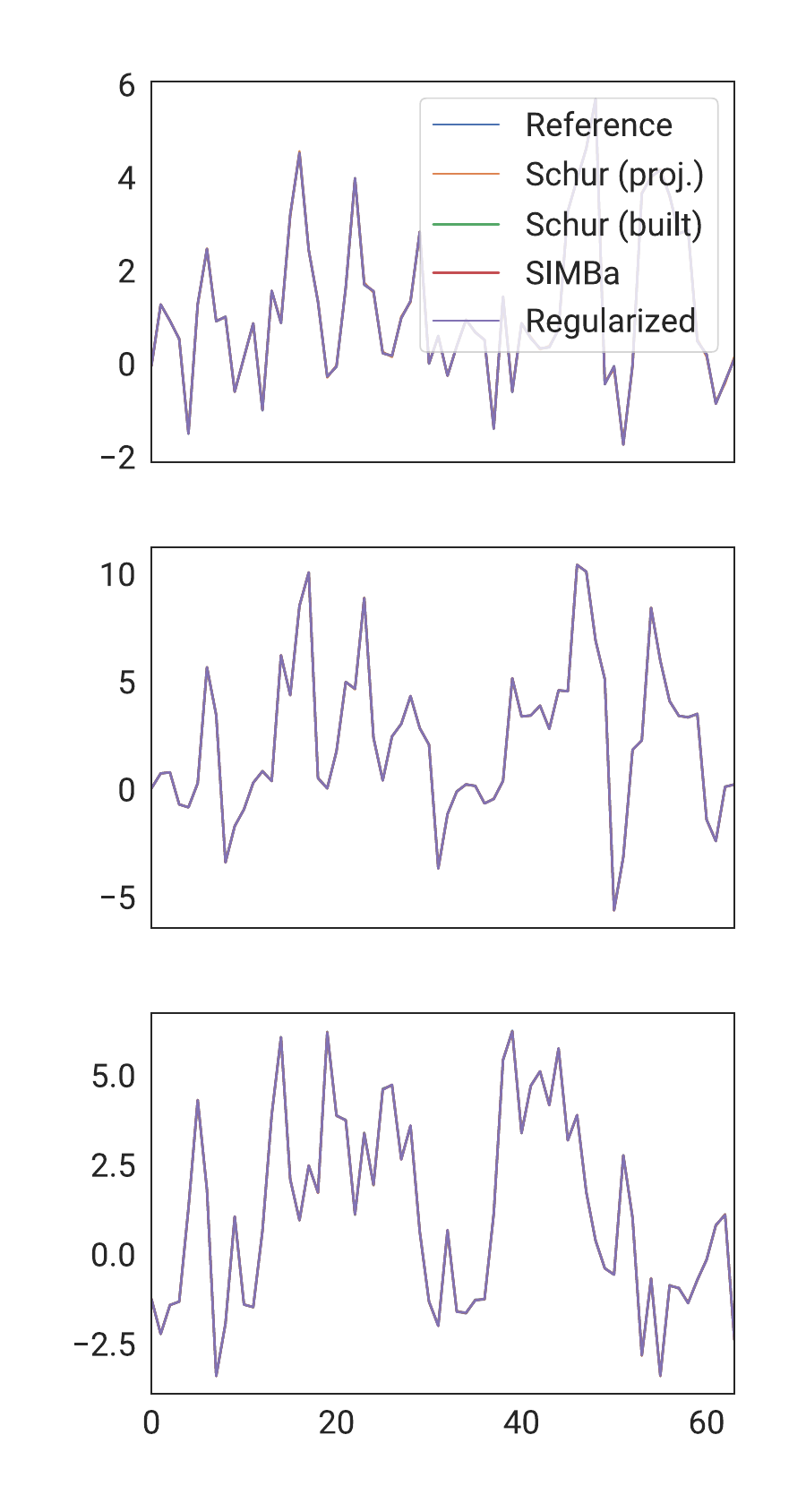}
            \caption{Smaller setup (best).}
        \end{subfigure}
        \hfill
        \begin{subfigure}{0.49\linewidth}
            \includegraphics[width=\linewidth]{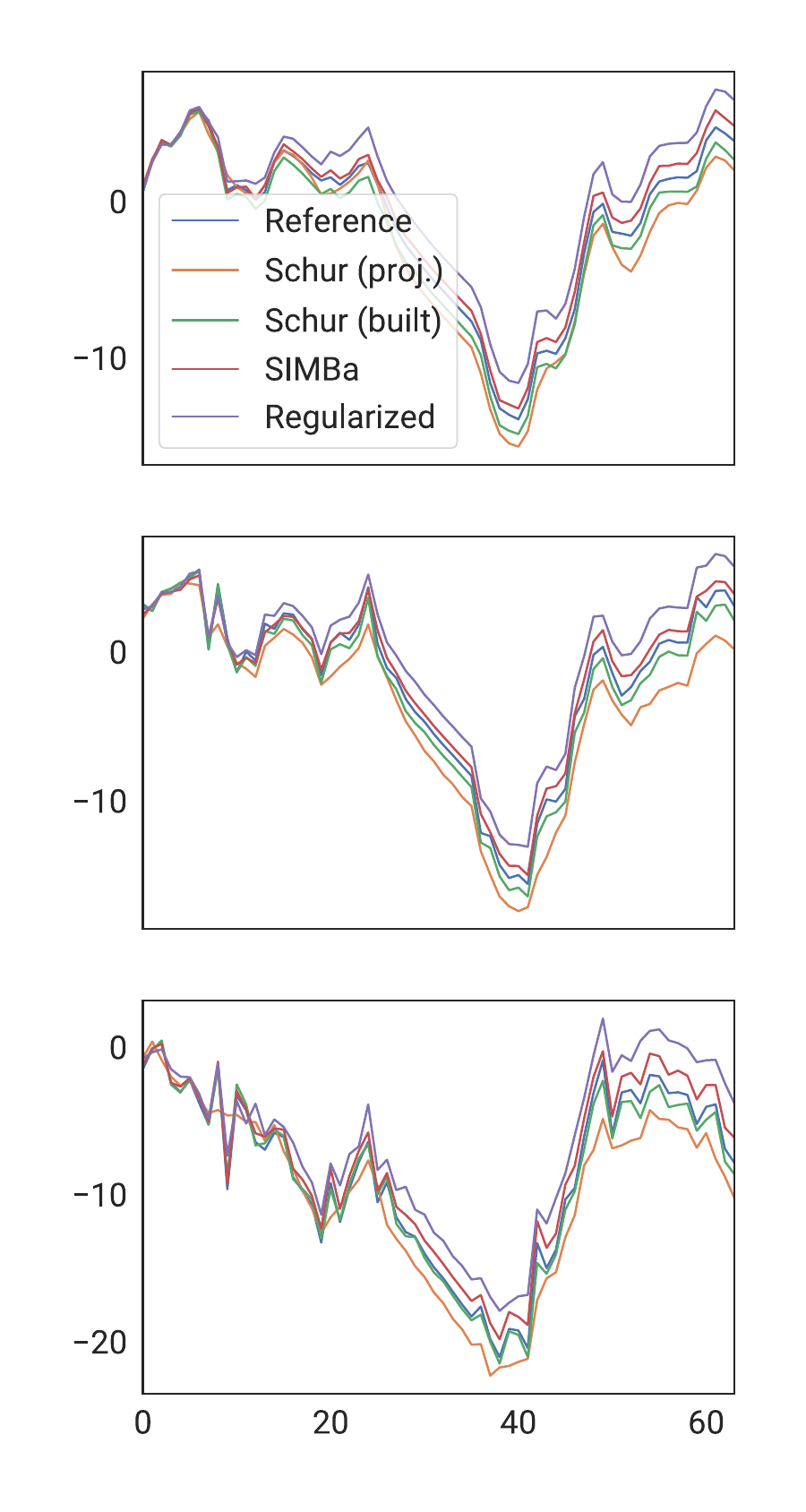}
            \caption{Smaller setup (worst).}
        \end{subfigure}
    \end{subfigure}
    \hfill
    \begin{subfigure}{0.49\linewidth}
        \begin{subfigure}{0.49\linewidth}
            \includegraphics[width=\linewidth]{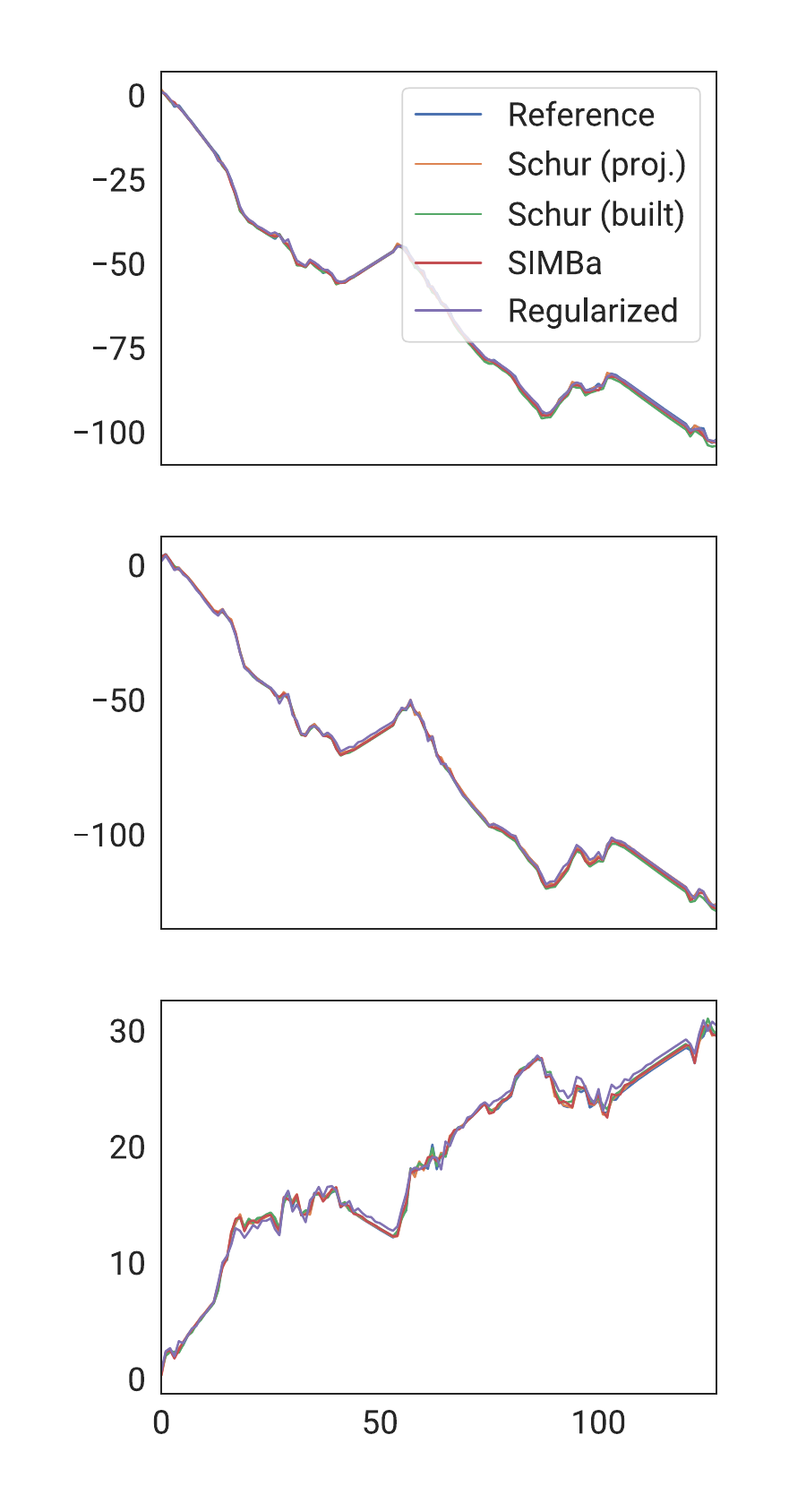}
            \caption{Small setup (best).}
        \end{subfigure}
        \hfill
        \begin{subfigure}{0.49\linewidth}
            \includegraphics[width=\linewidth]{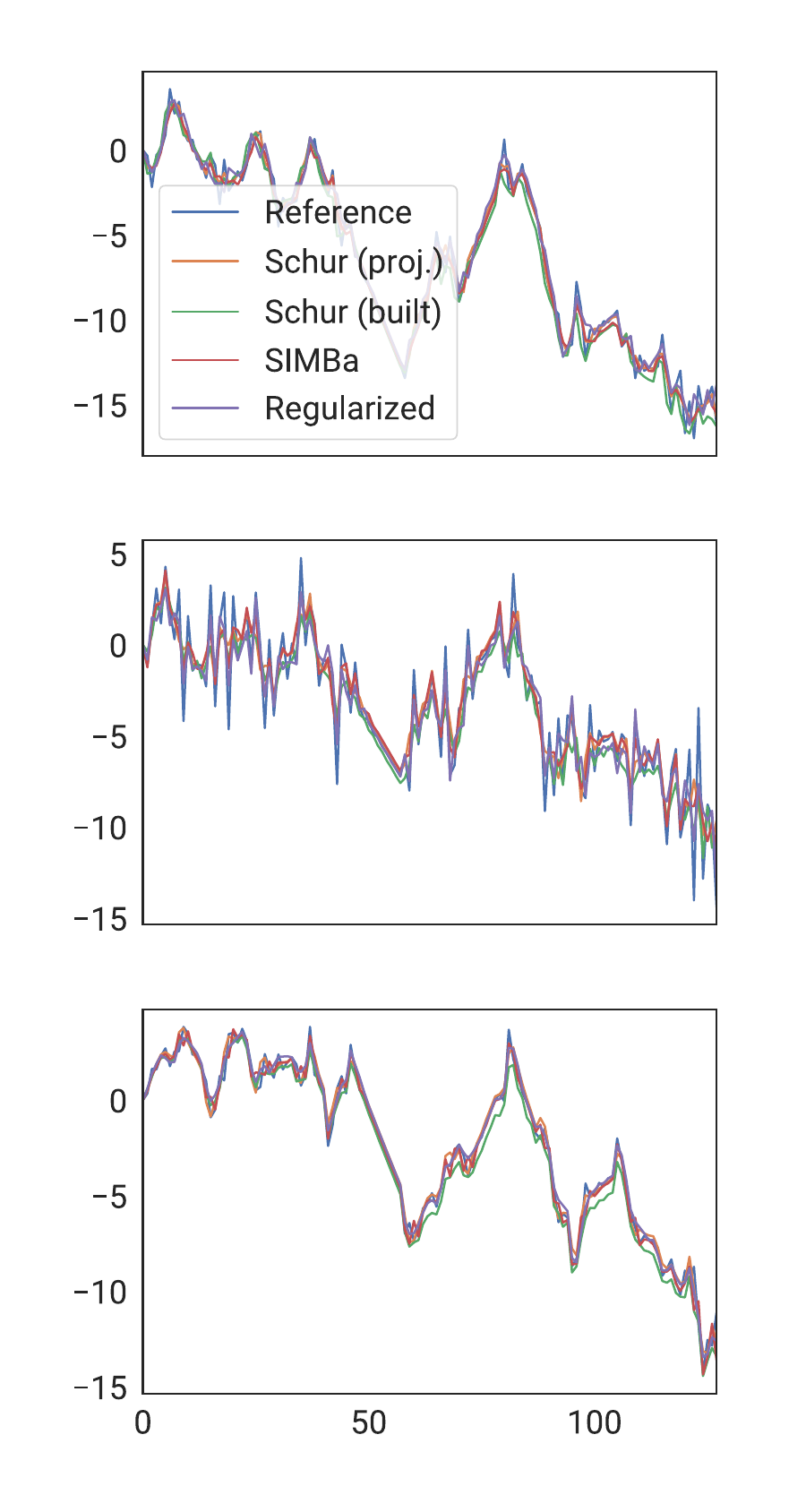}
            \caption{Small setup (worst).}
        \end{subfigure}
    \end{subfigure}
    \\
    \begin{subfigure}{0.49\linewidth}
        \begin{subfigure}{0.49\linewidth}
            \includegraphics[width=\linewidth]{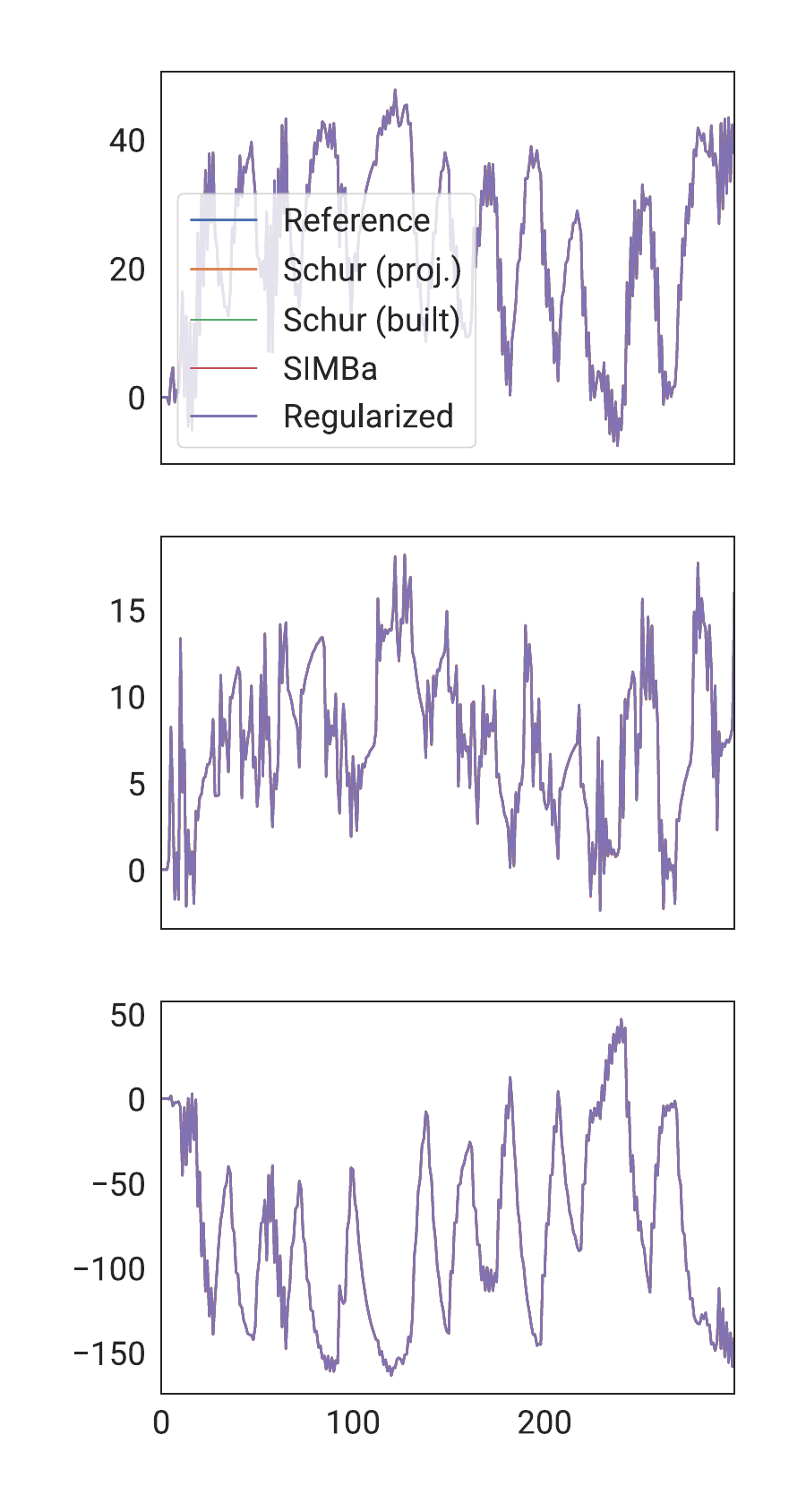}
            \caption{Original setup (best).}
        \end{subfigure}
        \hfill
        \begin{subfigure}{0.49\linewidth}
            \includegraphics[width=\linewidth]{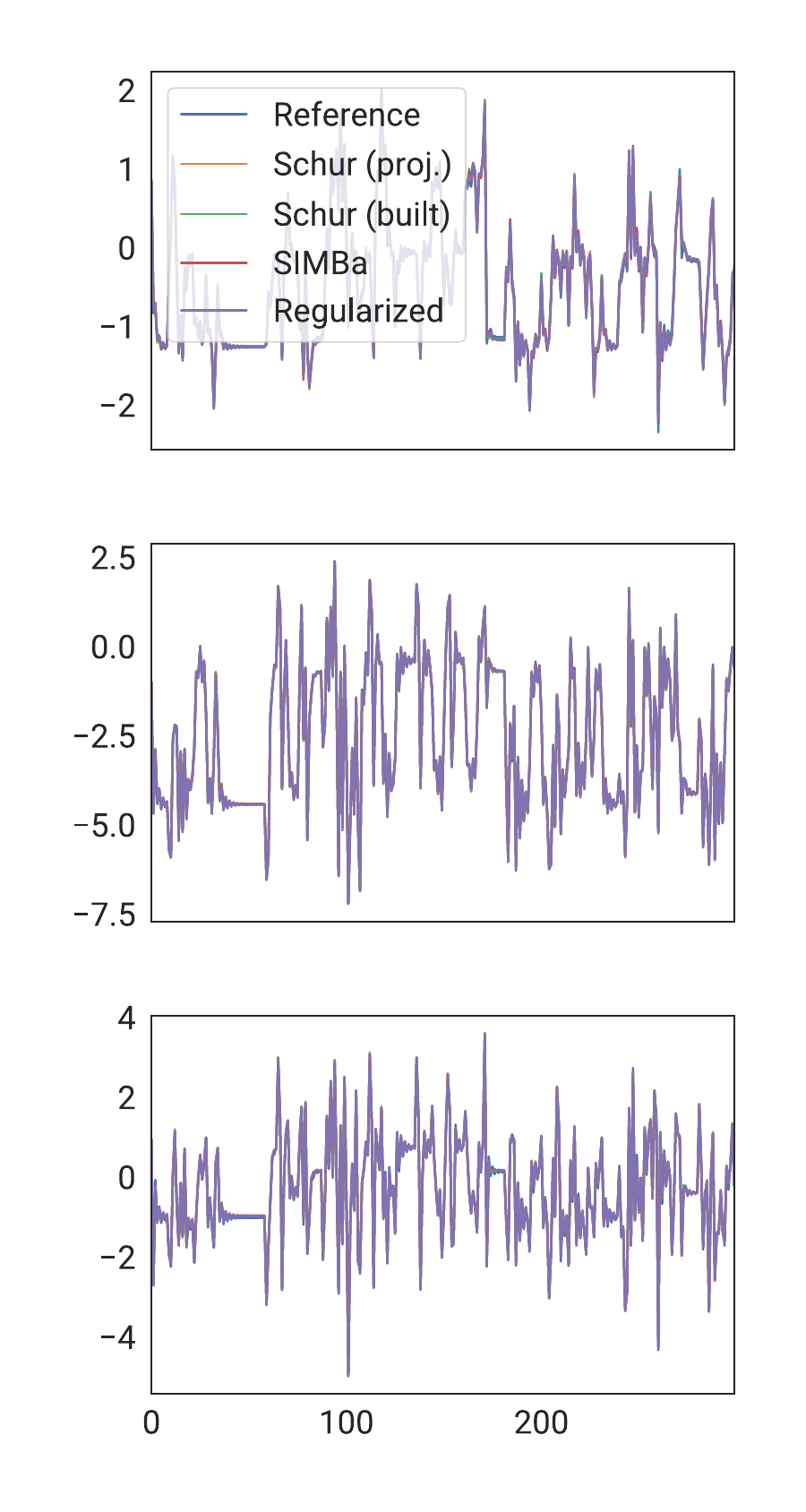}
            \caption{Original setup (worst).}
        \end{subfigure}
    \end{subfigure}
    \hfill
    \begin{subfigure}{0.49\linewidth}
        \begin{subfigure}{0.49\linewidth}
            \includegraphics[width=\linewidth]{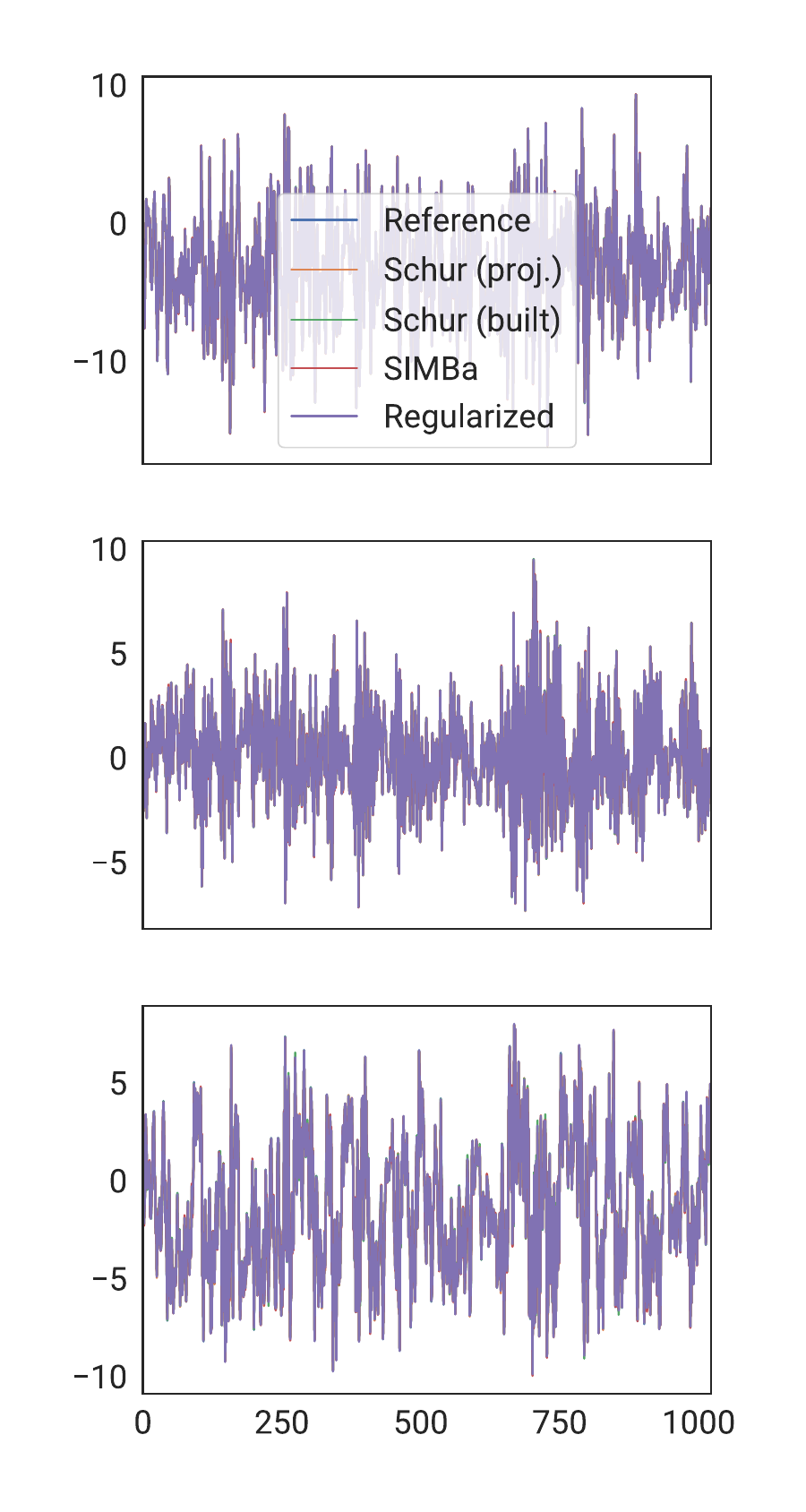}
            \caption{Extended setup (best).}
        \end{subfigure}
        \hfill
        \begin{subfigure}{0.49\linewidth}
            \includegraphics[width=\linewidth]{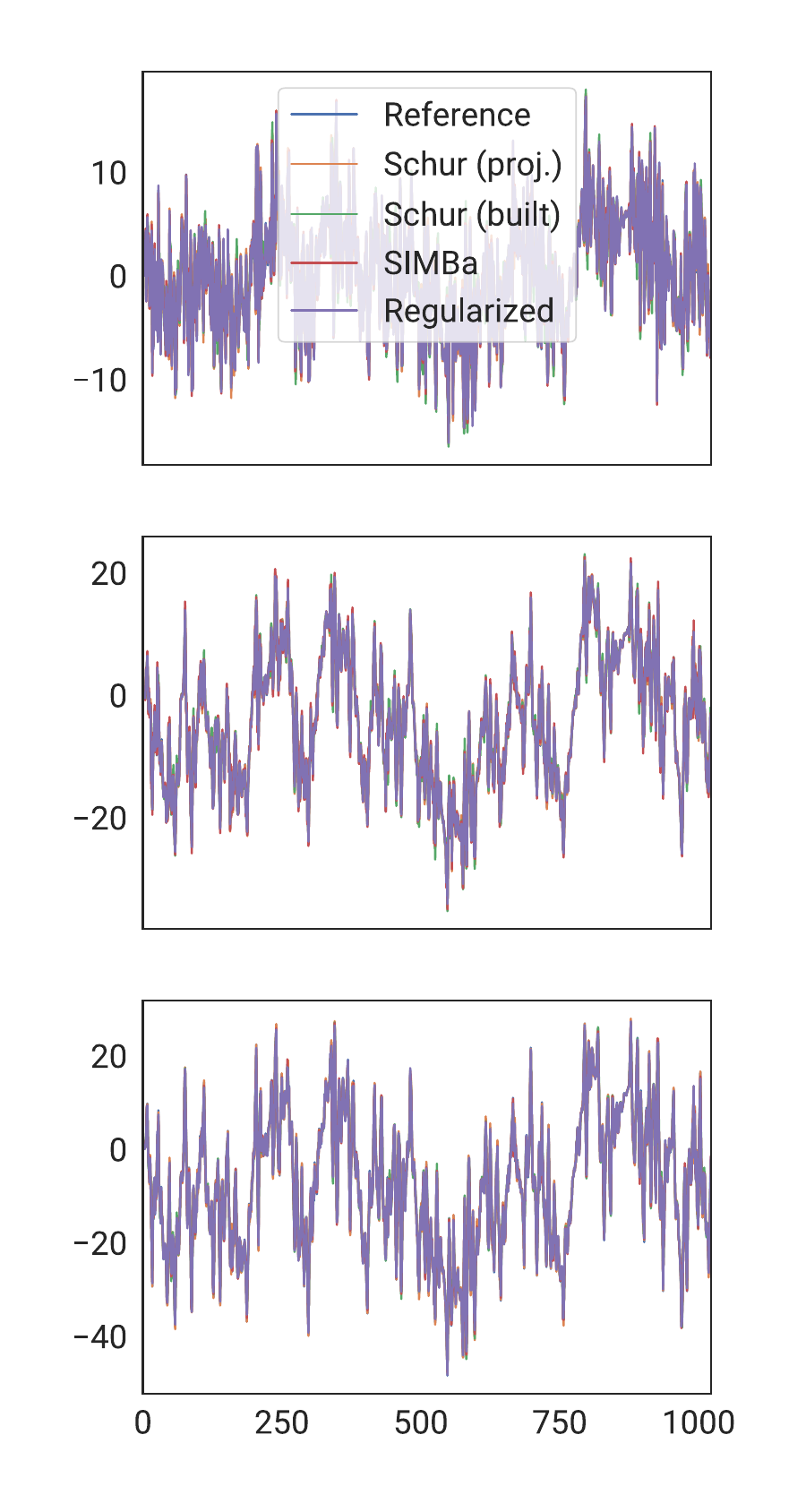}
            \caption{Extended setup (worst).}
        \end{subfigure}
    \end{subfigure}
    \caption{Best and worst cases amongst the random systems (testing partition outputs).}
    \label{fig:synthetic_outputs}
\end{figure}

The best and worst-case outputs per experimental setup (\textit{Large} excluded due to the large channel and sequence count) are shown in \Cref{fig:synthetic_outputs}. Consistent with the expected error and uncertainty reported in \Cref{subsec:synthetic_exp}, the \textit{Smaller} setup is the only one where the low sample count results in a larger probability of misidentification and signal drift. For all other configurations, even in the worst-case scenario, all the evaluated methods manage to converge (albeit at different speeds, as shown in \Cref{fig:synthetic_history}) to a proper approximation of the target system.

\section{Real-Data Benchmarks} \label[appendix]{app:real}

\subsection{Datasets \& Model Configurations} \label[appendix]{subsec:real_description}

The following datasets, obtained from the official dataloader repository of \url{nonlinearbenchmark.org/}~\cite{schoukens2026NLBench} (released under the BSD 3 license), were used for the real-data experiments in \Cref{subsec:real_exp}:
\begin{enumerate}
    \item \textbf{Silverbox}~\cite{Wigren2013}, an electronic implementation of the Duffing oscillator. It is built as a 2nd order linear time-invariant system with a 3rd degree polynomial static nonlinearity around it in feedback.
    \item \textbf{CED}~\cite{wigren2017coupled}, or Coupled Electric Drives, consists of two electric motors that drive a pulley using a flexible belt. The pulley is held by a spring, resulting in a lightly damped dynamic mode. The electric drives can be individually controlled, allowing the tension and the speed of the belt to be simultaneously controlled. The drive control is symmetric around zero, hence both clockwise and counterclockwise movement is possible. The focus is only on the speed control system, and the angular speed of the pulley is measured as an output with a pulse counter insensitive to the sign of the velocity. The available data sets are short, which constitutes a challenge when performing identification.
    \item \textbf{EMPS}~\cite{janot2019data}, or Electro-Mechanical Positioning System, is a standard configuration of a drive system for the prismatic joint of robots or machine tools. The main source of nonlinearity is caused by friction effects that are present in the setup. Due to the presence of a pure integrator in the system, the measurements are taken in a closed-loop setting. The original dataset was sampled at 1 kHz, but it had to be subsampled by a factor of 20 due to the slow dynamics of the system and the magnitude of the input perturbations.
    \item \textbf{Industrial Robot}~\cite{weigand2022dataset}, an identification benchmark dataset for a full robot movement with a KUKA KR300 R2500 ultra SE industrial robot. It is a robot with a nominal payload capacity of 300 kg, a weight of 1120 kg, and a reach of 2500 mm, exhibiting 12 states accounting for position and velocity for each of the 6 joints. The robot encounters backlash in all joints, pose-dependent inertia, pose-dependent gravitational loads, pose-dependent hydraulic forces, pose- and velocity-dependent centripetal and Coriolis forces, as well as nonlinear friction, which is temperature-dependent and therefore potentially time-varying.
    \item \textbf{Fine-Steering Mirror}~\cite{floren2024data}, a multi-input multi-output high-precision control platform used in a small satellite. Voltages applied to three piezo-actuators serve as inputs, while the mirror displacements measured at three non-collocated reference points serve as outputs. The excitation signals are orthogonal random-phase multisines spanning a wide frequency range, and are applied at three different amplitude levels. The system behaves mostly linearly, but the presence of hysteresis in the piezo-actuators introduces dynamic nonlinearities, making the dataset well-suited for benchmarking nonlinear identification methods.
\end{enumerate}

\begin{table}[tb!]
    \caption{Dataset properties.}
    \label{tb:dataset_properties}
    \centering
    \begin{tblr}{
        colspec={*{6}{c}},
        cells={valign=m, font=\small},
        row{1-2}={font={\small \bfseries \boldmath}},
        cell{1}{1-3}={r=2}{},
        cell{1}{4}={c=3}{},
        cell{3-Z}{2-Z}={mode=math},
        hline{1-3,Z}={},
            }
        Dataset              & $n_u$ & $n_y$ & No. of Samples
        \\
                             &       &       & Training       & Validation & Test
        \\
        Silverbox            & 1     & 1     & 52050          & 13012      & 21688
        \\
        CED                  & 2     & 2     & 320            & 80         & 100
        \\
        EMPS                 & 1     & 1     & 994            & 248        & 1242
        \\
        Industrial Robot     & 6     & 6     & 31990          & 7998       & 3636
        \\
        Fine-Steering Mirror & 3     & 3     & 73728          & 73728      & 73728
    \end{tblr}
\end{table}

\begin{figure}
    \centering
    \begin{subfigure}[m]{0.49\textwidth}
        \includegraphics[width=\linewidth]{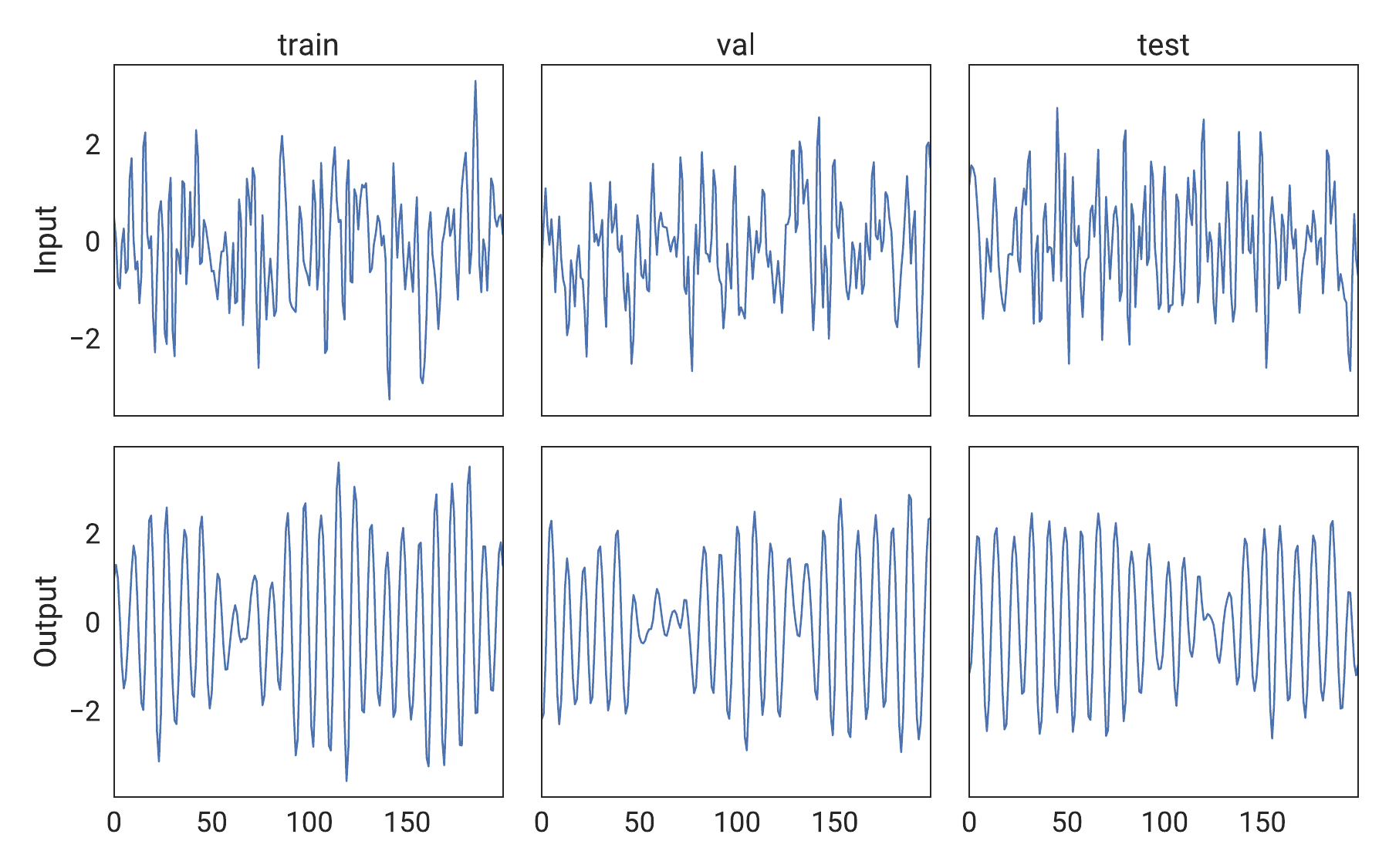}
        \caption{Silverbox}
        \label{subfig:silverbox_visualization}
    \end{subfigure}
    \hfill
    \begin{subfigure}[m]{0.49\textwidth}
        \includegraphics[width=\linewidth]{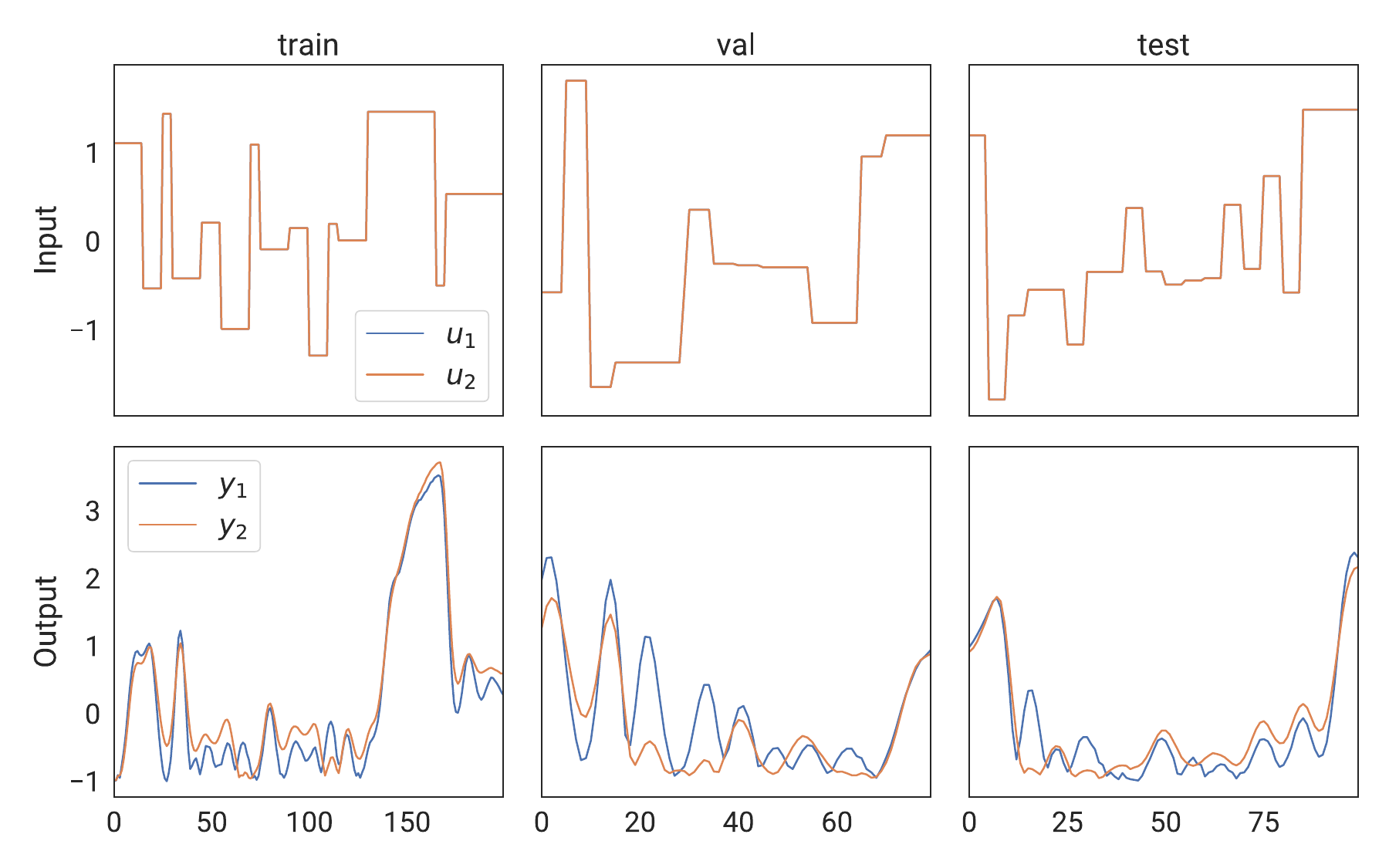}
        \caption{CED}
        \label{subfig:ced_visualization}
    \end{subfigure}
    \\
    \begin{subfigure}[m]{0.49\textwidth}
        \includegraphics[width=\linewidth]{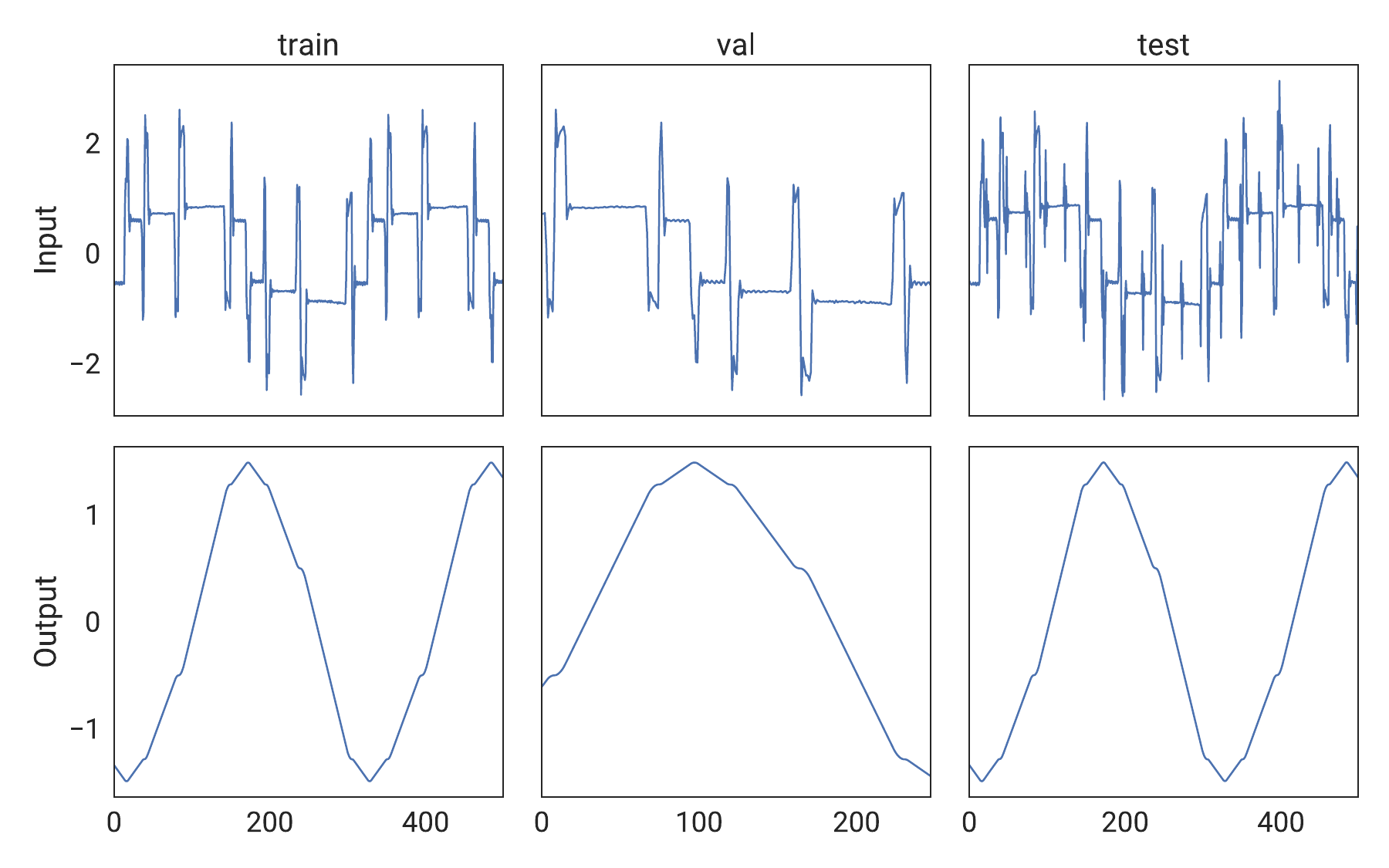}
        \caption{EMPS}
        \label{subfig:emps_visualization}
    \end{subfigure}
    \hfill
    \begin{subfigure}[m]{0.49\textwidth}
        \includegraphics[width=\linewidth]{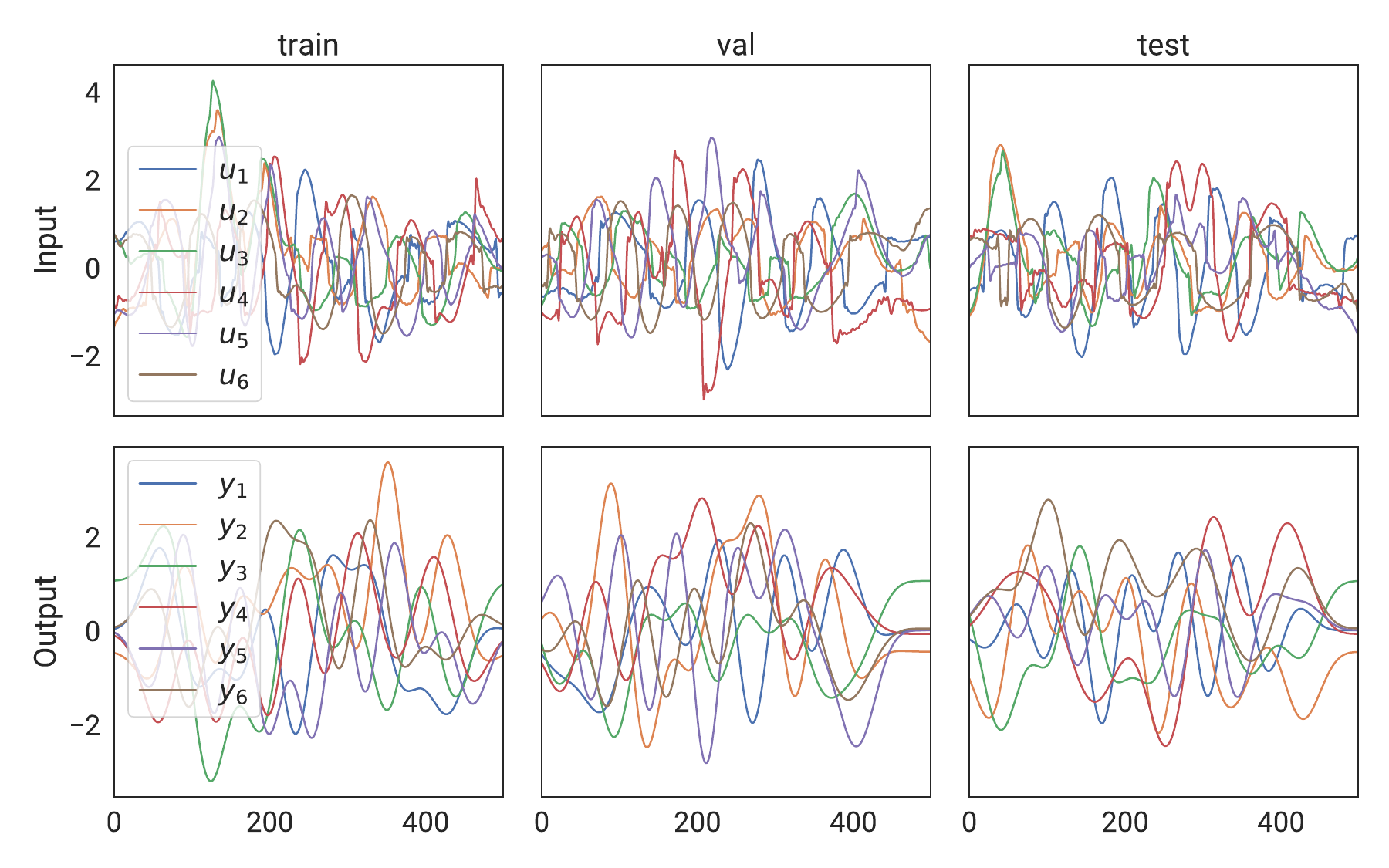}
        \caption{Industrial Robot}
        \label{subfig:industrial_robot_visualization}
    \end{subfigure}
    \\
    \begin{subfigure}[m]{0.49\textwidth}
        \includegraphics[width=\linewidth]{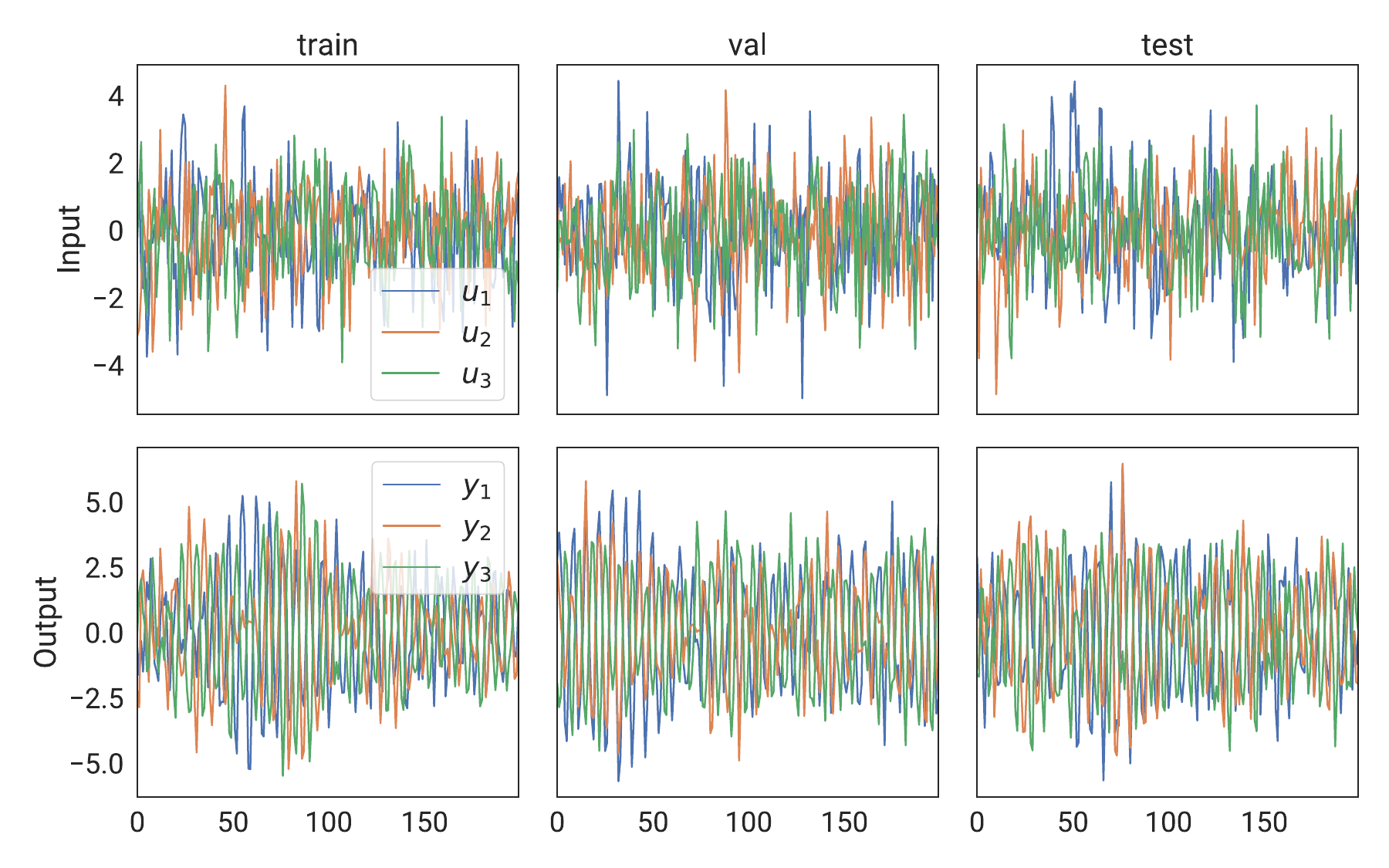}
        \caption{Fine-Steering Mirror}
        \label{subfig:fsr_visualization}
    \end{subfigure}
    \caption{Real-world dataset visualization.}
    \label{fig:real_data_visualization}
\end{figure}

The window of highest output variance per partition for each dataset can be observed in \Cref{fig:real_data_visualization}. The data and partitioning schemes were borrowed from \citet{schoukens2026NLBench} (details in \Cref{tb:dataset_properties}), with the last $20\%$ of the samples being reserved for validation when the training and validation samples are merged together (datasets 1-4).

Each experiment is carried out on an interactive HPC instance with 4 Xeon Gold 6230 logical cores and 4 GiB of RAM. Moreover, to facilitate best-model selection and real-time monitoring, the progress of the experiment was logged using \textit{Optuna}~\cite{optuna_2019} and hosted on a private PostgreSQL instance. Thus, to replicate these results, either set up a PostgreSQL instance or modify the script to use a local SQLite file (which should not be used when benchmarking multiple datasets concurrently).

\begin{table}[tb!]
    \caption{Real-data benchmark parameters.}
    \begin{subtable}[m]{.49\linewidth}
        \caption{Model sizes per dataset.}
        \label{tb:real_model_sizes}
        \centering
        \begin{tblr}{
            colspec={*{5}{c}},
            cells={valign=m, font=\small},
            column{2-Z}={mode=math},
            row{1}={font={\small \bfseries \boldmath}},
            hline{1,2,Z}={},
                }
            Dataset              & n_f & n_x & n_g & LR
            \\
            Silverbox            & 5   & 2   & 2   & \expnumber{1}{-2}
            \\
            CED                  & 7   & 3   & 5   & \expnumber{1}{-2}
            \\
            EMPS                 & 10  & 4   & 7   & \expnumber{1}{-3}
            \\
            Industrial Robot     & 36  & 12  & 24  & \expnumber{1}{-3}
            \\
            Fine-Steering Mirror & 16  & 28  & 8   & \expnumber{1}{-3}
        \end{tblr}
    \end{subtable}
    \hfill
    \begin{subtable}[m]{.49\linewidth}
        \caption{Shared parameters.}
        \label{tb:real_shared_params}
        \centering
        \begin{tblr}{
            colspec={*{2}{c}},
            cells={valign=m, font=\small},
            cell{2-Z}{2-Z}={mode=math},
            row{1}={font={\small \bfseries}},
            hline{1,2,Z}={},
                }
            Parameter                       & Value
            \\
            Optimizer                       & \text{AdamW}
            \\
            Initial $|\lambda|$ upper-bound & 0.95
            \\
            No. of initializations          & 5
            \\
            Maximum number of epochs        & \expnumber{5}{4}
            \\
            Patience                        & \expnumber{5}{3}
            \\
            Minimum delta                   & \expnumber{1}{-3}
            \\
            EKF+RTS estimator iterations    & 20
        \end{tblr}
    \end{subtable}
\end{table}

As mentioned in \Cref{subsec:real_exp}, the model follows a Hammerstein-Wiener structure with the following components:
\begin{itemize}
    \item The input nonlinearity $f$ is implemented using a \gls{slp} with $n_f$ neurons and the \gls{silu} activation function. An output layer here would be redundant, since the output of $f$ is linearly projected by $\bm{B}$.
    \item The \gls{ss} layer, of state-size $n_x$, is \textit{transparent}; i.e., $\bm{C} = \bm{I_{n_x, n_x}}$ and $\bm{D} = \bm{0_{n_x, n_f}}$.
    \item The output nonlinearity $g$ is implemented using a \gls{mlp} with a single hidden layer of size $n_g$ and the \gls{silu} activation function (only for the hidden layer).
\end{itemize}
The model dimensions, reported in \Cref{tb:real_model_sizes}, were taken from \citet{champneys2024baseline} for the first 3 datasets, and from \citet{bemporad2025bfgs, floren2026baseline} for datasets 4 and 5, respectively. On the other hand, the learning rate was initially set to $\expnumber{1}{-3}$ for all datasets and then increased to $\expnumber{1}{-2}$ if this change only impacted the convergence rate across the board, and not the expected accuracy. Shared experimental parameters are reported in \Cref{tb:real_shared_params}.

\subsection{Supplementary Results} \label[appendix]{subsec:real_supplementary}

\begin{table*}[tb!]
    \caption{Channel-wise \gls{nmse} for multiple-output datasets.}
    \label{tb:real_channel_error}
    \centering
    \begin{tblr}{
        colspec={*{6}{c}},
        colsep={3pt},
        cells={valign=m,font=\footnotesize},
        cell{2}{1}={r=2}{},
        cell{4}{1}={r=6}{},
        cell{10}{1}={r=3}{},
        row{1}={font={\bfseries \footnotesize}},
        cell{2-Z}{2-Z}={mode=math},
        hline{1,2,4,10,Z}={},
            }
        Dataset              & Channel & Schur (proj.)                   & Schur (built)                   & SIMBa                           & Regularized
        \\
        CED                  & y_1     & \expnumber{(4.39 \pm 9.37)}{-2} & \expnumber{(5.33 \pm 11.6)}{-2} & \expnumber{(6.07 \pm 13.4)}{-2} & \expnumber{(5.79 \pm 10.2)}{-2}
        \\
                             & y_2     & \expnumber{(2.22 \pm 4.40)}{-2} & \expnumber{(2.08 \pm 3.72)}{-2} & \expnumber{(2.18 \pm 3.32)}{-2} & \expnumber{(2.37 \pm 5.83)}{-2}
        \\
        Industrial Robot     & y_1     & \expnumber{(4.71 \pm 6.66)}{-1} & \expnumber{(3.91 \pm 5.53)}{-1} & \expnumber{(4.46 \pm 6.06)}{-1} & \expnumber{(5.66 \pm 1.32)}{-1}
        \\
                             & y_2     & \expnumber{(5.71 \pm 1.10)}{-1} & \expnumber{(6.67 \pm 1.63)}{-1} & \expnumber{(4.90 \pm 8.44)}{-1} & \expnumber{(7.71 \pm 20.5)}{-1}
        \\
                             & y_3     & \expnumber{(4.38 \pm 6.79)}{-1} & \expnumber{(5.22 \pm 8.46)}{-1} & \expnumber{(5.15 \pm 8.09)}{-1} & \expnumber{(4.48 \pm 6.74)}{-1}
        \\
                             & y_4     & \expnumber{(7.14 \pm 11.5)}{-1} & \expnumber{(6.53 \pm 11.3)}{-1} & \expnumber{(8.60 \pm 13.9)}{-1} & \expnumber{(9.92 \pm 26.7)}{-1}
        \\
                             & y_5     & \expnumber{(1.21 \pm 2.16)}{0}  & \expnumber{(6.83 \pm 12.2)}{-1} & \expnumber{(1.31 \pm 3.63)}{0}  & \expnumber{(9.82 \pm 19.5)}{-1}
        \\
                             & y_6     & \expnumber{(5.69 \pm 7.35)}{-1} & \expnumber{(5.39 \pm 7.84)}{-1} & \expnumber{(7.25 \pm 9.94)}{-1} & \expnumber{(6.82 \pm 8.79)}{-1}
        \\
        Fine-Steering Mirror & y_1     & \expnumber{(3.02 \pm 5.57)}{-2} & \expnumber{(2.60 \pm 5.00)}{-2} & \expnumber{(2.46 \pm 4.25)}{-2} & \expnumber{(2.59 \pm 4.73)}{-2}
        \\
                             & y_2     & \expnumber{(5.35 \pm 1.16)}{-2} & \expnumber{(4.52 \pm 9.58)}{-2} & \expnumber{(3.29 \pm 7.15)}{-2} & \expnumber{(4.57 \pm 9.81)}{-2}
        \\
                             & y_3     & \expnumber{(5.91 \pm 9.47)}{-2} & \expnumber{(4.75 \pm 7.82)}{-2} & \expnumber{(5.11 \pm 7.80)}{-2} & \expnumber{(4.81 \pm 7.87)}{-2}
    \end{tblr}
\end{table*}

\begin{figure}[tb!]
    \centering
    \begin{subfigure}{0.49\textwidth}
        \includegraphics[width=\linewidth]{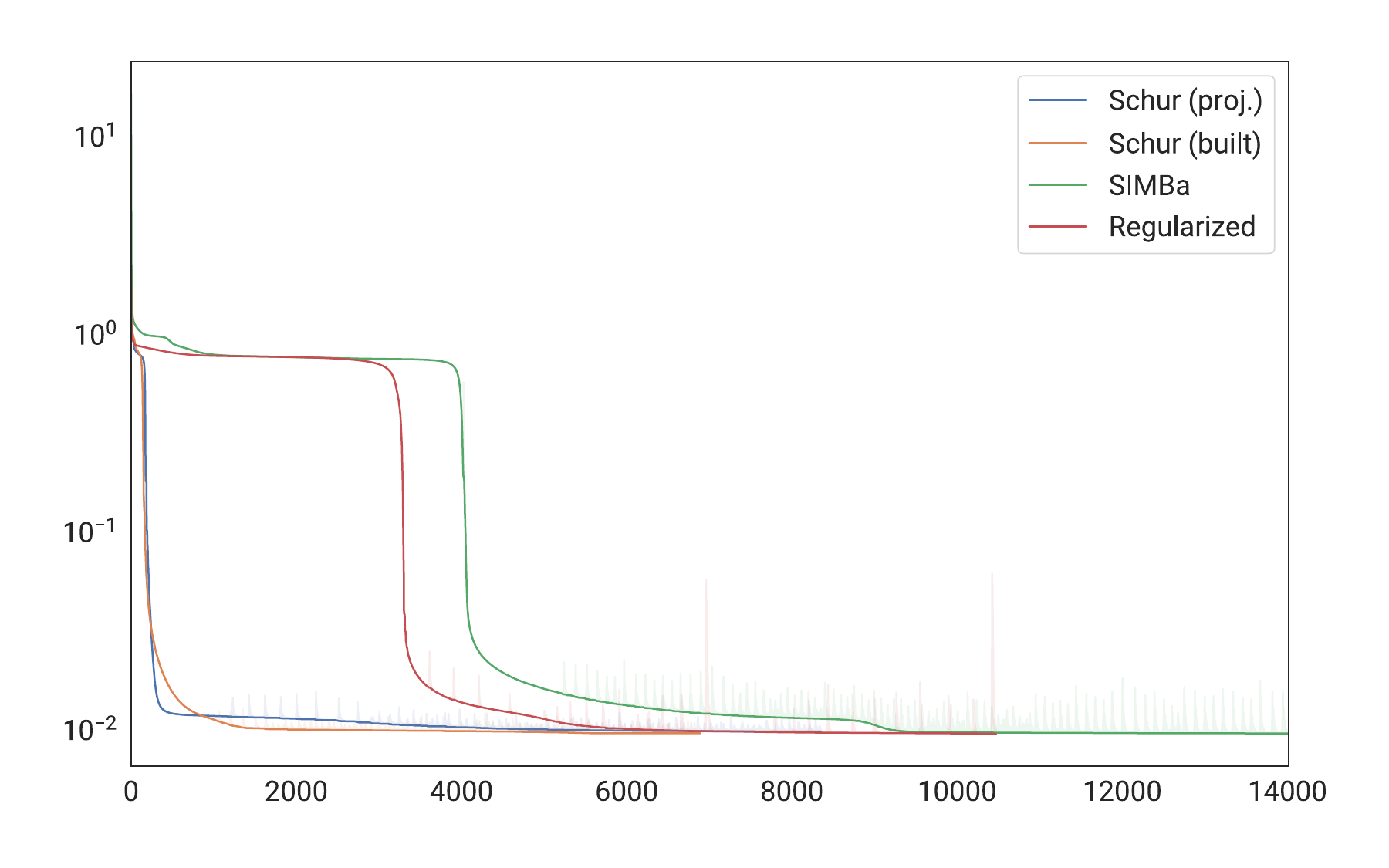}
        \caption{Silverbox}
    \end{subfigure}
    \hfill
    \begin{subfigure}{0.49\textwidth}
        \includegraphics[width=\linewidth]{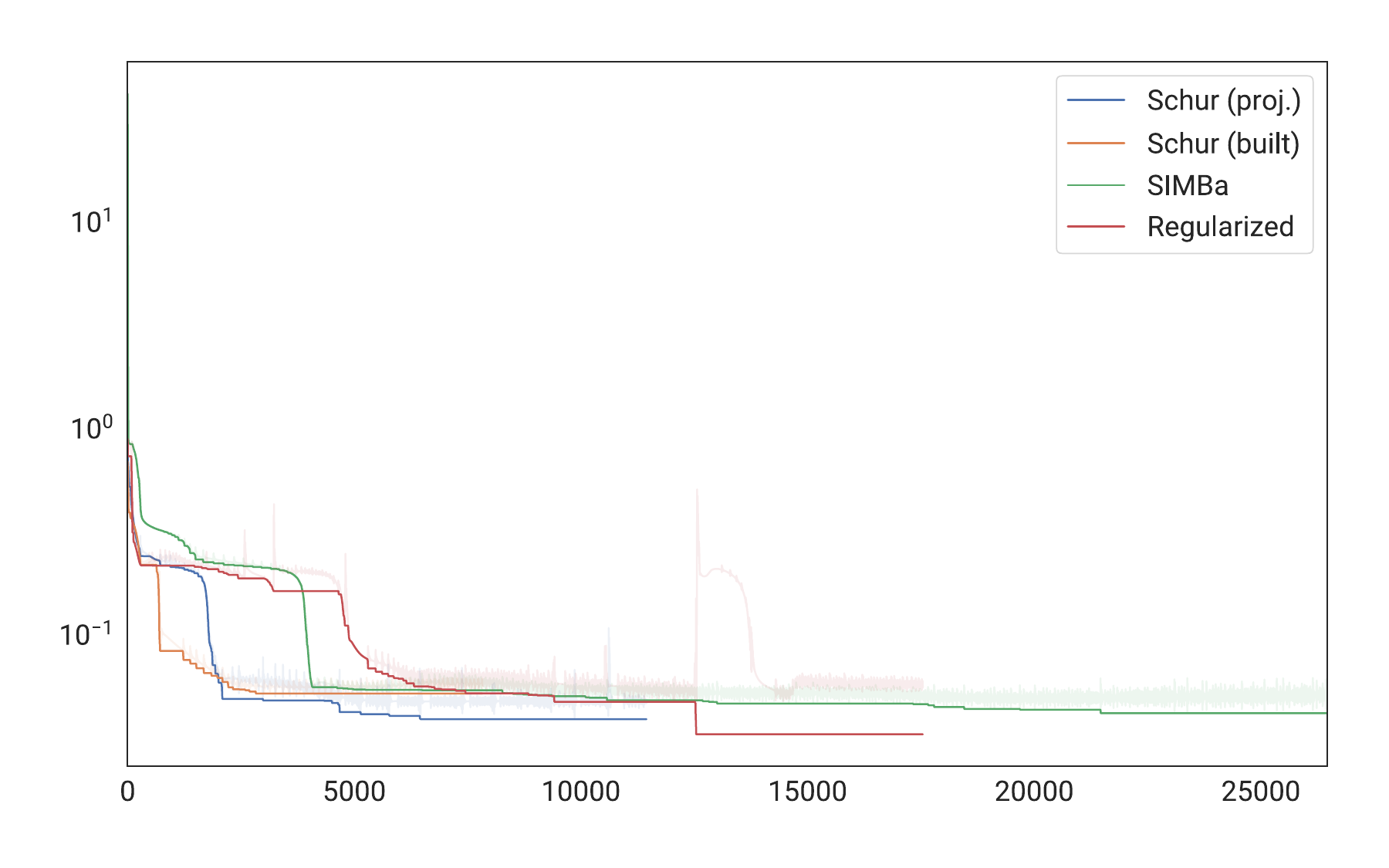}
        \caption{CED}
    \end{subfigure}
    \\
    \begin{subfigure}{0.49\textwidth}
        \includegraphics[width=\linewidth]{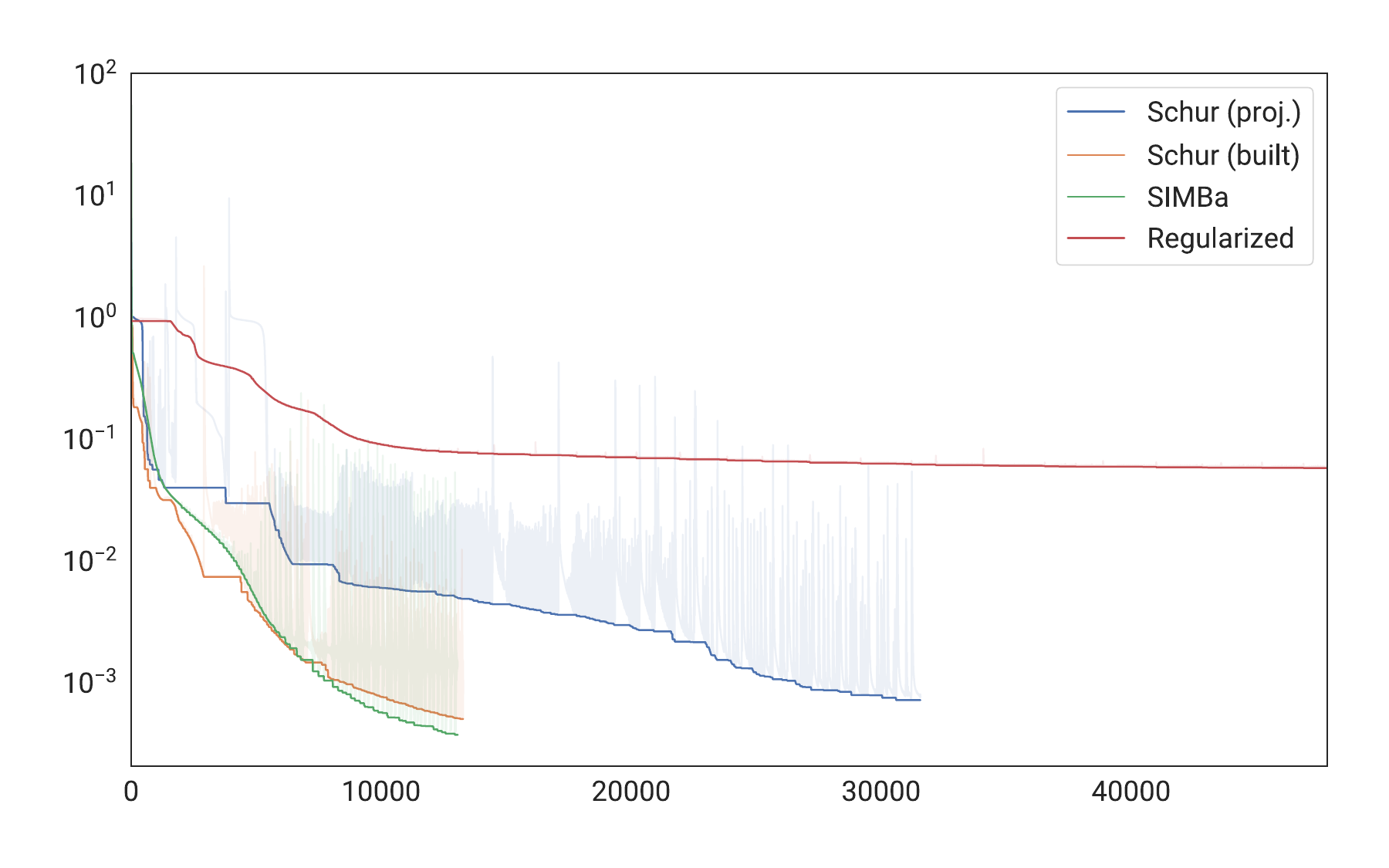}
        \caption{EMPS}
    \end{subfigure}
    \hfill
    \begin{subfigure}{0.49\textwidth}
        \includegraphics[width=\linewidth]{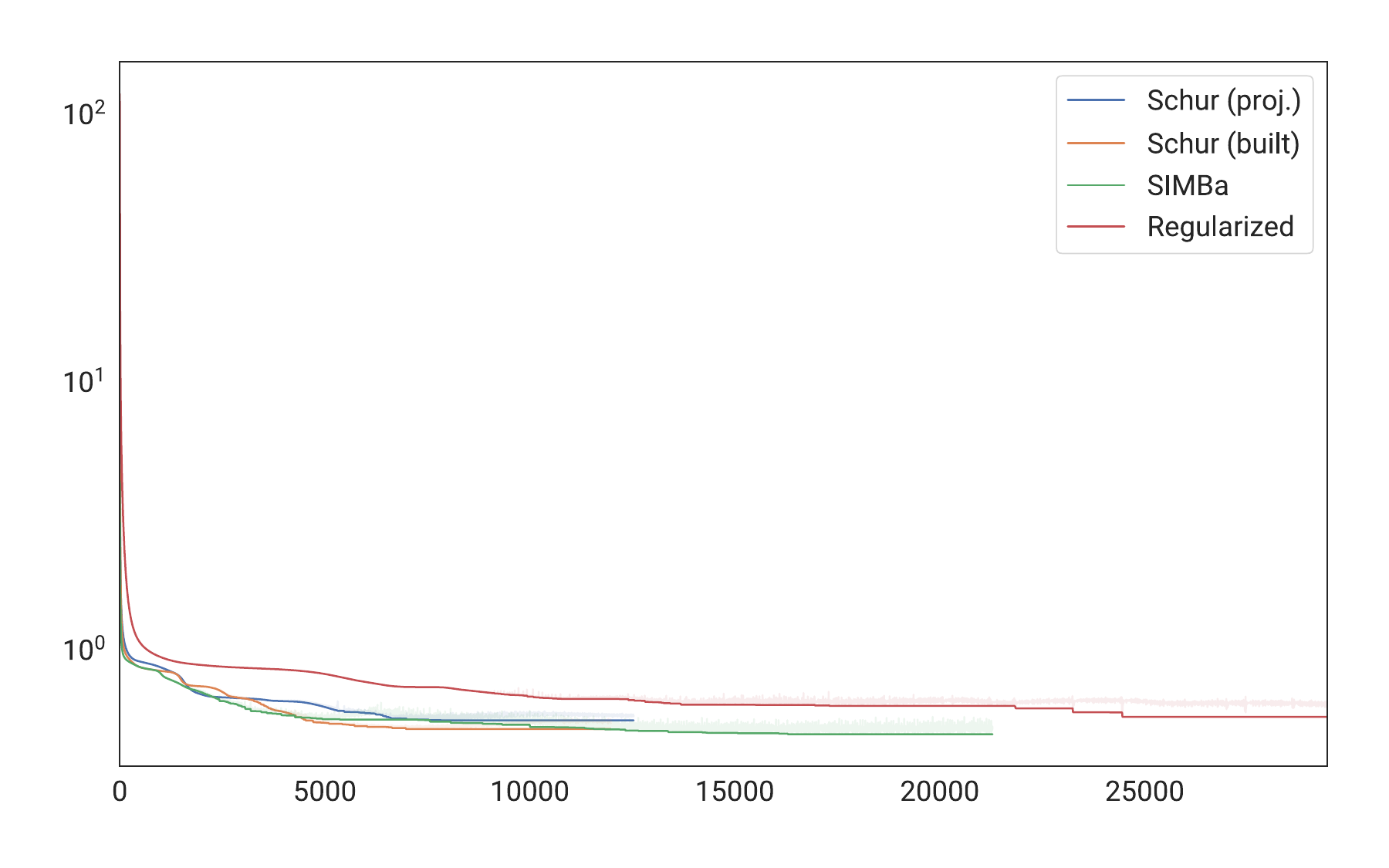}
        \caption{Industrial robot}
    \end{subfigure}
    \\
    \begin{subfigure}{0.49\textwidth}
        \includegraphics[width=\linewidth]{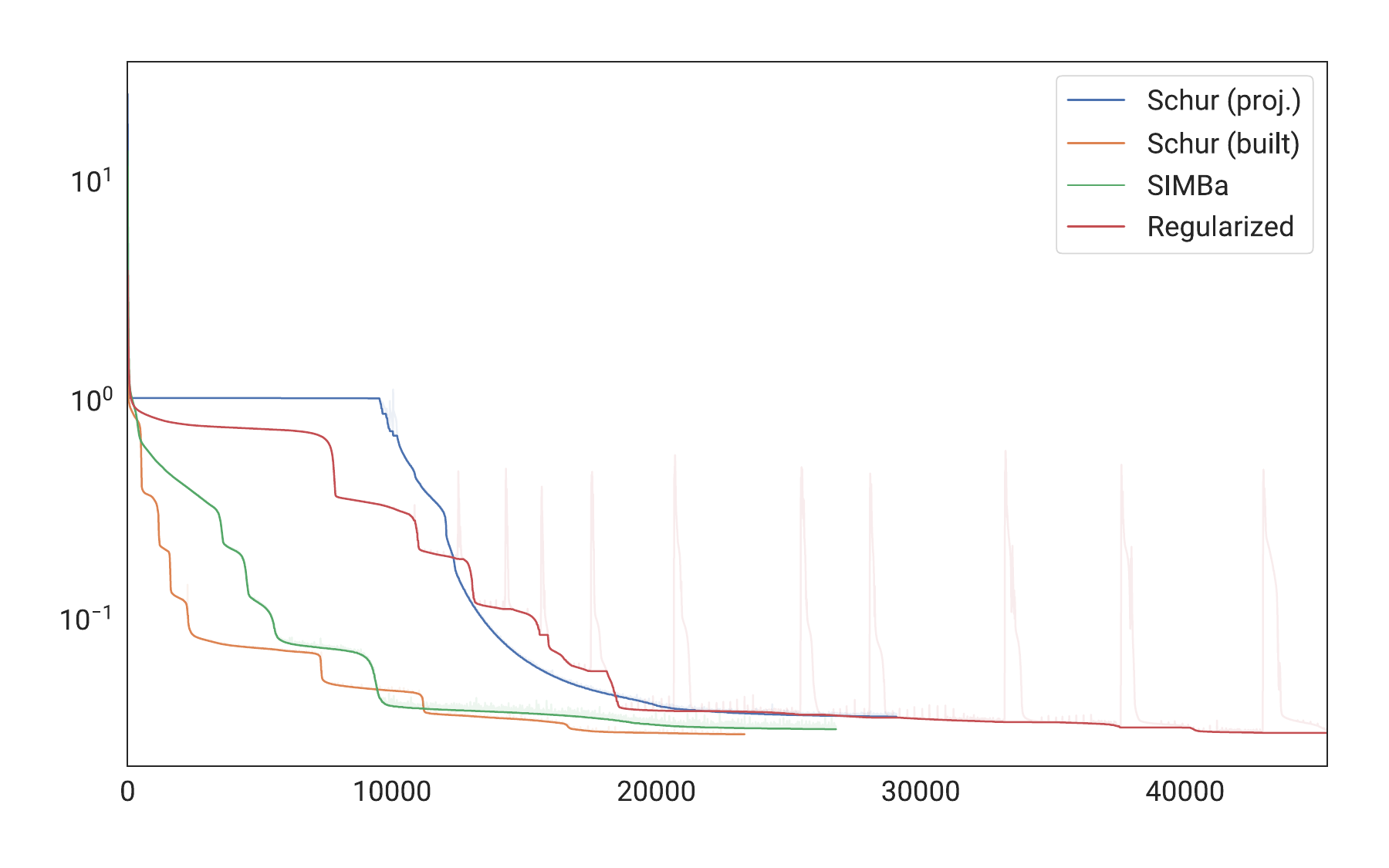}
        \caption{Fine-steering mirror}
    \end{subfigure}
    \caption{Validation \gls{nmse} history for the best-performing models. The shaded lines correspond to the recorded values per epoch, whereas the solid ones denote the cumulative minima.}
    \label{fig:real_history}
\end{figure}

\begin{figure}[tb!]
    \centering
    \includegraphics[width=0.75\textwidth]{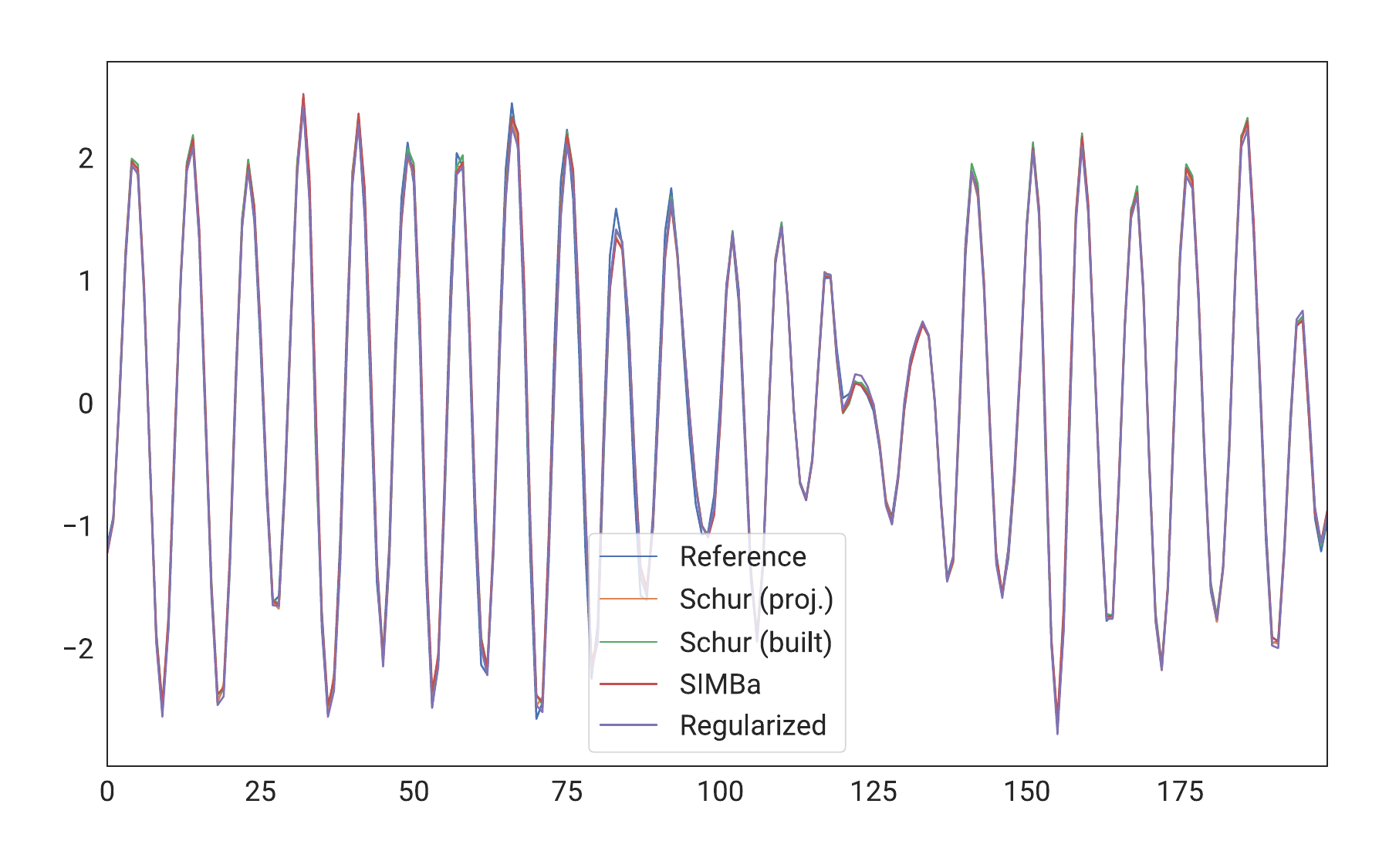}
    \caption{Silverbox test-partition output}
    \label{fig:silverbox_output}
\end{figure}

\begin{figure}[tb!]
    \centering
    \begin{subfigure}[m]{0.49\textwidth}
        \includegraphics[width=\textwidth]{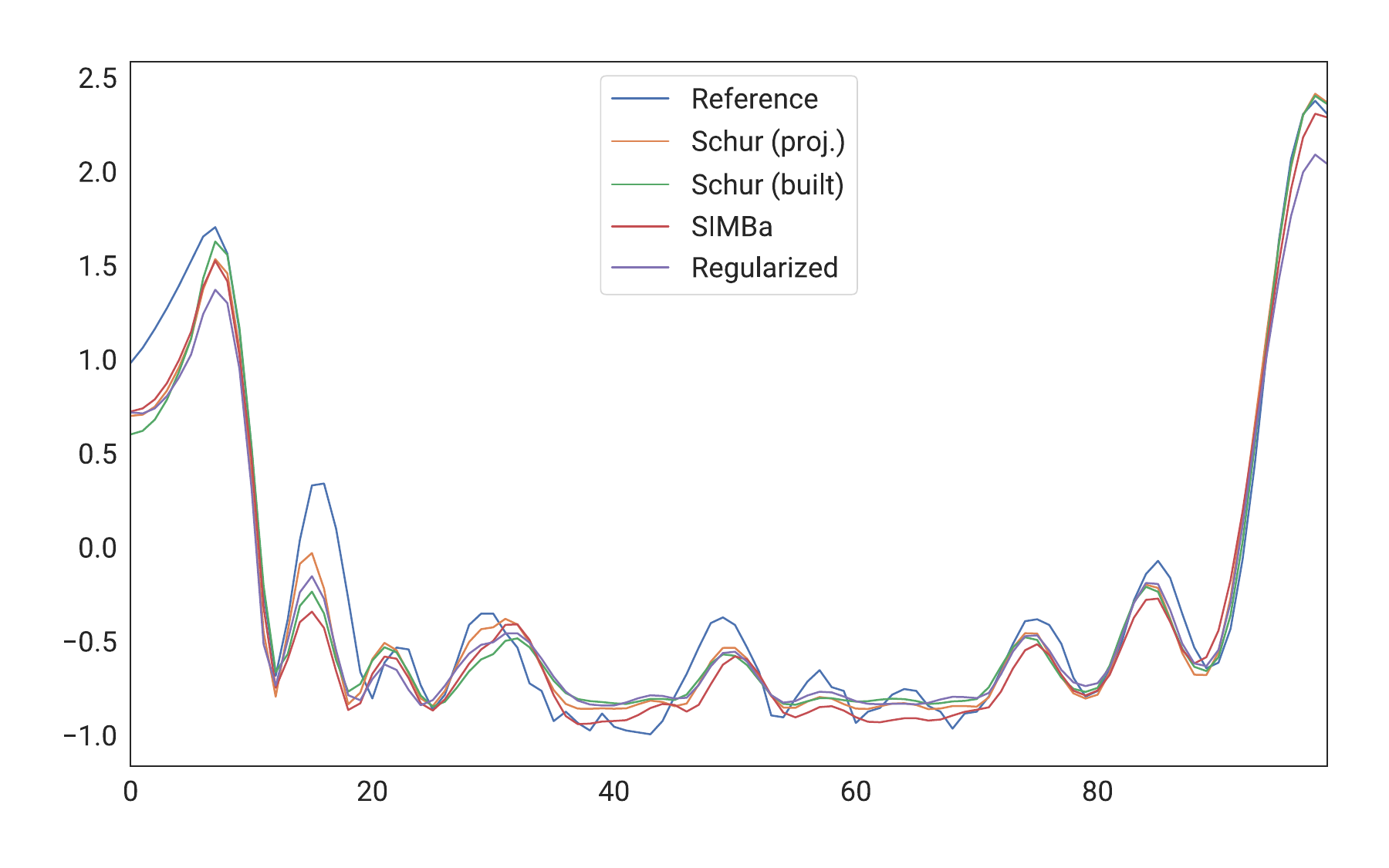}
        \caption{$y_1$}
    \end{subfigure}
    \hfill
    \begin{subfigure}[m]{0.49\textwidth}
        \includegraphics[width=\textwidth]{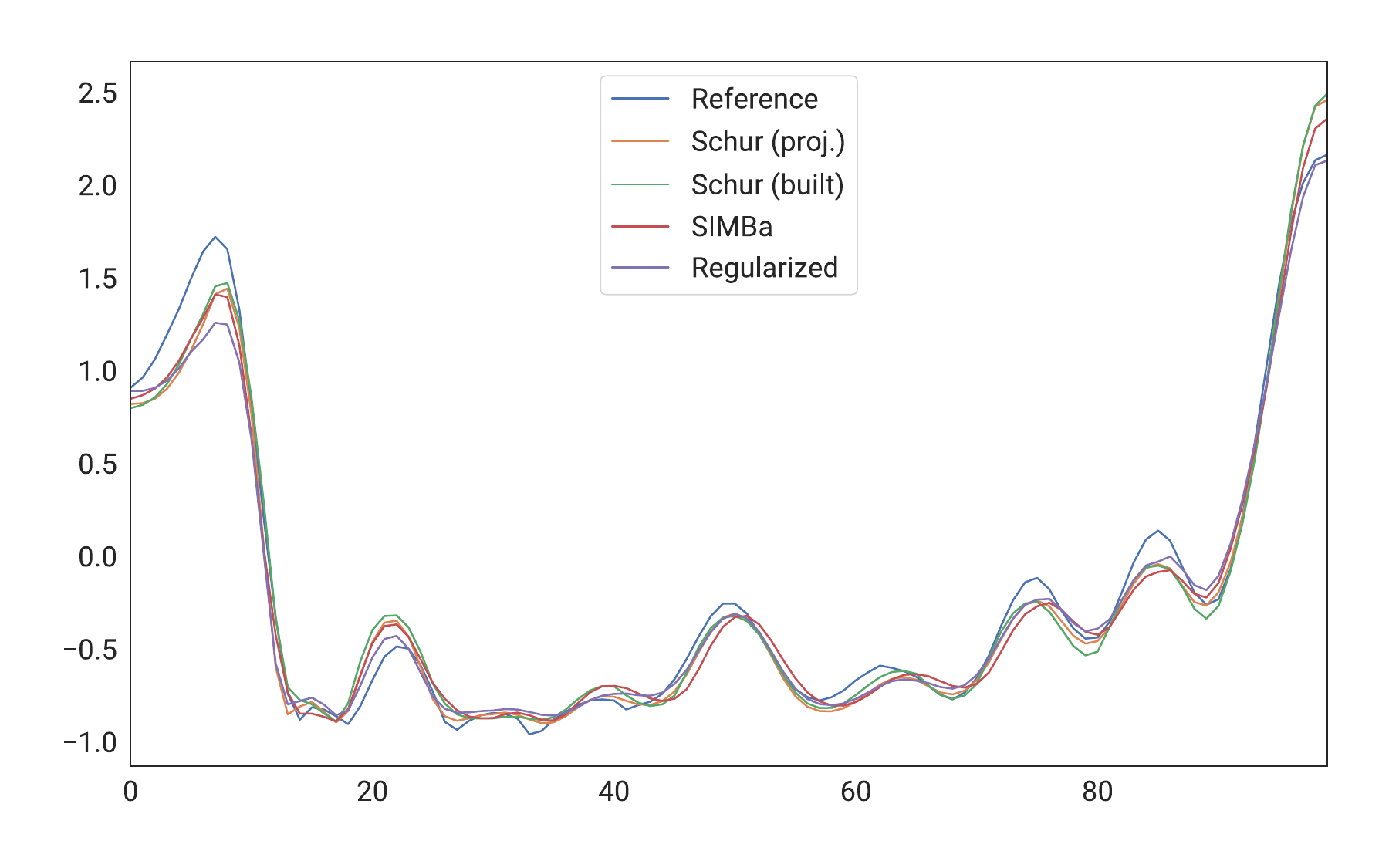}
        \caption{$y_2$}
    \end{subfigure}
    \caption{CED test-partition output}
    \label{fig:ced_output}
\end{figure}

\begin{figure}[tb!]
    \centering
    \includegraphics[width=0.75\textwidth]{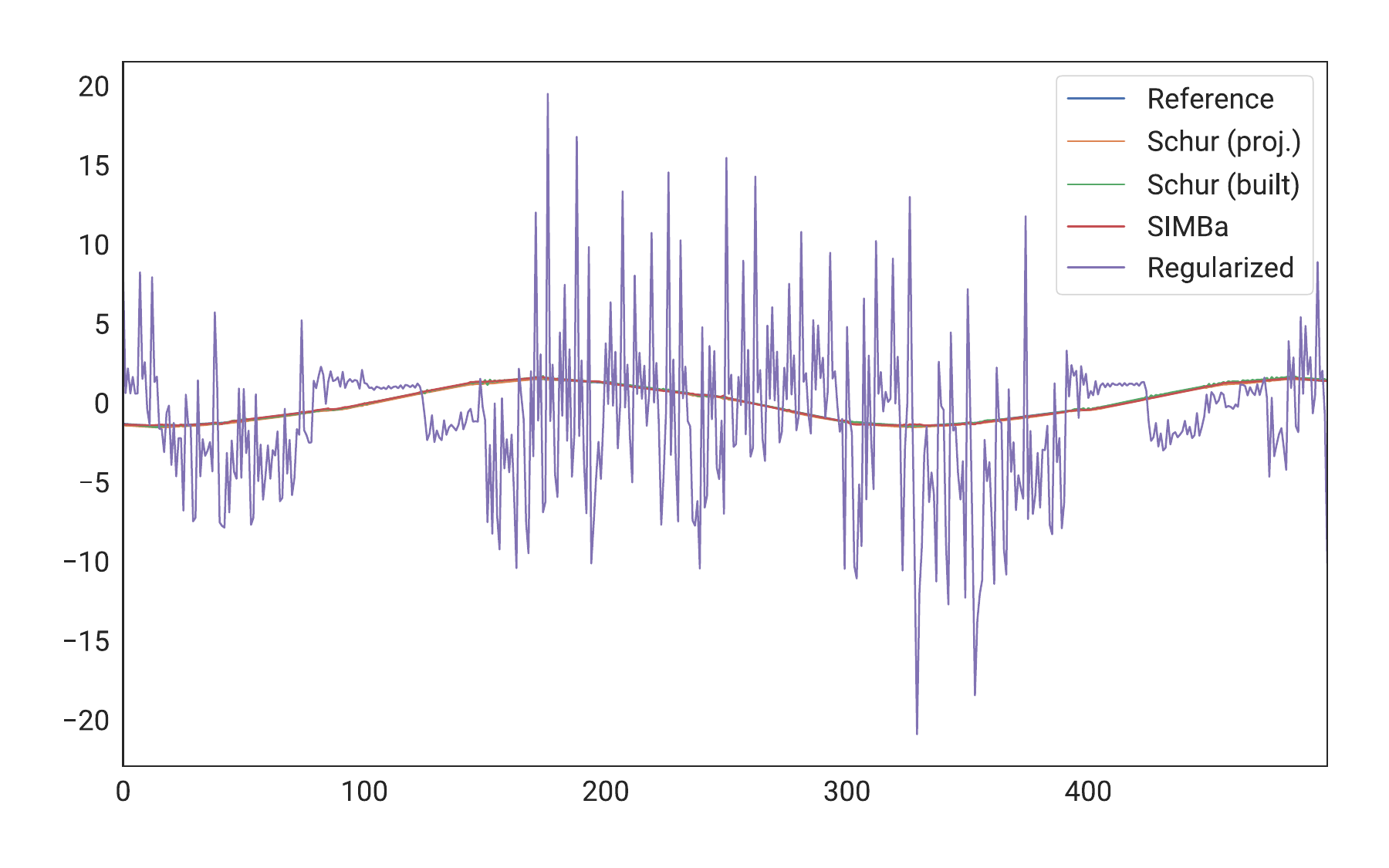}
    \caption{EMPS test-partition output}
    \label{fig:emps_output}
\end{figure}

\begin{figure}[tb!]
    \centering
    \begin{subfigure}[m]{0.49\textwidth}
        \includegraphics[width=\textwidth]{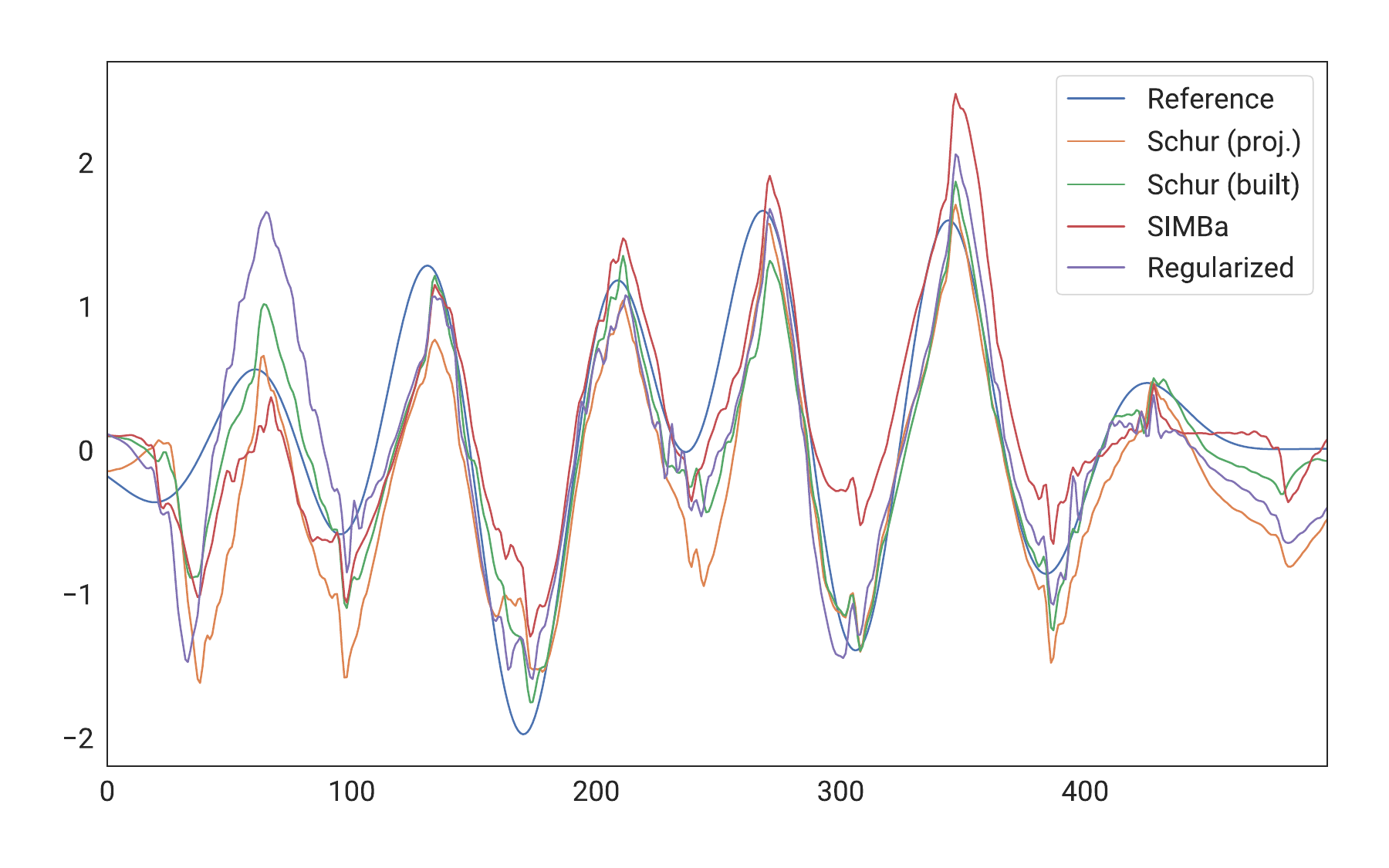}
        \caption{$y_1$}
    \end{subfigure}
    \hfill
    \begin{subfigure}[m]{0.49\textwidth}
        \includegraphics[width=\textwidth]{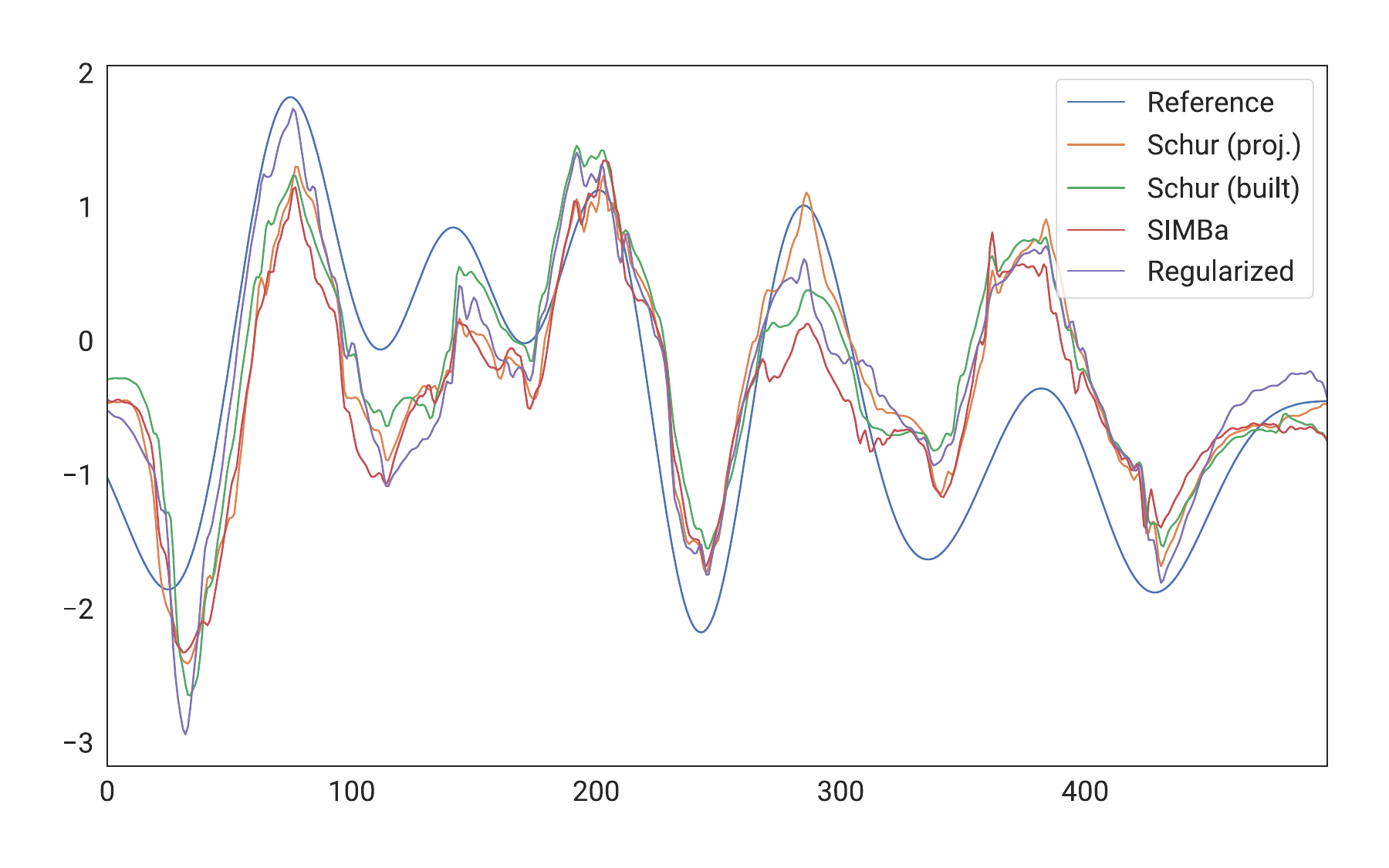}
        \caption{$y_2$}
    \end{subfigure}
    \\
    \begin{subfigure}[m]{0.49\textwidth}
        \includegraphics[width=\textwidth]{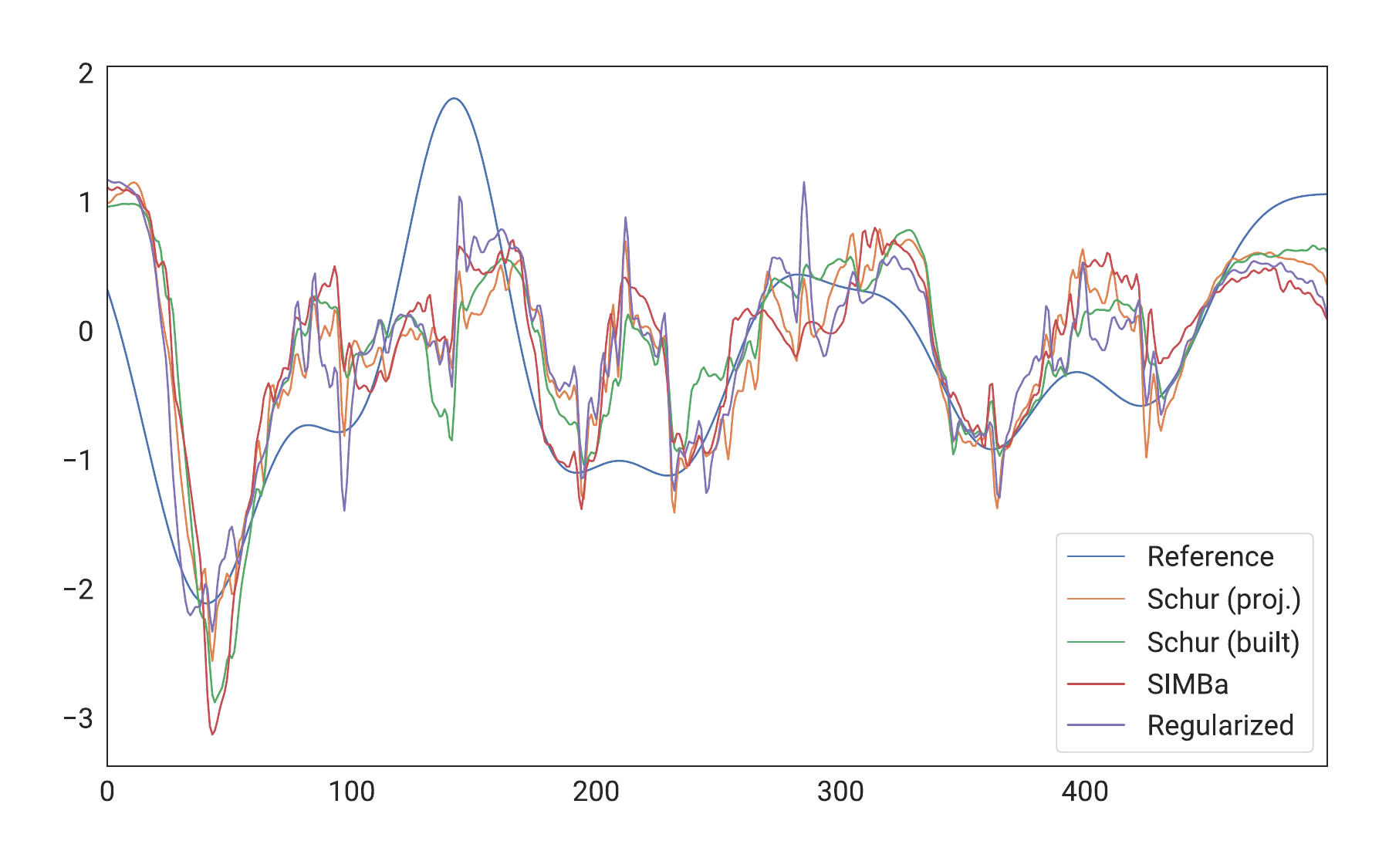}
        \caption{$y_3$}
    \end{subfigure}
    \hfill
    \begin{subfigure}[m]{0.49\textwidth}
        \includegraphics[width=\textwidth]{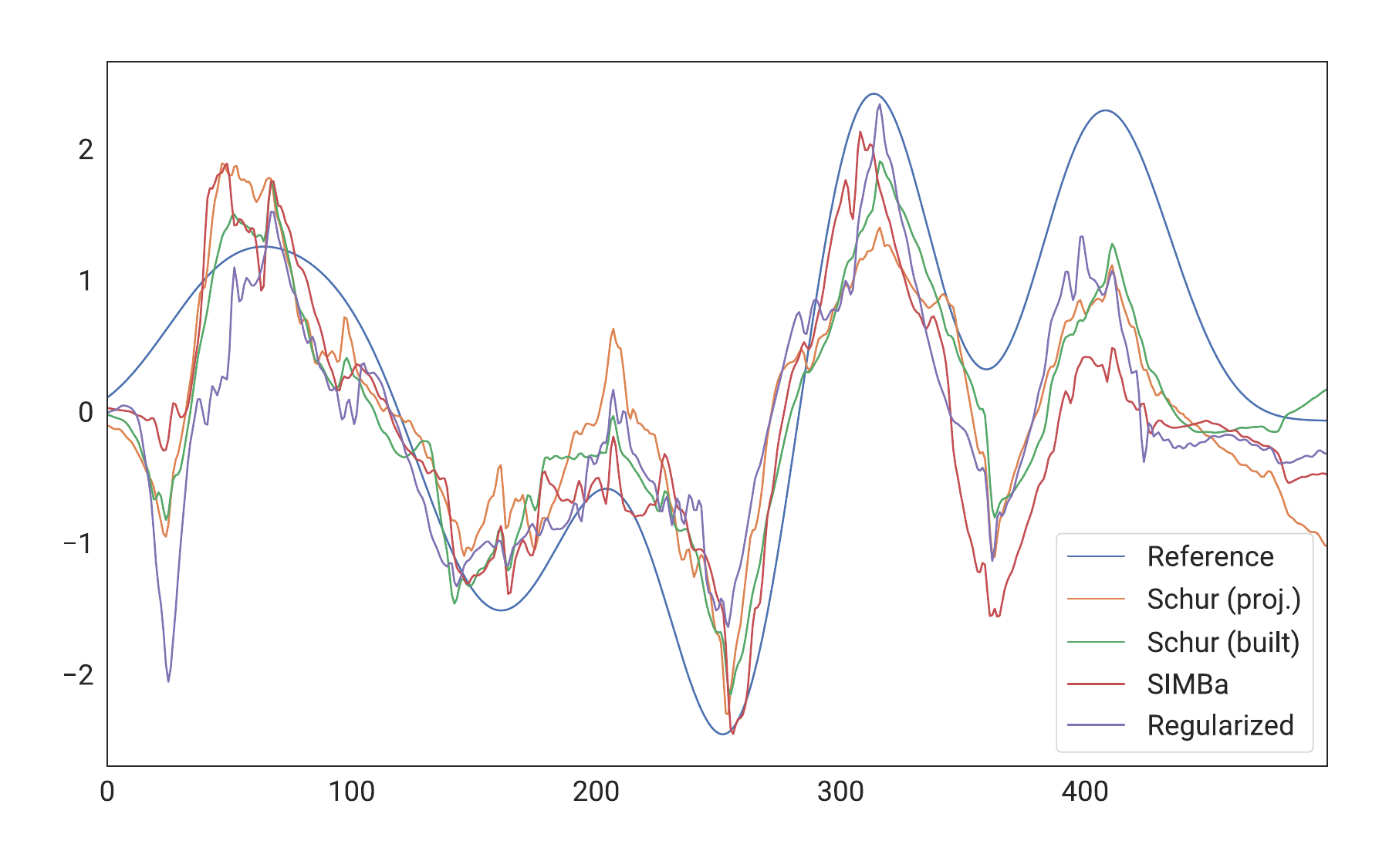}
        \caption{$y_4$}
    \end{subfigure}
    \\
    \begin{subfigure}[m]{0.49\textwidth}
        \includegraphics[width=\textwidth]{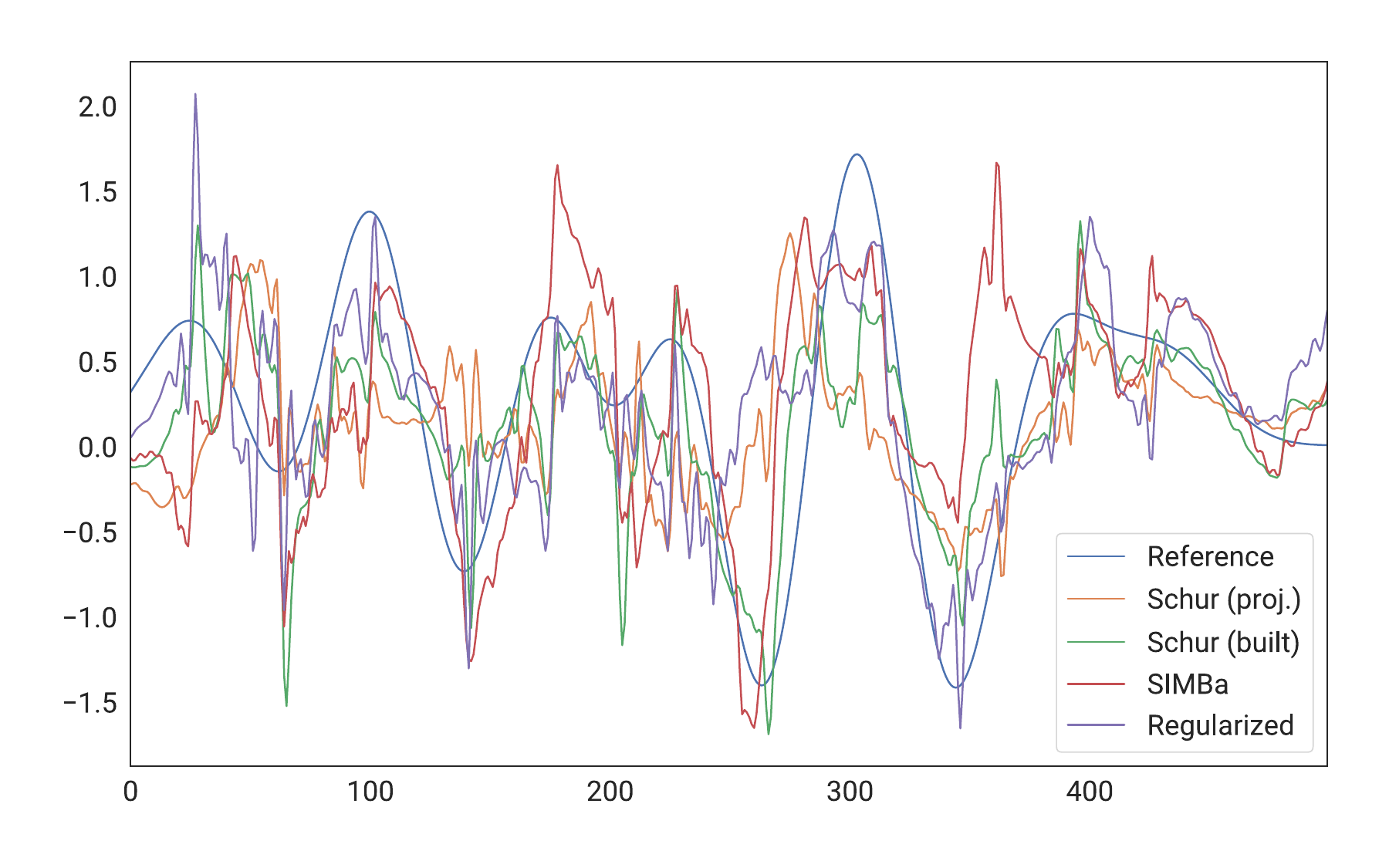}
        \caption{$y_5$}
    \end{subfigure}
    \hfill
    \begin{subfigure}[m]{0.49\textwidth}
        \includegraphics[width=\textwidth]{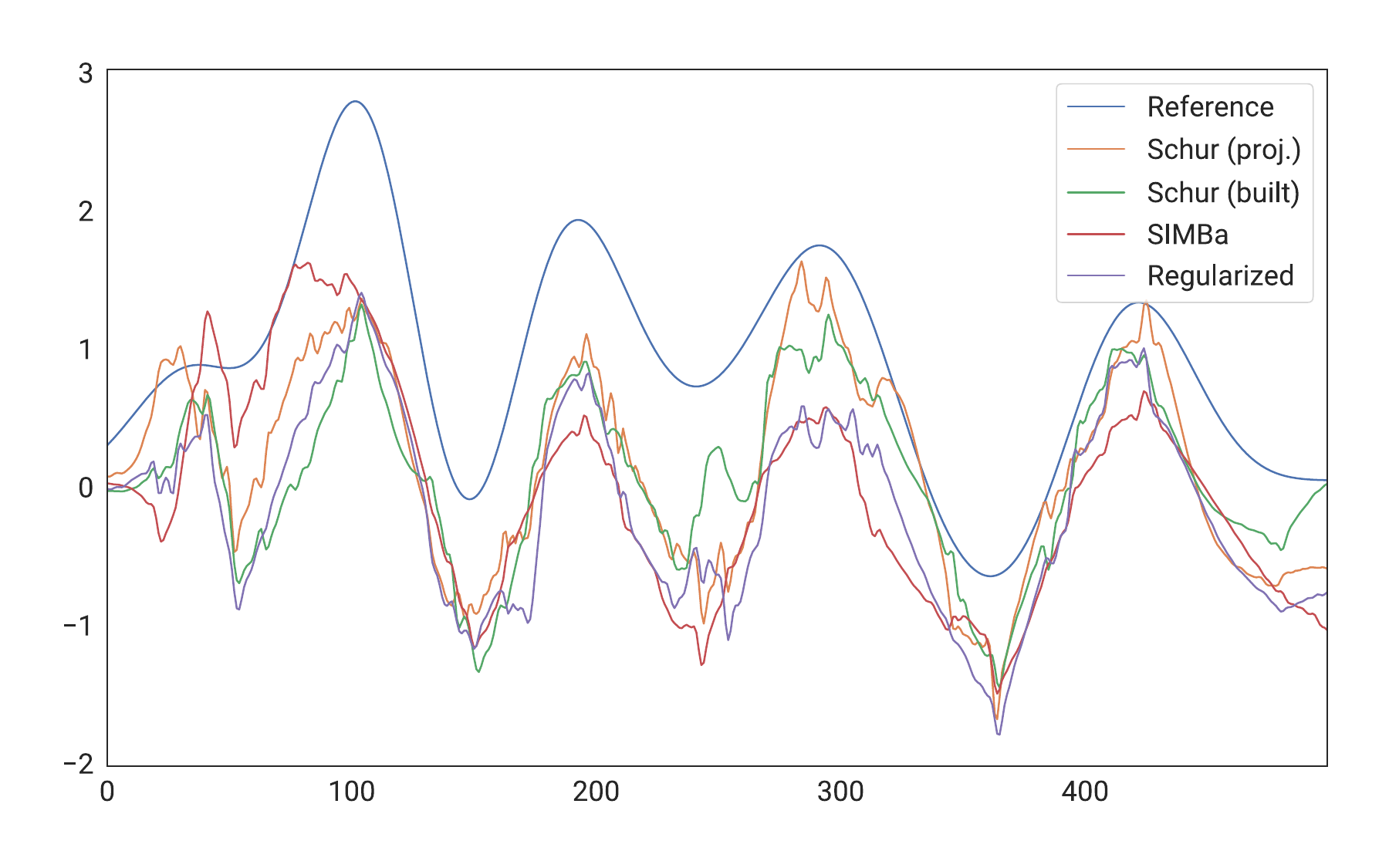}
        \caption{$y_6$}
    \end{subfigure}
    \caption{Industrial robot test-partition output}
    \label{fig:industrial_robot_output}
\end{figure}

\begin{figure}[tb!]
    \centering
    \begin{subfigure}[m]{0.75\textwidth}
        \includegraphics[width=\textwidth]{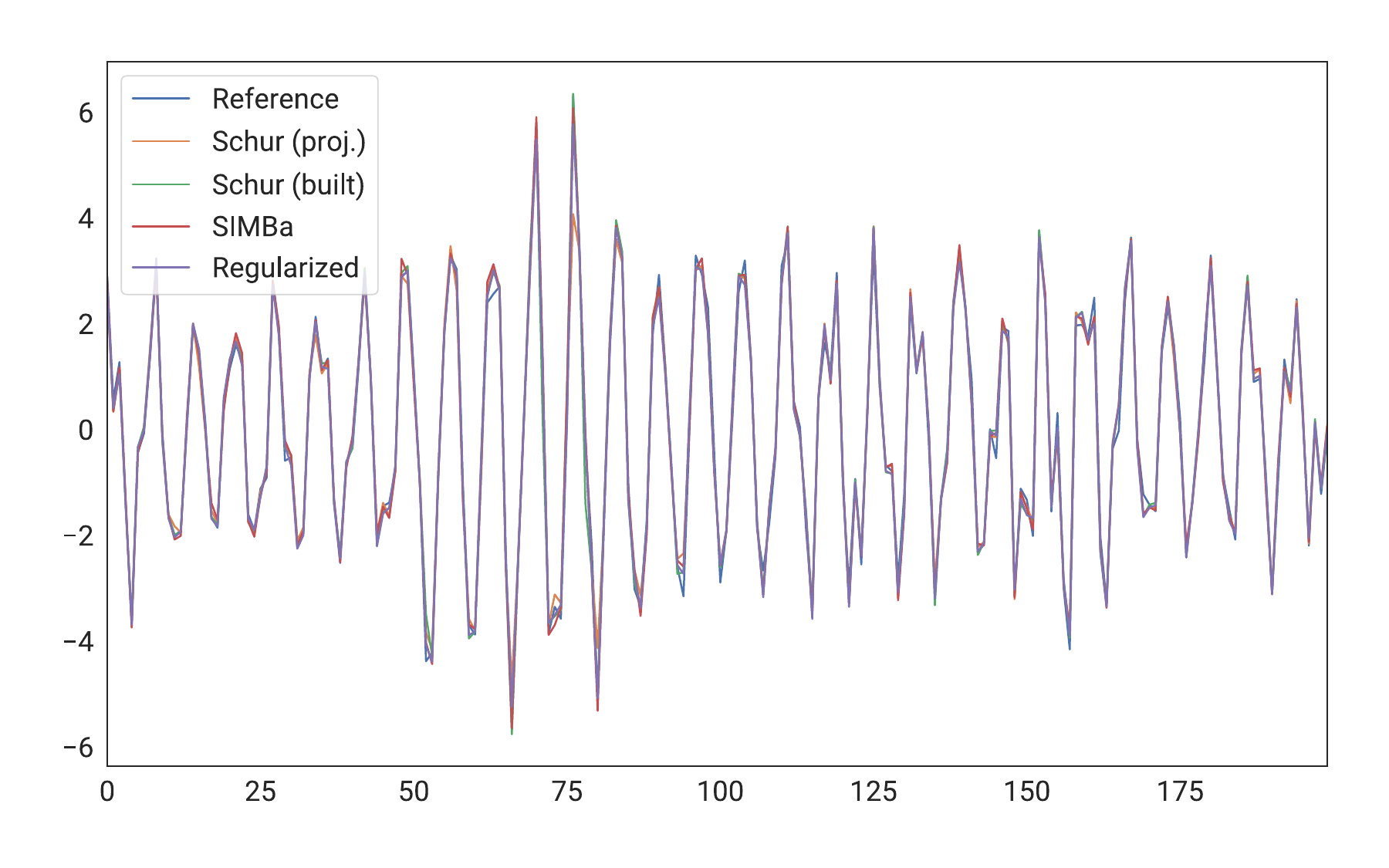}
        \caption{$y_1$}
    \end{subfigure}
    \\
    \begin{subfigure}[m]{0.75\textwidth}
        \includegraphics[width=\textwidth]{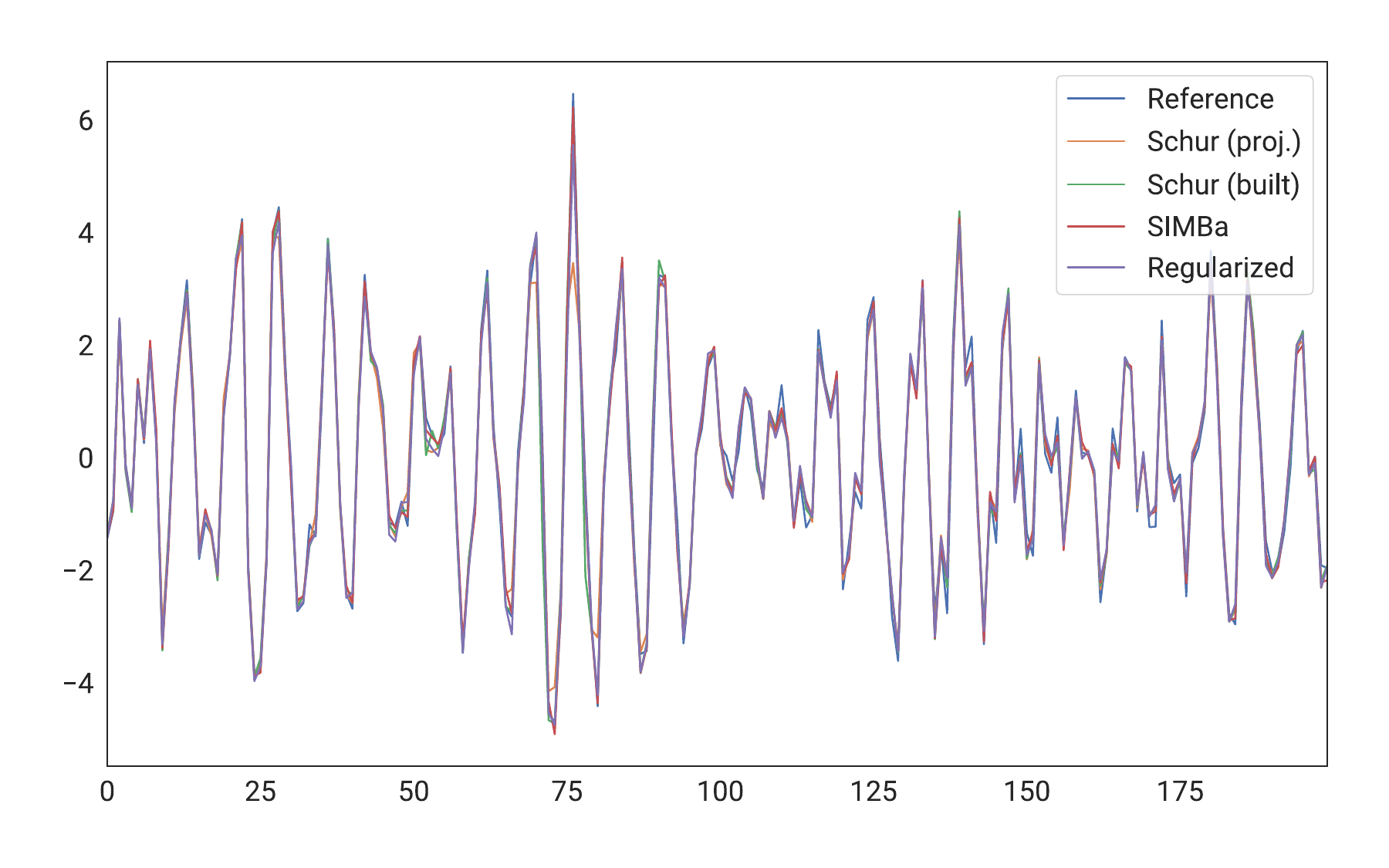}
        \caption{$y_2$}
    \end{subfigure}
    \\
    \begin{subfigure}[m]{0.75\textwidth}
        \includegraphics[width=\textwidth]{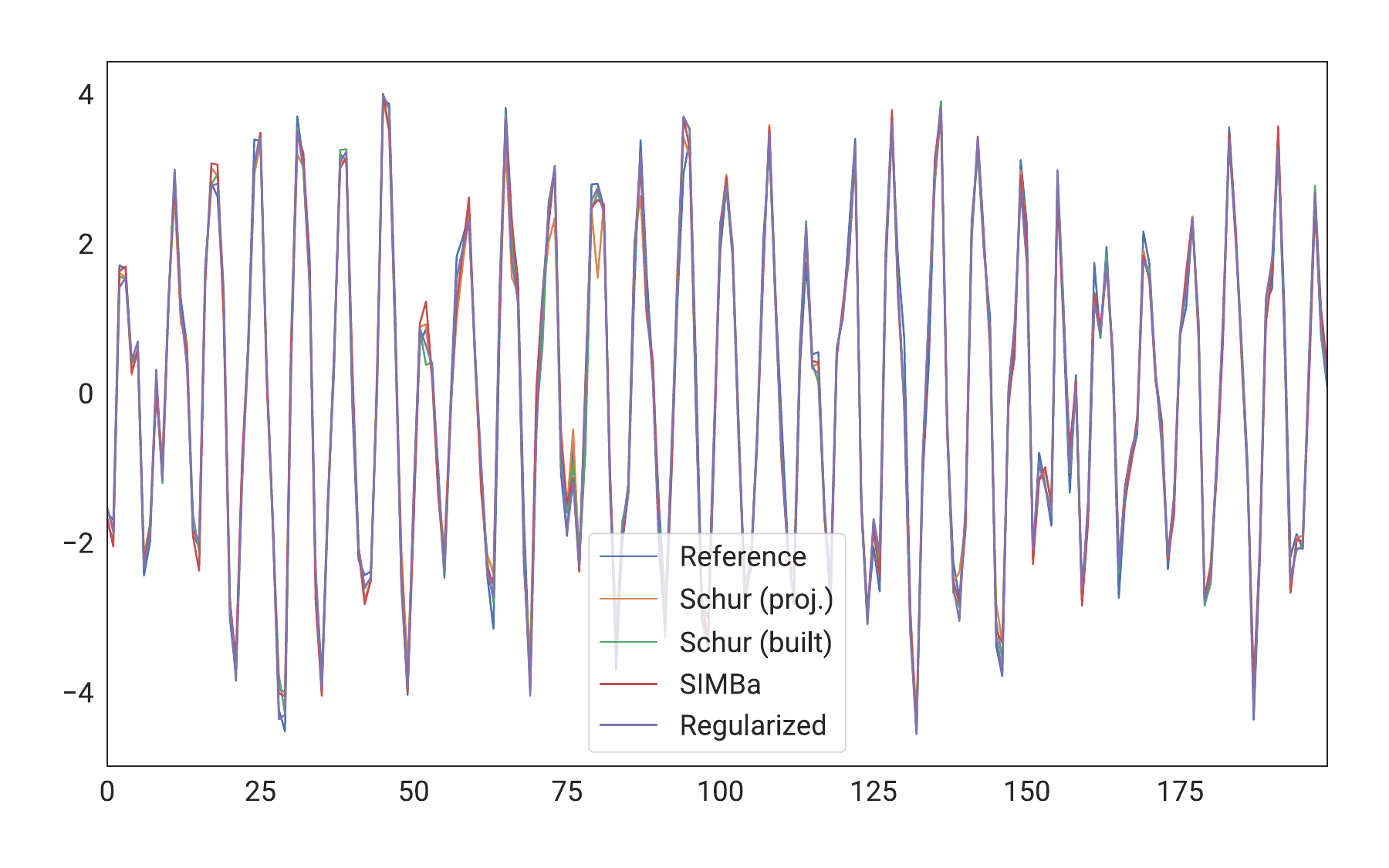}
        \caption{$y_3$}
    \end{subfigure}
    \caption{Fine-Steering Mirror test-partition output (first sequence)}
    \label{fig:fsr_output}
\end{figure}

The channel-wise error for the multiple-output datasets is reported in \Cref{tb:real_channel_error}. The validation \gls{nmse} history is reported only for the best-performing model; i.e., the one yielding the error metrics in \Cref{tb:real_benchmark,tb:real_channel_error}. Finally, all test-partition outputs are presented in \Cref{fig:silverbox_output,fig:ced_output,fig:emps_output,fig:industrial_robot_output,fig:fsr_output} (once again, only for the best-performing model).

The following analysis elements are highlighted:
\begin{itemize}
    \item Consistent with the numbers reported in \Cref{tb:real_benchmark}, the regularized method consistently exhibits slow convergence rates, with this trend only intensifying as $n_x$ grows larger. Conversely, the Schur (built) method always has the steepest (or at least second-steepest) convergence rate, which is consistent with the results obtained in \Cref{subsec:synthetic_exp}.
    \item While all other models manage to almost perfectly replicate the output signal of the EMPS dataset, the regularized method is incapable of capturing its dynamics, as illustrated by \Cref{fig:emps_output}. This dataset turned out to be more challenging than initially expected, as the input high-frequency components visible in \Cref{subfig:emps_visualization} led the models to over-sensitive local minima, obstructing accurate identification.
\end{itemize}


\end{document}